\documentclass{article}

\usepackage[accepted]{tmlr}

\usepackage{csquotes}
\usepackage{color}
\usepackage{colordvi}
\usepackage{amssymb}
\usepackage{colordvi}
\usepackage{float}
\usepackage{tabularx}

\usepackage{algorithm}
\usepackage{algpseudocode}

\usepackage{graphicx}
\usepackage{subcaption}
\usepackage{enumerate}
\usepackage{enumitem}
\usepackage{wrapfig}

\usepackage{bbm}
\usepackage{bigints}
\usepackage{soul}
\usepackage{pgffor}
\usepackage{tikz}
\usepackage[hide=false,setmargin=true,marginparwidth=1.in]{marginalia}
\usepackage[colorlinks]{hyperref} 

\usepackage{bbm}

\usepackage{xcolor}

\usepackage[capitalize]{cleveref}

\bibliographystyle{apalike}
\usepackage{xcolor}
\usepackage{wrapfig}
\usepackage{minitoc}

\usepackage{caption}
\usepackage{subcaption}


\newcommand{\citeps}[1]{(\cite{#1})} 

\newenvironment{proof}{\paragraph{Proof:}}{\hfill$\square$} 

\definecolor{niceblue}{rgb}{0.10, 0.14, 0.76}

\AtBeginDocument{%
\hypersetup{
    citecolor=niceblue,
    linkcolor=red,   
    urlcolor=niceblue,
    linktoc=none}
}


\newcommand{\dataset}{{\cal D}}
\newcommand{\wholeset}{\mathcal{C}}

\newcommand{\reals}{\mathbb{R}}

\newcommand{\palg}{\mathcal{A}}

\newcommand{\loss}{\mathcal{L}}

\newcommand{\param}{\mathcal{W}}
\newcommand{\dataspace}{\mathcal{D}}

\newcommand{\E}{\mathbb{E}}
\newcommand{\ind}{\mathbbm{1}}
\newcommand{\iid}{$iid$~}

\newcommand{\CBPA}{\texttt{SBPA}}

\newcommand{\random}{\texttt{Random}}
\newcommand{\deepfool}{\texttt{DeepFool}}
\newcommand{\grand}{\texttt{GraNd}}
\newcommand{\uncertainty}{\texttt{Uncertainty}}
\newcommand{\cifar}[1]{\texttt{CIFAR#1}}

\newcommand{\rev}[1]{{#1}}

\newtheorem{prop}{Proposition}
\newtheorem{thm}{Theorem}

\newtheorem{lemma}{Lemma}
\newtheorem{corollary}{Corollary}
\newtheorem{definition}{Definition}


\begin{document}

\title{Data pruning and neural scaling laws: fundamental limitations of score-based algorithms}


\author{%
  \name Fadhel Ayed\thanks{Equal contribution (Alphabetical order).}
  \email{fadhel.ayed@gmail.com}\\
  \addr Huawei Technologies France
  \AND
  \name Soufiane Hayou$^*$ \email{hayou@nus.edu.sg}\\
  \addr National University of Singapore
}


\maketitle

\begin{abstract}

Data pruning algorithms are commonly used to reduce the memory and computational cost of the optimization process. Recent empirical results \citeps{guo2022deepcore} reveal that random data pruning remains a strong baseline and outperforms most existing data pruning methods in the high compression regime, i.e. where a fraction of $30\%$ or less of the data is kept. This regime has recently attracted a lot of interest as a result of the role of data pruning in improving the so-called neural scaling laws; see \citeps{sorscher2022beyond}, where the authors showed the need for high-quality data pruning algorithms in order to beat the sample power law. In this work, we focus on score-based data pruning algorithms and show theoretically and empirically why such algorithms fail in the high compression regime. We demonstrate ``No Free Lunch" theorems for data pruning and discuss potential solutions to these limitations.
\end{abstract}

\section{Introduction}\label{sec:intro}
Coreset selection, also known as data pruning, refers to a collection of algorithms that aim to efficiently select a subset from a given dataset. The goal of data pruning is to identify a small yet representative sample of the data that accurately reflects the characteristics of the entire dataset. Coreset selection is often used in cases where the original dataset is too large or complex to be processed efficiently by the available computational resources. By selecting a coreset, practitioners can reduce the computational cost of their analyses and gain valuable insights more efficiently. Data pruning has many interesting applications, notably neural architecture search (NAS), where models trained with a small fraction of the data serve as a proxy to quickly estimate the performance of a given choice of hyper-parameters \citeps{coleman2019selection}. Another application is continual (or incremental) learning in the context of online learning; To avoid the forgetting problem, one keeps track of the most representative examples of past observations \citeps{aljundi2019gradient}. 

Coreset selection is typically performed once during training, and the selected coreset remains fixed until the end of training. This topic has been extensively studied in classical machine learning and statistics \citeps{welling2009herding,chen2012super,feldman2011scalable,huggins2016coresets,campbell2019automated}. Recently, many approaches have been proposed to adapt to the challenges of the deep learning context. Examples include removing the redundant examples from the feature space perspective (see \cite{sener2017active}), finding the hard examples, defined as the ones for which the model is the least confident (\cite{coleman2019selection}), or the ones that contribute the most to the error (\cite{toneva2018empirical}). 
We refer the reader to \cref{sec:lit_review} for a more comprehensive literature review. 
Most of these methods use a score function that ranks examples based on their ``importance". Given a desired compression level
$r \in (0,1)$ (the fraction of data kept after pruning), the coreset is created by retaining only the most important examples based on the scores to meet the required compression level. We refer to this type of algorithms as score-based pruning algorithms (\CBPA). A formal definition is provided in \cref{sec:setup}. 

\begin{figure}
\centering
\begin{subfigure}{0.23\textwidth}
    \includegraphics[width=\textwidth]{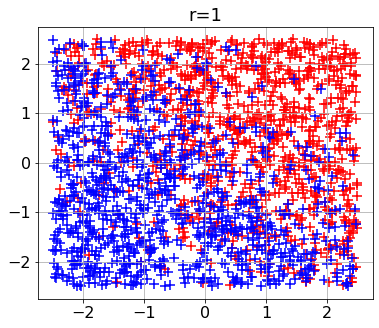}
\end{subfigure}
\begin{subfigure}{0.23\textwidth}
    \includegraphics[width=\textwidth]{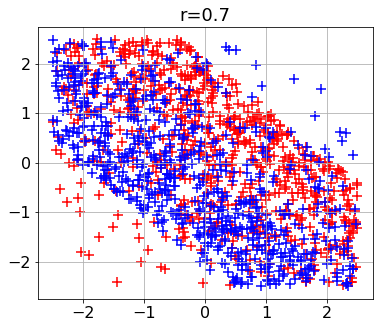}
\end{subfigure}
\begin{subfigure}{0.23\textwidth}
    \includegraphics[width=\textwidth]{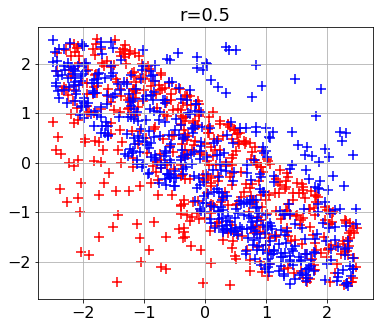}
\end{subfigure}
\begin{subfigure}{0.23\textwidth}
    \includegraphics[width=\textwidth]{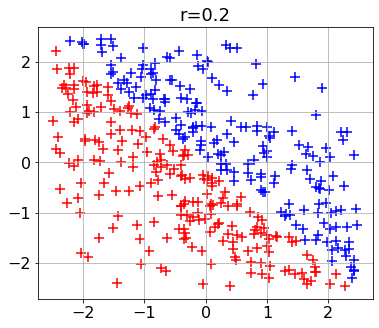}
\end{subfigure}
\caption{Logistic regression: Data distribution alteration due to pruning for different compression ratios. Here we use \grand\ as the pruning algorithm. Blue points correspond to $Y_i = 0$, red points correspond to $Y_i=1$. More details in \cref{sec:experiments}.}\label{fig:illustration_change}
\vspace{-1em}
\end{figure}

\subsection{Connection to Neural Scaling Laws}
Recently, a stream of empirical works have observed the emergence of power law scaling in different machine learning applications (see e.g. \cite{neuralsacling1, neuralscaling2, neuralscaling3, neuralscaling4, neuralscaling5, neuralscaling6}). More precisely, these empirical results show that the performance of the model (e.g. the test error) scales as a power law with either the model size, training dataset size, or compute (FLOPs). In \cite{sorscher2022beyond}, the authors showed that data pruning can improve the power law scaling of the dataset size. The high compression regime (small $r$) is of major interest in this case since it exhibits super-polynomial scaling laws on different tasks.
However, as the authors concluded, improving the power law scaling requires high-quality data pruning algorithms, and it is still unclear what properties such algorithms should satisfy. Besides scaling laws, small values of $r$ are of particular interest for tasks such as hyper-parameters selection, where the practitioner wants to select a hyper-parameter from a grid rapidly. In this case, the smaller the value of $r$, the better.  

In this work, we argue that score-based data pruning is generally not suited for the high compression regime (starting from $r\leq 30\%$) and, therefore, cannot be used to beat the power law scaling. In this regime, it has been observed (see e.g. \cite{guo2022deepcore}) that most \CBPA\ algorithms underperform random pruning (randomly selected subset). 
To understand why this occurs, we analyze the asymptotic behavior of \CBPA\ algorithms and identify some of their properties, particularly in the high compression level regime. To the best of our knowledge, no rigorous explanation for this phenomenon has been reported in the literature. Our work provides the first theoretical explanation for this behavior and offers insights on how to address it in practice. 

Intuitively, \CBPA\ algorithms induce a distribution shift that affects the training objective. This can, for example, lead to the emergence of new local minima where performance deteriorates significantly. To give a sense of this intuition, we use a toy example in \cref{fig:illustration_change} to illustrate the change in data distribution as the compression level $r$ decreases, where we have used \grand\ (\cite{datadiet2021}) to prune the dataset. 

We also report the change in the loss landscape in \cref{fig:logistic_regression_illustration} as the compression level decreases and the resulting scaling laws. The results show that such a pruning algorithm cannot be used to improve the scaling laws since the performance drops significantly in the high compression regime and does not tend to significantly decrease with sample size.

Motivated by these empirical observations, we aim to understand the behaviour of \CBPA\ algorithms in the high compression regime. In \cref{sec:no_free_lunch}, we analyze the impact of pruning of \CBPA\ algorithms on the loss function in detail and link this distribution shift to a notion of consistency.
We prove several results showing the limitations of \CBPA\ algorithms in the high compression regime, which explains some of the empirical results reported in \cref{fig:logistic_regression_illustration}. We also propose calibration protocols, that build on random exploration to address this deterioration in the high compression regime (\cref{fig:logistic_regression_illustration}).  

\begin{wrapfigure}{r}{0.5\textwidth}
\centering
\begin{subfigure}{0.22\textwidth}
    \includegraphics[width=\textwidth]{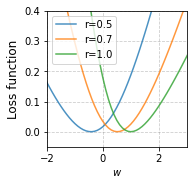}
\end{subfigure}
\begin{subfigure}{0.26\textwidth}
    \includegraphics[width=\textwidth]{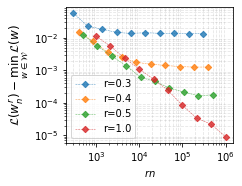}
\end{subfigure}
\begin{subfigure}{0.22\textwidth}
    \includegraphics[width=\textwidth]{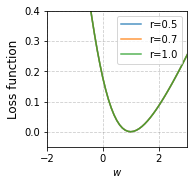}
\end{subfigure}
\begin{subfigure}{0.26\textwidth}
    \includegraphics[width=\textwidth]{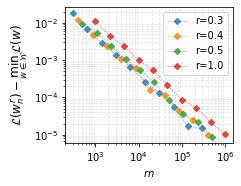}
\end{subfigure}
\caption{Logistic regression: \rev{$r$ is the compression level, $n$ the total number of available data and $w$ the learnable parameter.} (\textbf{Left}) The loss landscape transformation due to pruning. (\textbf{Right}) The evolution of the performance gap as the data budget $m:=r\times n$ increases (average over ten runs). Top figures illustrate the performance of \grand, bottom figures illustrate the performance of \grand \ calibrated with our exact protocol: we use $90\%$ of the data budget for the signal, i.e. points selected by \grand, and $10\%$ of the data budget for calibration through random exploration. See \cref{sec:correction,sec:experiments} for more details. }\label{fig:logistic_regression_illustration}
\vspace{-1.cm}
\end{wrapfigure}

\subsection{Contributions}

Our contributions are as follows:
\begin{itemize}
    \item We propose a novel formalism to characterize the asymptotic properties of data pruning algorithms in the abundant data regime.
    \item We introduce Score-Based Pruning Algorithms (\CBPA), a class of algorithms that encompasses a wide range of popular approaches. By employing our formalism, we analyze \CBPA\ algorithms and identify a phenomenon of distribution shift, which provably impacts generalization error.
    \item We demonstrate No-Free-Lunch results that characterize when and why score-based pruning algorithms perform worse than random pruning. Specifically, we prove that \CBPA\ are unsuitable for high compression scenarios due to a significant drop in performance. Consequently, \CBPA\ cannot improve scaling laws without appropriate adaptation.
    \item Leveraging our theoretical insights, solutions can be designed to address these limitations. As an illustration, we introduce a simple calibration protocol to correct the distribution shift by adding noise to the pruning process. Theoretical and empirical results support the effectiveness of this method on toy datasets and show promising results on image classification tasks.\footnote{It is important to note that the calibration protocol serves as an example to stimulate further research.  We do not claim that this method systematically allows to outperform random pruning nor to beat the neural scaling laws.}
\end{itemize}

\section{Learning with data pruning}\label{sec:setup}

\subsection{Setup}\label{subsec:setup}
Consider a supervised learning task where the inputs and outputs are respectively in $\mathcal{X} \subset \mathbb{R}^{d_x}$ and $\mathcal{Y} \subset \mathbb{R}^{d_y}$, both assumed to be compact\footnote{We further require that the set $\mathcal{X}$ has no isolated points. This technical assumption is required to avoid dealing with unnecessary complications in the proofs.}. We denote by $\mathcal{D}= \mathcal{X}\times \mathcal{Y}$ the data space. We assume that there exists $\mu$, an atomless probability distribution on $\mathcal{D}$ from which input/output pairs $Z=(X, Y)$ are drawn independently at random. We call such $\mu$ a \emph{data generating process}. We will assume that $X$ is continuous while $Y$ can be either continuous (regression) or discrete (classification). We are given a family of models 
\begin{equation}\label{def:models}
    \mathcal{M}_\theta = \{ y_{out}(\cdot; w) : \mathcal{X}\rightarrow \mathcal{Y} \ |\ w \in \mathcal{W}_\theta\},
\end{equation}
parameterised by the \emph{parameter space} $\param_\theta$, a compact subspace of $\mathbb{R}^{d_\theta}$, where $\theta \in \Theta$ is a fixed \emph{hyper-parameter}. For instance, $\mathcal{M}_\theta$ could be a family of neural networks of a given architecture, with weights $w$, and where the architecture is  given by $\theta$. We will assume that $y_{out}$ is continuous on $ \mathcal{X} \times \param_\theta$\footnote{This is generally satisfied for a large class of models, including neural networks.}. For a given continuous loss function $\ell: \mathcal{Y}\times\mathcal{Y} \rightarrow \mathbb{R}$, the aim of the learning procedure is to find a model that minimizes the generalization error, defined by 
\begin{equation}\label{def:generalization_error}
    \loss(w) \overset{def}{=} \mathbf{E}_{ \mu}\ \ell\big(y_{out}(X; w), Y\big).
\end{equation}

We are given a dataset $\dataset_n$ composed of $n \geq 1$ input/output pairs $(x_i, y_i)$, \iid sampled from the data generating process $\mu$. To obtain an approximate minimizer of the generalization error (\cref{def:generalization_error}), we perform an empirical risk minimization, solving the problem
\begin{equation}\label{eq:minimization_pb}
    \min_{w \in \param_\theta}\, \, \loss_n(w) \overset{def}{=} \frac{1}{n} \sum_{i=1}^n \ell\big(y_{out}(x_i; w), y_i\big).
\end{equation}

The minimization problem \eqref{eq:minimization_pb} is typically solved using a numerical approach, often gradient-based, such as Stochastic Gradient Descent \citeps{robbins1951stochastic}, Adam \citeps{kingma2014adam}, etc. We refer to this procedure as the \emph{training algorithm}. We assume that the training algorithm is exact, i.e. it will indeed return a minimizing parameter $w^*_n \in \textup{argmin}_{w \in \param_\theta}\, \, \loss_n(w).$
The numerical complexity of the training algorithms grows with the sample size $n$, typically linearly or worse. When $n$ is large, it is appealing to extract a representative \emph{subset} of $\dataset_n$ and perform the training with this subset, which would reduce the computational cost of training. This process is referred to as data pruning. However, in order to preserve the performance, the subset should retain essential information from the original (full) dataset. This is the primary objective of data pruning algorithms. We begin by formally defining such algorithms.

\rev{\paragraph{Notation.} If $Z$ is a finite set, we denote by $|Z|$ its cardinal number, i.e. the number of elements in $Z$. We denote $\lfloor x \rfloor$ the largest integer smaller than or equal to $x$ for $x \in \reals$. For some \rev{Euclidean} space $\mathcal{E}$, we denote by $d$ the \rev{Euclidean} distance and for some set $B \subset \mathcal{E}$ and $e \in \mathcal{E}$, we define the distance $d(e, B) = \inf_{b \in B} d(e,b).$ Finally, for two integers $n_1 < n_2$, $[n_1:n_2]$ refers to the set $\{n_1, n_1+1, \dots, n_2\}$. We denote the set of all finite subsets of $\mathcal{D}$ by $\wholeset$, i.e. $ \wholeset = \cup_{n \geq 1} \{\{z_1, z_2, \dots, z_n\}, z_1 \neq z_2 \neq \dots \neq z_n \in \mathcal{D}\}$. We call $\mathcal{C}$ the finite power set of $\mathcal{D}$.}

\begin{definition}[Data Pruning Algorithm]\label{def:general_def_palg}
We say that a function $\palg: \wholeset \times (0,1] \to \wholeset$ is a data pruning algorithm if for all $Z \in \wholeset, r\in(0,1]$, such that $r |Z|$ is an integer \footnote{We make this assumption to simplify the notations. One can take the integer part of $rn$ instead.}, we have the following
\begin{itemize}
    \item $\palg(Z, r) \subset Z$
    \item $|\palg(Z, r)| = r |Z|$
\end{itemize}
where $|.|$ refers to the cardinal number. The number $r$ is called the compression level and refers to the fraction of the data kept after pruning.
\end{definition}
Among the simplest pruning algorithms, we will pay special attention to \random~pruning, which selects uniformly at random a fraction of the elements of $Z$ to meet some desired compression level $r$.

\subsection{\emph{Valid} and \emph{Consistent} pruning algorithms}
Given a pruning algorithm $\palg$ and a compression level $r$, a subset of the training set is selected and the model is trained by minimizing the empirical loss on the subset. More precisely, the training algorithm finds a parameter $w_n^{\palg, r} \in \textup{argmin}_{w \in \param_\theta} \loss_n^{\palg, r}(w)$ where
$$
\loss_n^{\palg, r}(w) \overset{def}{=} \frac{1}{|\palg(\dataset_n, r)|} \sum_{(x, y) \in \palg(\dataset_n, r)} \ell\big(y_{out}(x; w), y\big). 
$$
This usually requires only a fraction $r$ of the original energy/time\footnote{Here the original cost refers to the training cost of the model with the full dataset.} cost or better, given the linear complexity of the training algorithm with respect to the data size. 
In this work, we evaluate the quality of a pruning algorithm by considering the performance gap it induces, i.e. the excess risk of the selected model
\begin{equation}\label{def:generalization_gap}
\textup{gap}^{\palg, r}_n = \loss (w_n^{\palg, r}) - \min_{w \in \param_\theta} \loss(w).
\end{equation}
In particular, we are interested in the abundant data regime: we aim to understand the asymptotic behavior of the performance gap as the sample size $n$ grows to infinity. We define the notion of \emph{valid} pruning algorithms as follows.

\begin{definition}[Valid pruning algorithm]\label{def:valid}
For a parameter space $\param_\theta$, a pruning algorithm $\palg$ is valid at a compression level $r \in (0,1]$  if  $\lim_{n \to \infty }\textup{gap}^{\palg, r}_n = 0$ almost surely. The algorithm is said to be valid if it is valid at any compression level $r \in (0,1]$.
\end{definition}

We argue that a valid data pruning algorithm for a given generating process $\mu$ and a family of models $\mathcal{M}_\theta$ should see its performance gap converge to zero almost surely. Otherwise, it would mean that with positive probability, the pruning algorithm induces a deterioration of the out-of-sample performance that does not vanish even when an arbitrarily large amount of data is available. This deterioration would not exist without pruning or if random pruning was used instead (\cref{corr:cons_sub_valid}). This means that with positive probability, a non-valid pruning algorithm will underperform random pruning in the abundant data regime. In the next result, we show that a sufficient and necessary condition for a pruning algorithm to be valid at compression level $r$ is that $w_n^{\palg, r}$ should approach the set of minimizers of the original generalization loss function as $n$ increases.

\begin{prop}[Characterization of valid pruning algorithms]\label{prop:caract_valid}
    A pruning algorithm $\palg$ is valid at a compression level $r\in (0,1]$ if and only if
    $$ d\Big(w_n^{\mathcal{A}, r},  \mathcal{W}^*_\theta(\mu) \Big) \rightarrow 0\ a.s.$$
    where 
    $\mathcal{W}^*_\theta(\mu) = \textup{argmin}_{w \in \param_\theta} \loss(w) \subset \mathcal{W}_\theta$ and $d\Big(w_n^{\mathcal{A}, r},  \mathcal{W}^*_\theta(\mu) \Big)$ denotes the euclidean distance from the point $w_n^{\mathcal{A}, r}$ to the set $\mathcal{W}^*_\theta(\mu)$.
\end{prop}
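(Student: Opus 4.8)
The plan is to reduce the claimed almost-sure equivalence to a purely deterministic statement about sequences in the compact parameter space $\param_\theta$, and then invoke continuity and compactness. First I would record that the generalization error $\loss$ is continuous on $\param_\theta$: since $\mathcal{X}\times\mathcal{Y}\times\param_\theta$ is compact and $(x,y,w)\mapsto \ell\big(y_{out}(x;w),y\big)$ is continuous, this map is bounded and uniformly continuous, so a dominated-convergence (or direct uniform-control) argument gives continuity of $w \mapsto \loss(w) = \E_\mu\,\ell\big(y_{out}(X;w),Y\big)$. Because $\param_\theta$ is compact, $\loss$ attains its infimum, so $\mathcal{W}^*_\theta(\mu) = \textup{argmin}_{w}\loss(w)$ is nonempty; being $\loss^{-1}(\{\loss^*\})$ for a continuous $\loss$ on a compact set, it is also closed and hence compact. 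Write $\loss^* = \min_{w}\loss(w)$, so that $\textup{gap}^{\palg, r}_n = \loss(w_n^{\palg, r}) - \loss^*$.

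The core is the deterministic lemma: for any sequence $(w_n)$ in $\param_\theta$,
$$
\loss(w_n) \to \loss^* \quad\Longleftrightarrow\quad d\big(w_n, \mathcal{W}^*_\theta(\mu)\big) \to 0.
$$
For the direction ``distance $\Rightarrow$ gap'', I would use that the compact set $\mathcal{W}^*_\theta(\mu)$ admits a nearest point: pick $w_n^* \in \mathcal{W}^*_\theta(\mu)$ with $\|w_n - w_n^*\| = d\big(w_n,\mathcal{W}^*_\theta(\mu)\big)$; uniform continuity of $\loss$ on the compact $\param_\theta$ then forces $\loss(w_n) - \loss(w_n^*) = \loss(w_n) - \loss^* \to 0$. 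For the converse ``gap $\Rightarrow$ distance'', I would argue by contradiction: if $d\big(w_n, \mathcal{W}^*_\theta(\mu)\big) \not\to 0$, extract a subsequence bounded away from $\mathcal{W}^*_\theta(\mu)$ by some $\varepsilon>0$, then by compactness of $\param_\theta$ a further subsequence converges to some $\bar w$ with $d\big(\bar w, \mathcal{W}^*_\theta(\mu)\big) \geq \varepsilon > 0$ (the distance function being continuous); continuity of $\loss$ gives $\loss(\bar w) = \loss^*$, i.e. $\bar w \in \mathcal{W}^*_\theta(\mu)$, contradicting $d\big(\bar w, \mathcal{W}^*_\theta(\mu)\big) \geq \varepsilon$.

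Finally I would transfer the lemma to the random setting. Applying it pointwise to the random sequence $w_n^{\palg, r}$ shows that the events $\{\loss(w_n^{\palg, r}) \to \loss^*\}$ and $\{d(w_n^{\palg, r}, \mathcal{W}^*_\theta(\mu)) \to 0\}$ coincide as subsets of the sample space; hence one has probability one precisely when the other does. By \cref{def:valid} the left event (probability one) is exactly validity at level $r$, which yields the stated characterization. I expect the main obstacle to be purely technical rather than conceptual, namely carefully justifying the continuity of $\loss$ and, above all, the nonemptiness and compactness of $\mathcal{W}^*_\theta(\mu)$, since the entire argument — nearest-point projection and extraction of convergent subsequences — rests on these. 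The probabilistic content is light: the equivalence holds realization by realization, so no concentration or law-of-large-numbers machinery enters here.
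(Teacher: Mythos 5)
Your proof is correct and takes essentially the same route as the paper: the paper's \cref{lemma:carac_dist_to_opt} is exactly your ``gap $\Rightarrow$ distance'' direction, proved by the same contradiction argument (compactness of $\param_\theta$, continuity of $\loss$ and of the distance function), and then applied realization by realization with $\pi=\mu$. If anything, your write-up is slightly more complete, since the paper's proof only spells out the ``validity $\Rightarrow$ distance'' implication and leaves the converse (your nearest-point projection plus uniform-continuity step) implicit.
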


With this characterization in mind, the following proposition provides a key tool to analyze the performance of pruning algorithms. Under some conditions, it allows us to describe the asymptotic performance of any pruning algorithm via some properties of a probability measure. 

\begin{prop}\label{prop:convergence_minima}
Let $\mathcal{A}$ be a pruning algorithm and $r\in (0, 1]$ a compression level. Assume that there exists a probability measure $\nu_r$ on $\dataset$ such that
\begin{equation}\label{eq:condition_general}
\forall w \in \mathcal{W}_\theta,\ \loss_n^{\mathcal{A}, r} (w) \rightarrow \mathbb{E}_{\nu_r} \ell(y_{out}(X; w), Y)\ a.s. 
\end{equation}
Then, denoting $\mathcal{W}^*_\theta(\nu_r) = \textup{argmin}_{w \in \param_\theta} \E_{\nu_r} \ell(y_{out}(X; w), Y) \subset \mathcal{W}_\theta$, we have that
    $$d\Big(w_n^{\mathcal{A}, r},  \mathcal{W}^*_\theta(\nu_r) \Big) \rightarrow 0\ a.s.$$
\end{prop}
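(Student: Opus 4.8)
The plan is to run the classical $M$-estimation consistency argument: first upgrade the assumed pointwise a.s.\ convergence to \emph{uniform} a.s.\ convergence over the compact parameter space, and then deduce convergence of the minimizers to the limiting argmin set by a standard $\varepsilon/3$ contradiction argument. Throughout, write $g(x,y,w) = \ell(y_{out}(x;w),y)$ and $L_r(w) = \E_{\nu_r} g(X,Y,w)$, so that the assumption reads $\loss_n^{\palg,r}(w)\to L_r(w)$ a.s.\ for each fixed $w$.

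First I would record the regularity that makes everything work. Since $\mathcal{X}$, $\mathcal{Y}$ and $\param_\theta$ are compact and $y_{out}$, $\ell$ are continuous, the map $(x,y,w)\mapsto g(x,y,w)$ is continuous on the compact set $\mathcal{X}\times\mathcal{Y}\times\param_\theta$, hence bounded and \emph{uniformly} continuous there. In particular there is a modulus of continuity $\omega$, independent of $(x,y)$, with $|g(x,y,w)-g(x,y,w')|\le \omega(|w-w'|)$ and $\omega(t)\to 0$ as $t\to 0$. Because each $\loss_n^{\palg,r}$ is an average of functions $w\mapsto g(x,y,w)$ and $L_r(w)=\E_{\nu_r}g(X,Y,w)$, both $\{\loss_n^{\palg,r}\}_n$ and $L_r$ inherit the \emph{same} modulus $\omega$; thus $\{\loss_n^{\palg,r}\}_n$ is equicontinuous on $\param_\theta$ and $L_r$ is continuous, so $L_r$ attains its minimum on the nonempty compact set $\mathcal{W}^*_\theta(\nu_r)$ (and each $\loss_n^{\palg,r}$ attains its minimum, so $w_n^{\palg,r}$ exists).

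Next comes the uniform convergence. Fix a countable dense subset $D\subset\param_\theta$. For each $w\in D$ the hypothesis gives a probability-one event on which $\loss_n^{\palg,r}(w)\to L_r(w)$; intersecting these countably many events yields a single probability-one event $\Omega_0$ on which convergence holds simultaneously for every $w\in D$. On $\Omega_0$, equicontinuity upgrades this to uniform convergence: given $\varepsilon>0$, pick $\delta$ with $\omega(\delta)<\varepsilon/3$, cover the compact $\param_\theta$ by finitely many $\delta$-balls centered at points $c\in D$, and for arbitrary $w$ bound $|\loss_n^{\palg,r}(w)-L_r(w)|\le |\loss_n^{\palg,r}(w)-\loss_n^{\palg,r}(c)| + |\loss_n^{\palg,r}(c)-L_r(c)| + |L_r(c)-L_r(w)|$, where the first and third terms are at most $\omega(\delta)$ by the shared modulus and the middle term is handled at the finitely many centers; this gives $\sup_{w\in\param_\theta}|\loss_n^{\palg,r}(w)-L_r(w)|\to 0$ on $\Omega_0$.

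Finally I would prove convergence of the minimizers on $\Omega_0$. Suppose $d(w_n^{\palg,r},\mathcal{W}^*_\theta(\nu_r))\not\to 0$; then along some subsequence $d(w_{n_k}^{\palg,r},\mathcal{W}^*_\theta(\nu_r))\ge\eta>0$, and by compactness a further subsequence converges to some $\bar w$ with $\bar w\notin\mathcal{W}^*_\theta(\nu_r)$, so $L_r(\bar w)>\min L_r=:L_r^*$. Fixing $w^*\in\mathcal{W}^*_\theta(\nu_r)$ and decomposing
$$
L_r(w_{n_k}^{\palg,r})-L_r^* = \big(L_r-\loss_{n_k}^{\palg,r}\big)(w_{n_k}^{\palg,r}) + \big(\loss_{n_k}^{\palg,r}(w_{n_k}^{\palg,r})-\loss_{n_k}^{\palg,r}(w^*)\big) + \big(\loss_{n_k}^{\palg,r}-L_r\big)(w^*),
$$
the first and third terms vanish by uniform convergence while the middle term is $\le 0$ because $w_{n_k}^{\palg,r}$ minimizes $\loss_{n_k}^{\palg,r}$; hence $\limsup_k\big(L_r(w_{n_k}^{\palg,r})-L_r^*\big)\le 0$, contradicting $L_r(w_{n_k}^{\palg,r})\to L_r(\bar w)>L_r^*$ by continuity of $L_r$. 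Since $\Omega_0$ has probability one, the claim follows. I expect the passage from pointwise to uniform convergence to be the main obstacle; it is precisely where compactness of $\mathcal{X},\mathcal{Y},\param_\theta$ and continuity of $y_{out},\ell$ are essential, through the common modulus $\omega$ that makes the empirical losses equicontinuous uniformly in $n$. Once uniform convergence is secured, the argmin-consistency step is routine.
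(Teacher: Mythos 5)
Your proof is correct, and it shares the paper's overall skeleton: a three-term decomposition whose middle term is $\le 0$ by minimality of $w_n^{\palg,r}$, followed by a compactness-and-continuity contradiction argument to pass from convergence of loss values to convergence of minimizers (the paper isolates that last step as \cref{lemma:carac_dist_to_opt}). The genuine difference is in how the troublesome term --- the one evaluated at the moving point $w_n^{\palg,r}$ --- is controlled. You first upgrade the assumed pointwise a.s.\ convergence to \emph{uniform} a.s.\ convergence over $\param_\theta$, by intersecting the probability-one events over a countable dense subset and then exploiting the common modulus of continuity (equicontinuity of the empirical losses); once the sup-norm convergence is in hand, the argmin step is routine. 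The paper never establishes uniform convergence: it extracts from any subsequence a further subsequence along which $w_{n_k}^{\palg,r}\to w_\infty$, swaps $w_{n_k}^{\palg,r}$ for $w_\infty$ at a cost of $2\epsilon$ using uniform continuity of $f$ in $w$ uniformly in $z$, and then invokes hypothesis \eqref{eq:condition_general} at $w_\infty$. Your route buys something real here: since \eqref{eq:condition_general} is a ``for each fixed $w$, a.s.''\ statement, the exceptional null set may depend on $w$, and the paper's application of it at the sample-path-dependent point $w_\infty$ implicitly needs exactly the countable-dense-set patch you make explicit. So your version is tighter on this measure-theoretic point and yields a reusable uniform-convergence statement, at the price of some extra bookkeeping; the paper's version is leaner on the page but leaves that subtlety to the reader.
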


Condition \cref{eq:condition_general} assumes the existence of a limiting probability measure $\nu_r$ that represents the distribution of the pruned dataset in the limit of infinite sample size. In \cref{sec:no_free_lunch}, for a large family of pruning algorithms called score-based pruning algorithms (a formal definition will be introduced later), we will demonstrate the existence of such limiting probability measure and derive its exact expression. 

Let us now derive two important corollaries; the first gives a sufficient condition for an algorithm to be valid, and the second a necessary condition. From \cref{prop:caract_valid} and \cref{prop:convergence_minima}, we can deduce that a sufficient condition for an algorithm to be valid is that $\nu_r=\mu$ satisfies equation \eqref{eq:condition_general}. We say that such a pruning algorithm is \emph{consistent}.
  
\begin{definition}[Consistent Pruning Algorithms]\label{def:consistent_alg}
We say that a pruning algorithm $\palg$ is consistent at compression level $r \in (0,1]$ if and only if it satisfies 
\begin{equation}
    \forall w\in \param_\theta, \, \, \loss_n^{\palg, r}(w) \rightarrow \E_\mu[\ell(y_{out}(x,w), y)]=\loss(w)\ a.s.
\end{equation}
We say that $\palg$ is consistent if it is consistent at any compression level $r \in (0, 1].$
\end{definition}

\begin{corollary}\label{corr:cons_sub_valid}
    A consistent pruning algorithm $\palg$ at a compression level $r\in (0, 1]$ is also valid at compression level $r$.
\end{corollary}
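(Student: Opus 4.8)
The plan is to show that consistency is exactly the hypothesis of \cref{prop:convergence_minima} specialised to the choice $\nu_r = \mu$, and then to read off the conclusion through the characterisation of validity provided by \cref{prop:caract_valid}. In other words, the two heavy-lifting ingredients are already in hand, and the corollary is obtained by chaining them together via the identification $\nu_r = \mu$.

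First I would unfold \cref{def:consistent_alg}: consistency at compression level $r$ means that for every $w \in \param_\theta$ we have $\loss_n^{\palg, r}(w) \rightarrow \E_\mu[\ell(y_{out}(x,w), y)]$ almost surely, and by the very definition of the generalization error in \eqref{def:generalization_error} the right-hand side is $\loss(w)$. The key observation is that this is precisely condition \eqref{eq:condition_general} of \cref{prop:convergence_minima} with the limiting measure taken to be $\nu_r = \mu$, since $\E_{\mu}[\ell(y_{out}(X;w),Y)] = \loss(w)$. The quantifier structure matches as well: both statements assert pointwise-in-$w$ almost sure convergence (``for all $w$, \dots\ a.s.''), so no additional work on the ordering of the null sets is required, that subtlety already being absorbed into the proof of \cref{prop:convergence_minima}.

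I would then apply \cref{prop:convergence_minima} with $\nu_r = \mu$ to obtain $d\big(w_n^{\palg, r}, \mathcal{W}^*_\theta(\mu)\big) \rightarrow 0$ almost surely, using that $\mathcal{W}^*_\theta(\nu_r) = \mathcal{W}^*_\theta(\mu)$ because minimising $\E_{\nu_r}[\ell(\cdot)] = \E_\mu[\ell(\cdot)] = \loss$ over $\param_\theta$ returns exactly the set $\mathcal{W}^*_\theta(\mu)$ appearing in \cref{prop:caract_valid}. Finally, \cref{prop:caract_valid} states that validity of $\palg$ at compression level $r$ is equivalent to this very convergence, so the corollary follows immediately. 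There is no genuine obstacle here: the entire content of the argument is the recognition that a consistent algorithm makes $\nu_r = \mu$ a valid choice in \eqref{eq:condition_general}, and the only point meriting a line of justification is the equality $\mathcal{W}^*_\theta(\mu) = \mathcal{W}^*_\theta(\nu_r)$ for that choice.
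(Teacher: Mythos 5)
Your proof is correct and follows exactly the paper's intended route: the paper treats \cref{corr:cons_sub_valid} as ``a straightforward application of \cref{prop:convergence_minima}'', i.e.\ plugging $\nu_r=\mu$ into condition \eqref{eq:condition_general} and then invoking the characterization of validity in \cref{prop:caract_valid}. Your identification $\mathcal{W}^*_\theta(\nu_r)=\mathcal{W}^*_\theta(\mu)$ and the remark on the quantifier structure are exactly the (trivial) glue the paper leaves implicit.
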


A simple application of the law of large numbers implies that \random~pruning is consistent and hence valid for any generating process and learning task satisfying our general assumptions. 

We bring to the reader's attention that consistency is itself a property of practical interest. Indeed, it not only ensures that the generalization gap of the learned model vanishes, but it also allows the practitioner to accurately estimate the generalization error of their trained model from the selected subset. For instance, consider the case where the practitioner is interested in $K$ hyper-parameter values $\theta_1, ..., \theta_K$; these can be different neural network architectures (depth, width, etc.). Using a pruning algorithm $\palg$, they obtain a trained model $w_n^{\mathcal{A}, r}(\theta_k)$ for each hyper-parameter $\theta_k$, with corresponding estimated generalization error $\loss_n^{\mathcal{A}, r}\Big( w_n^{\mathcal{A}, r}(\theta_k) \Big).$ Hence, the consistency property would allow the practitioner to select the best hyper-parameter value based on the empirical loss computed with the set of retained points (or a random subset of which used for validation).  
\smallskip
From \cref{prop:caract_valid} and \cref{prop:convergence_minima}, we can also deduce a necessary condition for an algorithm satisfying \eqref{eq:condition_general} to be valid:
\begin{corollary}\label{corr:necessary_cond}
    Let $\palg$ be any pruning algorithm and $r \in (0, 1]$, and assume that \eqref{eq:condition_general} holds for a given probability measure $\nu_r$ on $\dataset$.
    If $\palg$ is valid, then $\mathcal{W}^*_\theta(\nu_r) \cap \mathcal{W}^*_\theta(\mu) \not = \emptyset$; or, equivalently,
    $$ \min_{w\in \mathcal{W}^*_\theta(\nu_r)} \loss(w) = \min_{w \in \param} \loss(w).$$
\end{corollary}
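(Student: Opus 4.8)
The plan is to combine Propositions \ref{prop:caract_valid} and \ref{prop:convergence_minima} to show that the trained parameter $w_n^{\palg, r}$ simultaneously approaches both argmin sets $\mathcal{W}^*_\theta(\mu)$ and $\mathcal{W}^*_\theta(\nu_r)$, and then to use compactness of $\param_\theta$ together with closedness of these two sets to extract a single limit point that lies in their intersection.

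First I would record two preliminary facts. Both $\mathcal{W}^*_\theta(\mu)$ and $\mathcal{W}^*_\theta(\nu_r)$ are nonempty and closed: the maps $w \mapsto \loss(w)$ and $w \mapsto \E_{\nu_r}\ell(y_{out}(X;w),Y)$ are continuous on the compact set $\param_\theta$ (continuity of $\ell$ and of $y_{out}$, plus dominated convergence), so each attains its minimum, and each argmin set is the preimage of that minimum value under a continuous map, hence closed. Next, assumption \eqref{eq:condition_general} lets me invoke Proposition \ref{prop:convergence_minima} to get $d\big(w_n^{\palg, r}, \mathcal{W}^*_\theta(\nu_r)\big) \to 0$ almost surely, while validity at level $r$ combined with Proposition \ref{prop:caract_valid} gives $d\big(w_n^{\palg, r}, \mathcal{W}^*_\theta(\mu)\big) \to 0$ almost surely.

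The core argument is a subsequence extraction. Since each convergence holds on an event of probability one, their intersection also has probability one, so I may fix a single sample path on which both distances tend to zero. As $(w_n^{\palg, r})_n$ takes values in the compact set $\param_\theta$, there is a subsequence $w_{n_k}^{\palg, r} \to w^\star \in \param_\theta$. Passing to the limit in $d\big(w_{n_k}^{\palg, r}, \mathcal{W}^*_\theta(\mu)\big) \to 0$ and using that $\mathcal{W}^*_\theta(\mu)$ is closed yields $w^\star \in \mathcal{W}^*_\theta(\mu)$; the identical reasoning applied to $\mathcal{W}^*_\theta(\nu_r)$ yields $w^\star \in \mathcal{W}^*_\theta(\nu_r)$. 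Hence $w^\star \in \mathcal{W}^*_\theta(\nu_r) \cap \mathcal{W}^*_\theta(\mu)$, so this intersection is nonempty.

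Finally I would verify the stated equivalence. Since $\mathcal{W}^*_\theta(\nu_r) \subset \param_\theta$, one always has $\min_{w \in \mathcal{W}^*_\theta(\nu_r)}\loss(w) \geq \min_{w\in\param}\loss(w)$; the nonempty intersection supplies a point $w^\star$ that both minimizes $\loss$ over $\param_\theta$ and lies in $\mathcal{W}^*_\theta(\nu_r)$, forcing the reverse inequality and hence equality. Conversely, if the equality holds, a minimizer of $\loss$ over the compact set $\mathcal{W}^*_\theta(\nu_r)$ realizes the global minimum and therefore lies in $\mathcal{W}^*_\theta(\mu)\cap\mathcal{W}^*_\theta(\nu_r)$. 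The step I would treat most carefully is the measure-theoretic bookkeeping: both convergences are only almost sure, so I must intersect the two probability-one events \emph{before} extracting the subsequence, and I must rely on closedness of the argmin sets (rather than on uniqueness of minimizers, which is not assumed) to guarantee that the subsequential limit genuinely belongs to each set.
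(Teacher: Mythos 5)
Your proof is correct, but it reaches the conclusion by a genuinely different mechanism than the paper's. You invoke both \cref{prop:caract_valid} and \cref{prop:convergence_minima} to obtain the two almost-sure distance convergences $d\big(w_n^{\palg,r},\mathcal{W}^*_\theta(\mu)\big)\to 0$ and $d\big(w_n^{\palg,r},\mathcal{W}^*_\theta(\nu_r)\big)\to 0$, intersect the two probability-one events, and then use compactness of $\param_\theta$ together with closedness of the two argmin sets to extract a subsequential limit $w^\star$ lying in both sets; nonemptiness of the intersection is proved first, and the equality of minima is deduced from it. The paper never invokes \cref{prop:caract_valid}: it uses only \cref{prop:convergence_minima}, converts the single distance convergence into a statement about loss values via uniform continuity of $\loss$ on the compact $\param_\theta$ (for $n$ large, $\loss(w_n^{\palg,r}) \geq \min_{w\in\mathcal{W}^*_\theta(\nu_r)}\loss(w) - \epsilon$), and then uses the definition of validity directly ($\loss(w_n^{\palg,r}) \to \min_{w\in\param_\theta}\loss(w)$ a.s.) to conclude $\min_{w\in\param_\theta}\loss(w) \geq \min_{w\in\mathcal{W}^*_\theta(\nu_r)}\loss(w) - \epsilon$ for every $\epsilon$; this gives the equality of minima first (the reverse inequality being trivial), with nonemptiness of the intersection left implicit. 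Your route buys a more geometric conclusion --- it actually exhibits a common minimizer --- and makes explicit two points the paper glosses over: the intersection of the almost-sure events before the subsequence extraction, and the equivalence between the nonempty-intersection and min-equality formulations (which requires that $\mathcal{W}^*_\theta(\nu_r)$, as a closed subset of a compact set, is compact, so the inner minimum is attained). The paper's route is marginally leaner in prerequisites: it needs neither \cref{prop:caract_valid} nor closedness of $\mathcal{W}^*_\theta(\mu)$, working only with scalar loss values.
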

\cref{corr:necessary_cond} will be a key ingredient in the proofs on the non-validity of a given pruning algorithm. Specifically, for all the non-validity results stated in this paper, we prove that $\mathcal{W}^*_\theta(\nu_r) \cap \mathcal{W}^*_\theta(\mu) = \emptyset$. In other  words, none of the minimizers of the original problem is a minimizer of the pruned one, and vice-versa.

\section{Score-Based Pruning Algorithms and their limitations}\label{sec:no_free_lunch}

\subsection{Score-based Pruning algorithms}

A standard approach to define a pruning algorithm is to assign to each sample $z_i=(x_i, y_i)$ a score $g_i=g(z_i)$ according to some \emph{score function} $g$, where $g$ is a mapping from $\dataspace$ to $\reals$. $g$ is also called the pruning criterion. The score function $g$ captures the practitioner's prior knowledge of the relative importance of each sample. This function can be defined using a teacher model that has already been trained, for example. In this work, we use the convention that the lower the score, the more relevant the example. One could of course adopt the opposite convention by considering $-g$ instead of $g$ in the following. We now formally define this category of pruning algorithms, which we call score-based pruning algorithms.

\begin{definition}[Score-based Pruning Algorithm (\texttt{SBPA})]\label{def:palg}
Let $\palg$ be a data pruning algorithm. We say that $\palg$ is a score-based pruning algorithm (\texttt{SBPA}) if there exists a function $g: \dataspace \to \reals$ such that for all $Z \in \wholeset, \, r\in(0,1)$, we have that $\palg(Z, r) = \{ z \in Z, \textup{s.t. }  g(z) \leq g^{ r|Z| }\},$ where $g^{ r|Z| }$ is $( r|Z|)^{th}$ order statistic of the sequence $(g(z))_{z \in Z}$ (first order statistic being the smallest value). The function $g$ is called the score function.
\end{definition}

A significant number of existing data pruning algorithms are score-based (for example \cite{datadiet2021,Coleman2020Uncertainty,ducoffe2018adversarial,sorscher2022beyond}), among which the recent approaches for modern machine learning. One of the key benefits of these methods is that the scores are computed  independently; these methods are hence parallelizable, and their complexity scales linearly with the data size (up to log terms). These methods are tailored for the abundant data regime, which explains their recent gain in popularity.

Naturally, the result of such a procedure highly depends on the choice of the score function $g$, and different choices of $g$ might yield completely different subsets. The choice of the score function in \cref{def:palg} is not restricted, and there are many scenarios in which the selection of the score function $g$ may be problematic. For example, if $g$ has discontinuity points, this can lead to instability in the pruning procedure, as close data points may have very different scores. Another problematic scenario is when $g$ assigns the same score to a large number of data points. To avoid such unnecessary complications, we define \emph{adapted} pruning criteria as follows:

\begin{definition}[Adapted score function]\label{def:adapted}
Let $g$ be a score function corresponding to some pruning algorithm $\palg$. We say that $g$ is an adapted score function if $g$ is continuous and for any $c \in g(\dataset):=\{g(z), z \in \dataspace\}$, we have  $\lambda(g^{-1}(\{ c\})) = 0$, where $\lambda$ is the Lebesgue measure on $\dataspace$.
\end{definition}
In the rest of the section, we will examine the properties of \CBPA\ algorithms with an adapted score function.

\subsection{Asymptotic behavior of \CBPA}
Asymptotically, \CBPA\ algorithms have a simple behavior that mimics rejection algorithms. We describe this in the following result.

\begin{prop}[Asymptotic behavior of \texttt{SBPA}]\label{prop:asymp_behavior}
Let $\palg$ be a \CBPA\ algorithm and let $g$ be its corresponding adapted score function. Consider a compression level $r \in (0,1)$. Denote by $q^r$ the $r^{th}$ quantile of the random variable $g(Z)$ where $Z \sim \mu$. Denote $A_r = \{ z \in \dataset\ |\ g(z) \leq q^r\}$. Almost surely, the empirical measure of the retained data samples converges weakly to $\nu_r = \frac{1}{r} \mu_{|A_r}$,
where $\mu_{|A_r}$ is the restriction of $\mu$ to the set $A_r$. In particular, we have that
$$
\forall w \in \mathcal{W}_\theta,\ \loss_n^{\mathcal{A}, r} (w) \rightarrow \mathbb{E}_{\nu_r} \ell(y_{out}(X; w), Y)\ a.s. 
$$
\end{prop}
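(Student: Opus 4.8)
The plan is to reduce everything to the convergence of the empirical $(rn)$-th order statistic of the scores to the population quantile $q^r$, followed by a law-of-large-numbers argument with a controlled boundary term. Write $F(t) = \mu(g(Z) \le t)$ for the CDF of the score under $\mu$, and let
$$
\hat\nu_n = \frac{1}{rn}\sum_{i=1}^n \ind\{g(z_i) \le g^{rn}\}\,\delta_{z_i}
$$
be the empirical measure of the retained points, where $g^{rn}$ is the $(rn)$-th order statistic. First I would record that the adaptedness of $g$ forces the law of $g(Z)$ to be non-atomic, so that $F$ is continuous and $q^r$ is a genuine $r$-quantile with $F(q^r)=r$. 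Combined with the Glivenko--Cantelli theorem (the empirical CDF of the scores converges uniformly to $F$ almost surely), this yields the classical convergence of the sample quantile $g^{rn} \to q^r$ almost surely.

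Second, to prove weak convergence $\hat\nu_n \Rightarrow \nu_r$ it suffices to show, for every bounded continuous $\phi:\dataset\to\reals$, that $\int \phi\, d\hat\nu_n \to \frac{1}{r}\int_{A_r}\phi\, d\mu$ almost surely; since $\dataset$ is compact, $C(\dataset)$ is separable and it is enough to verify this on a fixed countable dense family, intersecting the corresponding almost-sure events to obtain a single good event. For fixed $\phi$, I would compare $\hat\nu_n$ with the oracle threshold $q^r$: by the SLLN, $\frac{1}{rn}\sum_i \phi(z_i)\ind\{g(z_i)\le q^r\} \to \frac{1}{r}\E_\mu[\phi(Z)\ind\{g(Z)\le q^r\}] = \frac{1}{r}\int_{A_r}\phi\,d\mu$, which is the desired limit. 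It then remains to bound the discrepancy between the empirical and oracle thresholds,
$$
\Big|\int\phi\,d\hat\nu_n - \tfrac{1}{rn}\sum_i\phi(z_i)\ind\{g(z_i)\le q^r\}\Big| \le \frac{\|\phi\|_\infty}{rn}\,\#\big\{i: g(z_i)\in[q^r-\varepsilon,\, q^r+\varepsilon]\big\},
$$
which is valid on the event $\{|g^{rn}-q^r|\le\varepsilon\}$, an event that holds eventually by the first step. By the SLLN the right-hand side converges to $\frac{\|\phi\|_\infty}{r}\big(F(q^r+\varepsilon)-F(q^r-\varepsilon)\big)$, and letting $\varepsilon\to 0$ and invoking the continuity of $F$ at $q^r$ makes this boundary term vanish, closing the weak-convergence claim.

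The loss convergence is then immediate: for each fixed $w$, the map $(x,y)\mapsto \ell(y_{out}(x;w),y)$ is continuous and bounded on the compact space $\dataset$, so weak convergence of $\hat\nu_n$ to $\nu_r$ gives $\loss_n^{\mathcal{A},r}(w) = \int \ell(y_{out}(x;w),y)\,d\hat\nu_n \to \E_{\nu_r}\ell(y_{out}(X;w),Y)$ on the single almost-sure event produced in the previous step, and hence simultaneously for all $w$.

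I expect the main obstacle to be the boundary term in the second step: the whole argument rests on showing that the $\mu$-mass of scores in a shrinking window around $q^r$ is negligible, which is precisely where the continuity of $F$---guaranteed by the adaptedness hypothesis---is indispensable. Care is also needed to handle the almost-sure quantifiers correctly, producing one fixed good event on which convergence holds for a countable dense family of test functions (and therefore for the loss at every $w$), rather than a $w$- or $\phi$-dependent null set.
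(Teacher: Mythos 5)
Your proposal follows the same two-part skeleton as the paper's proof: (a) control the discrepancy caused by replacing the empirical threshold $g^{rn}$ by the population quantile $q^r$, and (b) run a law-of-large-numbers argument at the fixed oracle threshold $q^r$. Your treatment of part (b) is in fact more elementary than the paper's: since the variables $\phi(z_i)\ind\{g(z_i)\le q^r\}$ are \iid and bounded, the ordinary SLLN applies directly to each test function, and separability of $C(\dataset)$ upgrades this to a single almost-sure event on which weak convergence (and hence the loss convergence for every $w$ simultaneously) holds. The paper instead invokes the almost-sure weak convergence of the full empirical measure (Varadarajan's Glivenko--Cantelli theorem for metric spaces) and then needs Urysohn's lemma to sandwich the discontinuous indicator $\ind\{g \le q^r\}$ between continuous functions; your route avoids that topological machinery entirely. (Your reading of adaptedness as forcing the law of $g(Z)$ to be non-atomic is the same implicit step the paper takes; strictly it requires the data distribution to be dominated by Lebesgue measure.)

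The genuine gap is in part (a). You assert that continuity of $F$ plus Glivenko--Cantelli yields the ``classical'' convergence $g^{rn}\to q^r$ almost surely. That classical result requires $q^r$ to be the \emph{unique} root of $F(t)=r$, i.e. $F(q^r+\delta)>r$ for every $\delta>0$, and adaptedness does not guarantee this: the score law can have a gap in its support at level exactly $r$. For instance, take $\mu$ supported on two well-separated blobs of mass $r$ and $1-r$, with $g$ continuous, below $0$ on the first blob and above $1$ on the second; then $F$ is flat at height $r$ on $[0,1]$, the number of samples with score below any $t$ in the gap is Binomial$(n,r)$, and the $(rn)$-th order statistic jumps across the gap infinitely often, so $g^{rn}\not\to q^r$. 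Your window bound is only valid on the event $\{|g^{rn}-q^r|\le\varepsilon\}$, which in this case fails infinitely often, so the discrepancy term is never controlled. Note that the proposition itself remains true here (the gap carries no $\mu$-mass, so the two thresholds retain the same points up to an asymptotically negligible fraction), and the paper's proof is structured precisely so as not to need quantile consistency: it bounds the number of samples lying between the two thresholds by
$$ n\,\left| G_n(g^{rn})-G_n(q^r)\right| \;=\; n\,\left| r-G_n(q^r)\right| \;=\; n\,\left| G(q^r)-G_n(q^r)\right| \;\le\; n\,\sup_{t\in\reals}\left| G_n(t)-G(t)\right|, $$
which is $o(n)$ almost surely by the one-dimensional Glivenko--Cantelli theorem alone, with no reference to where $g^{rn}$ sits. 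Substituting this comparison-of-empirical-CDFs bound for your quantile-consistency step repairs your argument; everything else in your proposal then goes through unchanged.
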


The result of \cref{prop:asymp_behavior} implies that in the abundant data regime, a \CBPA\ algorithm $\mathcal{A}$ acts similarly to a deterministic rejection algorithm, where the samples are retained if they fall in $A_r$, and removed otherwise. 
The first consequence is that a \CBPA\ algorithm $\palg$ is consistent at compression level $r$ if and only if
\begin{equation}\label{eq:consistency_cbpa}
    \forall w \in \mathcal{W}_\theta,\  \mathbb{E}_{\frac{1}{r}\mu_{|A_r}} \ell(y_{out}(X; w), Y) = \mathbb{E}_{\mu} \ell(y_{out}(X; w), Y),
\end{equation}
The second consequence is that \CBPA\ algorithms ignore entire regions of the data space, even when we have access to unlimited data, i.e. $n\rightarrow \infty$. Moreover, the ignored region can be made arbitrarily large for small enough compression levels. Therefore, we expect that the generalization performance will be affected and that the drop in performance will be amplified with smaller compression levels, regardless of the sample size $n$. This hypothesis is empirically validated (see \cite{guo2022deepcore} and \cref{sec:experiments}). 

In the rest of the section, we investigate the fundamental limitations of \CBPA\ in terms of consistency and validity;  we will show that under mild assumptions, for any \CBPA\ algorithm with an adapted score function, there exist compression levels $r$ for which the algorithm is neither consistent nor valid. Due to the prevalence of classification problems in modern machine learning, we focus on the binary classification setting and give specialized results in \cref{sec:binary_classification}. In \cref{sec:gen_problem}, we provide a different type of non-validity results for more general problems.

\subsection{Binary classification problems}\label{sec:binary_classification}

In this section, we focus our attention on binary classification problems. The predictions and labels are in $\mathcal{Y}=[0,1]$. Denote $\mathcal{P}_B$ the set of probability distributions on $\mathcal{X} \times \{ 0, 1\}$, such that the marginal distribution on the input space $\mathcal{X}$ is continuous (absolutely continuous with respect to the Lebesgue measure on $\mathcal{X}$) and for which 
$$p_\pi: x\mapsto \mathbb{P}_{\pi}(Y=1 | X=x)$$
is  upper semi-continuous for any $\pi \in \mathcal{P}_B$. We further assume that:
\begin{enumerate}[label=(\roman*)]
    \item the loss is non-negative and that $\ell(y, y')=0$ if and only if $y=y'$.
    \item For $q\in [0,1]$, $y\mapsto q \ell(y, 1) + (1-q)\ell(y, 0)$ has a unique minimizer, denoted $y^*_q\in [0,1]$, that is increasing with $q$. 
\end{enumerate} 
These two assumptions are generally satisfied in practice for the usual loss functions, such as the $\ell_1$, $\ell_2$, Exponential or Cross-Entropy losses, with the notable exception of the Hinge loss for which (ii) does not hold.

Under mild conditions that are generally satisfied in practice, we show that no \CBPA\ algorithm is consistent.  We first define a notion of universal approximation.

\begin{definition}[Universal approximation]\label{def:dense_family}
    A family of continuous functions $\Psi$ has the universal approximation property if for any continuous function $f: \mathcal{X}\rightarrow \mathcal{Y}$ and $\epsilon >0$, there exists $\psi \in \Psi$ such that
    $$\textup{max}_{x\in \mathcal{X}} |f(x)-\psi(x)| \leq \epsilon$$
\end{definition}
The next proposition shows that if the set of all models considered $\cup_{\theta\in\Theta} \mathcal{M}_\theta$ has the universal approximation property, then no \CBPA\ algorithm is consistent.

\begin{thm}\label{prop:dense_family_consistent}
Consider any generating process for binary classification $\mu \in \mathcal{P}_B$. Let $\mathcal{A}$ be any \CBPA\ algorithm with an adapted score function. If $\cup_\theta \mathcal{M}_\theta$ has the universal approximation property and the loss satisfies assumption (i), then there exist hyper-parameters $\theta \in \Theta$ for which the algorithm is not consistent.
\end{thm}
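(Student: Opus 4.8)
The plan is to argue by contradiction: suppose that $\palg$ is consistent for \emph{every} hyper-parameter $\theta\in\Theta$, and derive a contradiction with the universal approximation property (so that at least one $\theta$ must fail). Fix any compression level $r\in(0,1)$. By \cref{def:consistent_alg} together with \cref{prop:asymp_behavior}, consistency at level $r$ for the family $\mathcal{M}_\theta$ is exactly the identity $\E_{\nu_r}\ell(y_{out}(X;w),Y)=\E_{\mu}\ell(y_{out}(X;w),Y)$ for all $w\in\param_\theta$, where $\nu_r=\frac1r\mu_{|A_r}$ and $A_r=\{z:g(z)\le q^r\}$. Since $\ell$ is continuous on the compact set $\mathcal{Y}\times\mathcal{Y}$ it is uniformly continuous, so both functionals $f\mapsto\E_\mu\ell(f(X),Y)$ and $f\mapsto\E_{\nu_r}\ell(f(X),Y)$ are continuous on $C(\mathcal{X},\mathcal{Y})$ for the sup norm. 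Taking the union over $\theta$ and invoking the universal approximation property of $\cup_\theta\mathcal{M}_\theta$, the identity extends from the dense family to every continuous $f:\mathcal{X}\to\mathcal{Y}$:
\[
\E_{\nu_r}\ell(f(X),Y)=\E_{\mu}\ell(f(X),Y),\qquad \forall f\in C(\mathcal{X},\mathcal{Y}).
\]

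Next I would rewrite the difference of the two risks as a single integral against the marginal $\mu_X$ of $\mu$ on $\mathcal{X}$. Writing $\alpha(x)=\ind[g(x,1)\le q^r]$ and $\beta(x)=\ind[g(x,0)\le q^r]$ for the acceptance indicators of the two labels, and $p(x)=\Pb_\mu(Y=1\mid X=x)$, the identity becomes $\int_{\mathcal{X}}G(x,f(x))\,d\mu_X(x)=0$ for all continuous $f$, where
\[
G(x,t)=c_1(x)\,\ell(t,1)+c_0(x)\,\ell(t,0),\quad c_1(x)=p(x)\Big(\frac{\alpha(x)}{r}-1\Big),\ c_0(x)=(1-p(x))\Big(\frac{\beta(x)}{r}-1\Big).
\]
The goal is to turn this global vanishing into pointwise information. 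Because $\mu_X$ is atomless and absolutely continuous w.r.t.\ Lebesgue measure, I would approximate the indicator of an arbitrary Borel set $S$ by continuous functions (via Lusin's theorem and inner/outer regularity) that equal a target value $t^\star$ on $S$ and a baseline $t_0$ off $S$, using that $G$ is bounded to control the vanishing transition region. Testing the identity against such functions, and against constants (which give $\int_{\mathcal{X}}G(x,t)\,d\mu_X=0$ for every $t$), yields $\int_S[G(x,t^\star)-G(x,t_0)]\,d\mu_X=0$ for every Borel $S$ and all $t^\star,t_0$, hence that $t\mapsto G(x,t)$ is $\mu_X$-a.e.\ constant.

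I would then close the argument using assumption (i). Since $\ell(1,1)=\ell(0,0)=0$ while $\ell(1,0),\ell(0,1)>0$, evaluating the constant map $t\mapsto G(x,t)$ at $t\in\{0,1\}$ and one interior value, and using that $\{1,\ell(\cdot,1),\ell(\cdot,0)\}$ is linearly independent for the standard losses, forces $c_1(x)=c_0(x)=0$ for $\mu_X$-a.e.\ $x$. For $r\in(0,1)$ the factors $\frac{\alpha(x)}{r}-1$ and $\frac{\beta(x)}{r}-1$ never vanish (they equal $\frac1r-1>0$ or $-1$), so $c_1=c_0=0$ a.e.\ would force $p(x)=0$ and $1-p(x)=0$ simultaneously, which is impossible. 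This contradiction shows $\palg$ cannot be consistent for all $\theta$, establishing the existence of a hyper-parameter for which it is not consistent.

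\textbf{Main obstacle.} The delicate part is the localization step: the acceptance indicators $\alpha,\beta$ are discontinuous and $p$ is only upper semi-continuous, so $G(\cdot,t)$ is not continuous in $x$ and the integrand cannot simply be ``read off''. One must work at Lebesgue density points of the relevant $L^1(\mu_X)$ functions and bound the measure of the transition layers so that they do not contaminate the limit. A secondary point to handle with care is the final linear-algebra step for degenerate losses such as the $\ell_1$ loss, where $\{1,\ell(\cdot,1),\ell(\cdot,0)\}$ is linearly dependent (and assumption (ii) fails): there, constancy of $G(x,\cdot)$ only yields $c_0(x)\ell(1,0)=c_1(x)\ell(0,1)$, and one additionally needs $\mu$ to be non-degenerate (i.e.\ $p\not\equiv\frac12$) to conclude.
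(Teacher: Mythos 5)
Your proposal takes a genuinely different route from the paper's and, modulo the caveat you flag yourself (discussed below), it is correct. The paper argues via a single explicit witness: it reduces to the case where some $x_0$ has both labels in $\textup{supp}(\mu)$ and $g(x_0,0)\neq g(x_0,1)$ (delegating the remaining case to \cref{prop:dense_family_consistent_nolabel}), picks $r_0$ so that the quantile $q^{r_0}$ separates $g(\cdot,1-y_0)$ from $g(\cdot,y_0)$ on a small ball around $x_0$, builds via Urysohn's lemma and universal approximation (\cref{lemma:approx_through_loss}, \cref{lemma:pointwise_convergence}) a model predicting $y_0$ inside the ball and $1-y_0$ outside, and shows by direct upper/lower bounds that the consistency identity fails for that one function. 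You instead extend the identity $\E_{\nu_r}\ell(f(X),Y)=\E_{\mu}\ell(f(X),Y)$ from the dense model class to all continuous, then all measurable predictors, recast it as $\int_{\mathcal{X}}G(x,f(x))\,d\mu_X=0$, localize over Borel sets to conclude that $t\mapsto G(x,t)$ is $\mu_X$-a.e.\ constant, and finish with pointwise algebra on $c_1,c_0$. Your localization step is sound and requires none of the delicacy you worry about: the paper's own \cref{lemma:approx_of_measurable} (Lusin plus continuous extension) combined with dominated convergence (all quantities are bounded, and $\mu_X$ is absolutely continuous with respect to Lebesgue) does exactly this, so density points and transition layers never enter. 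What your route buys is a global, sharper conclusion — it pins down pointwise what consistency would force on $p$, $\alpha$, $\beta$, with no case split on label dependence. What the paper's route buys is indifference to the fine structure of the loss: its witness computation uses only $\ell(y_0,1-y_0)>0$ and $\ell(1-y_0,y_0)>0$, i.e.\ assumption (i) verbatim.

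The one genuine gap is the final algebraic step, exactly where you place it: constancy of $G(x,\cdot)$ forces $c_1(x)=c_0(x)=0$ only when $\{1,\ell(\cdot,1),\ell(\cdot,0)\}$ is linearly independent, and assumption (i) does not grant this. You should know, however, that this is a defect of the theorem statement rather than of your argument: in the degenerate case the theorem is false, so no proof can close the gap. Concretely, take $\ell(y,y')=|y-y'|$, $p\equiv\tfrac12$, and any adapted label-independent score $g(x,y)=\tilde g(x)$. Then every measurable $f:\mathcal{X}\to[0,1]$ has conditional risk $\tfrac12|f(x)-1|+\tfrac12|f(x)|=\tfrac12$, hence $\E_{\mu}\ell(f(X),Y)=\E_{\nu_r}\ell(f(X),Y)=\tfrac12$ for every $r$, and by \cref{prop:asymp_behavior} the algorithm is consistent at every compression level and every $\theta$. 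The paper's proof breaks at the same point: in its "otherwise'' branch it invokes \cref{prop:dense_family_consistent_nolabel} without verifying that proposition's hypothesis that two continuous functions with distinct risks exist — which is precisely what fails here. Your proposed patch is the right one and can be made sharp: for losses with $\ell(\cdot,1)+\lambda\,\ell(\cdot,0)$ constant for some $\lambda>0$, constancy of $G(x,\cdot)$ yields $c_0=\lambda c_1$ a.e., which is already impossible on $\{\alpha\neq\beta\}$; so your argument still concludes whenever the score depends on the labels on a set of positive $\mu_X$-measure, or $p$ is not a.e.\ equal to the critical constant $1/(1+\lambda)$. For losses such as the squared or exponential loss, the linear independence holds and your proof is complete as written.
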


Even though consistency is an important property, a pruning algorithm can still be valid without being consistent. In this classification setting, we can further show that \CBPA\ algorithms also have strong limitations in terms of validity.  

\begin{thm}\label{prop:dense_family_valid}
Consider any generating process for binary classification $\mu \in \mathcal{P}_B$. Let $\mathcal{A}$ be a \CBPA\ with an adapted score function $g$ that depends on the labels\footnote{The score function $g$ depends on the labels if there exists an input $x$ in the support of the distribution of the input $X$ and for which $g(x, 0) \not = g(x, 1)$ and $\mathbb{P}(Y=1\ |\ X=x) \in (0, 1)$ (both labels can happen at input $x$)}. If $\cup_\theta \mathcal{M}_\theta$ has the universal approximation property and the loss satisfies assumptions (i) and (ii), then there exist hyper-parameters $\theta_1, \theta_2 \in \Theta$ and $r_0 \in (0, 1)$ such that the algorithm is not valid for $r \leq r_0$ for any hyper-parameter $\theta$ such that $\param_{\theta_1}\cup\param_{\theta_2} \subset \param_\theta$.
\end{thm}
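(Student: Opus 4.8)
The plan is to prove non-validity through the necessary condition of \cref{corr:necessary_cond}. Since \cref{prop:asymp_behavior} guarantees that \eqref{eq:condition_general} holds with the explicit limiting measure $\nu_r = \frac{1}{r}\mu_{|A_r}$, where $A_r = \{z : g(z) \le q^r\}$, it suffices to exhibit, for every $r \le r_0$ and every rich enough $\theta$, a failure of the intersection condition, i.e. to show $\mathcal{W}^*_\theta(\nu_r) \cap \mathcal{W}^*_\theta(\mu) = \emptyset$. I would actually prove the slightly more convenient statement that \emph{no} minimizer of $\loss$ over $\param_\theta$ is simultaneously a minimizer of the $\nu_r$-risk, which already gives disjointness.

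First I would set up the Bayes picture. Writing $p_\mu(x) = \Pb_\mu(Y=1\mid X=x)$ and $p_{\nu_r}(x) = \Pb_{\nu_r}(Y=1\mid X=x)$, assumption (ii) says the pointwise problem $y \mapsto q\,\ell(y,1) + (1-q)\ell(y,0)$ has a unique, strictly increasing minimizer $y^*_q$; hence the $\mu$- and $\nu_r$-risks are pointwise minimized by the Bayes predictors $x\mapsto y^*_{p_\mu(x)}$ and $x\mapsto y^*_{p_{\nu_r}(x)}$. The universal approximation property (\cref{def:dense_family}) lets me pick $\theta_1$ so that $\mathcal{M}_{\theta_1}$ contains a predictor whose $\mu$-risk is within any prescribed slack of the $\mu$-Bayes risk; since $\param_{\theta_1}\subset\param_\theta$ forces $\mathcal{M}_{\theta_1}\subset\mathcal{M}_\theta$, the minimum of $\loss$ over $\param_\theta$ is close to that Bayes risk, and a quantitative uniqueness argument (approximate minimizers of an integral functional whose integrand has a unique pointwise minimizer must be close to it in $\mu_X$-probability) shows that every $w^*\in\mathcal{W}^*_\theta(\mu)$ satisfies $y_{out}(\cdot;w^*)\approx y^*_{p_\mu(\cdot)}$ off a set of small measure.

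The heart is the label-induced distribution shift. Because $g$ depends on the labels there is a support point $x_0$ with, say, $g(x_0,1)<g(x_0,0)$ and $p_\mu(x_0)\in(0,1)$; continuity of the adapted score $g$ together with $x_0$ lying in the support of the continuous input marginal yields a positive-measure region of inputs on which $g(\cdot,1)<g(\cdot,0)$. For thresholds $q^r$ lying strictly between these two scores, exactly the label-$1$ samples survive pruning, so on the corresponding retained region $D_r$ one has $p_{\nu_r}\equiv 1$ while $p_\mu\in(0,1)$; strict monotonicity of $y^*$ then gives $y^*_{p_{\nu_r}(x)} = y^*_1 \ne y^*_{p_\mu(x)}$ on $D_r$, and the uniqueness in (ii) makes the value $y^*_{p_\mu(x)}$ \emph{strictly} worse than $y^*_1$ for the $\nu_r$-integrand there. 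Using $\theta_2$ (again via universal approximation) to supply a competitor that predicts $y^*_1$ on $D_r$ while essentially matching a $\mu$-minimizer elsewhere, I can strictly decrease the $\nu_r$-risk of any $w^*\in\mathcal{W}^*_\theta(\mu)$, whose $D_r$-predictions are pinned near $y^*_{p_\mu}$ by the previous paragraph. Hence $w^*\notin\mathcal{W}^*_\theta(\nu_r)$, the two minimizer sets are disjoint, and \cref{corr:necessary_cond} yields non-validity at level $r$.

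The main obstacle is to make all of this hold \emph{uniformly} for every $r\le r_0$ with a single threshold $r_0$ and fixed $\theta_1,\theta_2$, rather than at one isolated compression level. The tension is that as $r\to 0$ the quantile $q^r$ descends to the essential infimum of $g(Z)$, so a witness region anchored at a fixed $x_0$ (whose score interval $[\,g(\cdot,1),g(\cdot,0))$ sits above that infimum) stops contributing once $q^r$ drops below it. The delicate step is therefore to guarantee that the biased region $D_r=\{x:\,\min_y g(x,y)\le q^r<\max_y g(x,y)\}$ retains positive measure for the whole range of quantiles generated by $r\le r_0$; this requires locating label-dependence of $g$ arbitrarily close to the bottom of the score distribution and controlling how $q^r$ moves via continuity of the quantile function of $g(Z)$, and then choosing $r_0$ and the competitor from $\theta_2$ so that the strict $\nu_r$-gain is bounded below by a positive constant over that range. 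A secondary technical point is that $p_\mu$ is only upper semi-continuous, so the Bayes predictor need not be continuous and $\{p_\mu\in(0,1)\}$ need not be open; I would absorb this by passing to slightly shrunken closed subregions of full measure before invoking the universal-approximation estimates.
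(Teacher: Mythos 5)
Your argument is essentially the paper's own proof. The paper's key step, \cref{lemma:disjoint_inf}, is exactly your ``label-induced distribution shift'' paragraph: a witness point $x_0$ in the support of the input marginal with $g(x_0,0)\neq g(x_0,1)$ and $p_\mu(x_0)\in(0,1)$, a compression level whose quantile $q^r$ lies strictly between the two scores so that one label is wholly rejected on a small ball around $x_0$, and a quantitative use of assumption (ii) (via \cref{lemma:bound_dist_to_minimizer}) to conclude that no continuous function can be simultaneously $\varepsilon$-near-optimal for $\mu$ and for $\nu_r$. The theorem then follows, as in your plan, by invoking universal approximation twice (through \cref{lemma:psi_star,lemma:approx_of_measurable,lemma:approx_through_loss}) to plant a near-$\mu$-optimal model in $\mathcal{M}_{\theta_1}$ and a near-$\nu_r$-optimal model in $\mathcal{M}_{\theta_2}$, and concluding via \cref{prop:caract_valid} and \cref{corr:necessary_cond} that $\mathcal{W}^*_\theta(\nu_r)\cap\mathcal{W}^*_\theta(\mu)=\emptyset$. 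The only cosmetic difference is that the paper phrases the pinning step as mutual exclusion of $\varepsilon$-optimality over the class $\mathcal{F}$ of all continuous $[0,1]$-valued functions and then transfers it to the model class, whereas you pin exact minimizers of the two risks directly; the content is the same, and your handling of the semi-continuity of $p_\mu$ matches the paper's use of Lusin's theorem and continuous extension.

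Regarding the obstacle you flag: you are right that this construction does not deliver a single $r_0$ and fixed $\theta_1,\theta_2$ valid for \emph{all} $r\le r_0$, since the witness ball only produces a shift while $q^r$ lies in the interval between $g(x_0,0)$ and $g(x_0,1)$, and nothing in the hypotheses places label-dependence of $g$ near the bottom of the score distribution. But you should know the paper does not close this gap either: the appendix restates \cref{prop:dense_family_valid} \emph{without} the clause ``for $r\le r_0$'' and its proof produces a single compression level $r$ (equivalently, an interval of levels, with $\theta_2$ chosen to approximate $\psi^*_{\nu_r}$ and hence depending on $r$). So your proposal reproduces what the paper actually proves; the uniform-in-$r$ claim printed in the main-text statement is beyond both your argument and the paper's, and your diagnosis of why is accurate.
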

This theorem sheds light on a strong limitation of \CBPA\ algorithms for which the score function depends on the labels: it states that any solution of the pruned program will induce a generalization error strictly larger than with random pruning in the abundant data regime. The proof builds on \cref{corr:necessary_cond}; we show that for such hyper-parameters $\theta$, the minimizers of the pruned problem and the ones of the original (full data) problem do not intersect, i.e.
$$\mathcal{W}^*_\theta(\nu_r) \cap \mathcal{W}^*_\theta(\mu) = \emptyset.$$ 
\CBPA\ algorithms usually depend on the labels (\cite{datadiet2021,Coleman2020Uncertainty,ducoffe2018adversarial}) and \cref{prop:dense_family_valid} applies. In \cite{sorscher2022beyond}, the authors also propose to use a \CBPA\ that does not depend on the labels. For such algorithms, the acceptance region $A_r$ is characterized by a corresponding input acceptance region $\mathcal{X}_r$. \CBPA\ independent of the labels have a key benefit; the conditional distribution of the output is not altered given that the input is in $\mathcal{X}_r$. Contrary to the algorithms depending on the labels, the performance will not necessarily be degraded for any generating distribution given that the family of models is rich enough. It remains that the pruned data give no information outside of $\mathcal{X}_r$, and $y_{out}$ can take any value in $\mathcal{X} \setminus \mathcal{X}_r$ without impacting the pruned loss. Hence, these algorithms can create new local/global minima with poor generalization performance. Besides, the non-consistency results of this section and the No-Free-Lunch result presented in \cref{{sec:gen_problem}} do apply for \CBPA\ independent of the labels. For these reasons, we believe that calibration methods (see \cref{sec:correction}) should also be employed for \CBPA\ independent of the labels, especially with small compression ratios.

\subsection*{Applications: neural networks}

To exemplify the utility of \cref{prop:dense_family_consistent} and \cref{prop:dense_family_valid}, we leverage the existing literature on the universal approximation properties of neural networks to derive the important corollaries stated below
\begin{definition}
    For an activation function $\sigma$, a real number $R > 0$, and integers $H, K \geq 1$, we denote by $FFNN^\sigma_{H,K}(R)$ the set of fully-connected feed-forward neural networks with $H$ hidden layers, each with $K$ neurons with all weights and biases in $[-R, R]$.
\end{definition}

\begin{corollary}[Wide neural networks]\label{corr:wide_nn}
    Let $\sigma$ be any continuous non-polynomial function that is continuously differentiable at (at least) one point, with a nonzero derivative at that point.
    Consider any generating process $\mu \in \mathcal{P}_B$. For any \CBPA\ with adapted score function and $H\geq 1$, there exists a radius $R_0$ and a width $K_0$ such that the algorithm is not consistent on $FFNN^\sigma_{H,K}(R)$ for any $K \geq K_0$ and $R \geq R_0$. Besides, if the score function depends on the labels, then it is also not valid on $FFNN^\sigma_{H,K}(R)$ for any $K \geq K'_0$ and  $R \geq R'_0$.
\end{corollary}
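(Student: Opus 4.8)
The plan is to apply the two limitation theorems of this section (\cref{prop:dense_family_consistent} for non-consistency and \cref{prop:dense_family_valid} for non-validity) to the feed-forward family, encoding the hyper-parameter as the pair $\theta=(K,R)$ with the depth $H$ fixed, so that $\mathcal{M}_{(K,R)}=FFNN^\sigma_{H,K}(R)$ and $\Theta=\{(K,R):K\geq 1,\ R>0\}$. The first step is to verify the hypothesis common to both theorems, namely that $\cup_\theta \mathcal{M}_\theta=\cup_{K,R}FFNN^\sigma_{H,K}(R)$ has the universal approximation property of \cref{def:dense_family}. This is exactly where \cite{HORNIK1991} enters: for a continuous non-polynomial activation that is continuously differentiable with nonzero derivative at some point, the multilayer feed-forward class with unbounded width and weights is dense, for the uniform norm, in the continuous functions on the compact set $\mathcal{X}$. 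Since density of the union means every target is approximated to any accuracy $\epsilon$ by \emph{some} network, and that network has finite width and a finite weight bound, the union over $(K,R)$ is indeed universal, so the hypothesis holds. Throughout I take the loss to satisfy the section's assumptions (assumption (i) for the consistency part, assumptions (i)--(ii) for the validity part), as the two theorems require.

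For the non-consistency claim, applying \cref{prop:dense_family_consistent} directly produces a single hyper-parameter $\theta_0=(K_0,R_0)$ at which the algorithm fails to be consistent. To upgrade this to ``for all $K\geq K_0$ and $R\geq R_0$'', I would use that non-consistency is monotone under inclusion of function classes. Indeed, by \cref{prop:asymp_behavior} the limiting measure $\nu_r=\frac{1}{r}\mu_{|A_r}$ depends only on $\mu$ and the score function $g$, not on the model family, and non-consistency at level $r$ means there is a realizable model $f$ with $\E_{\nu_r}\ell(f(X),Y)\neq \E_\mu\ell(f(X),Y)$. The key observation is the nesting $FFNN^\sigma_{H,K_0}(R_0)\subseteq FFNN^\sigma_{H,K}(R)$ as classes of functions whenever $K\geq K_0$ and $R\geq R_0$: one embeds the smaller network into the larger one by padding each layer with the additional neurons given zero outgoing weights, so they do not affect the output, while every original weight still lies in the enlarged box $[-R,R]$. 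Hence the witness $f$ survives in each larger family, and non-consistency propagates to all $K\geq K_0,\ R\geq R_0$.

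For the non-validity claim (assuming $g$ depends on the labels), the argument is more direct, because \cref{prop:dense_family_valid} is already stated in a ``for all large $\theta$'' form: it yields $\theta_1,\theta_2\in\Theta$ and a threshold $r_0$ such that non-validity holds for $r\leq r_0$ for every $\theta$ with $\param_{\theta_1}\cup\param_{\theta_2}\subset\param_\theta$. Since these $\theta_1,\theta_2$ are produced by universal approximation, they correspond to finite architectures $(K_1,R_1),(K_2,R_2)$; setting $K'_0=\max(K_1,K_2)$ and $R'_0=\max(R_1,R_2)$, the same padding embedding shows that both $\mathcal{M}_{\theta_1}$ and $\mathcal{M}_{\theta_2}$ are contained in $FFNN^\sigma_{H,K}(R)$ as soon as $K\geq K'_0$ and $R\geq R'_0$. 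The containment hypothesis is then met and the algorithm is not valid for $r\leq r_0$, which is the desired conclusion.

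The step I expect to be the main obstacle is reconciling the bounded-weight, fixed-depth, finite-width nature of each $FFNN^\sigma_{H,K}(R)$ with a universal approximation statement that is inherently asymptotic in width and weight magnitude. The resolution is to apply universal approximation to the \emph{union} over $(K,R)$ and then carry the conclusion back to individual finite architectures through the monotone embedding; consequently the explicit nesting of function classes via inert (zero-output) neurons is the technical linchpin of both parts, and it is worth stating carefully since it is what converts the existential conclusions of the theorems into the uniform ``for all $K\geq K_0,\ R\geq R_0$'' statement of the corollary.
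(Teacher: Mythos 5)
Your proposal is correct and follows essentially the same route the paper intends: the paper gives no separate proof of this corollary, deriving it directly from \cref{prop:dense_family_consistent} and \cref{prop:dense_family_valid} together with the universal approximation property of wide networks from \cite{HORNIK1991}, exactly as you do. Your explicit zero-padding embedding argument, which upgrades the existential conclusions of the theorems to the uniform ``for all $K\geq K_0$, $R\geq R_0$'' form (and reads the containment hypothesis of \cref{prop:dense_family_valid} at the level of function classes, which is what its proof actually uses), is precisely the detail the paper leaves implicit.
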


\begin{corollary}[Deep neural networks]\label{corr:deep_nn}
    Consider a width $K \geq d_x + 2$.  Let $\sigma$ be any continuous non-polynomial function that is continuously differentiable at (at least) one point, with a nonzero derivative at that point. Consider any generating process $\mu \in \mathcal{P}_B$. For any \CBPA\ with an adapted score function, there exists a radius $R_0$ and a number of hidden layers $H_0$ such that the algorithm is not consistent on $FFNN^\sigma_{H,K}(R)$ for any $H \geq H_0$ and $R \geq R_0$. Besides, if the score function depends on the labels, then it is also not valid on $FFNN^\sigma_{H,K}(R)$ for any $H \geq H'_0$ and $R \geq R'_0$
\end{corollary}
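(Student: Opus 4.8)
The plan is to obtain this corollary as a direct specialization of \cref{prop:dense_family_consistent} and \cref{prop:dense_family_valid}, taking the hyper-parameter to be the pair $\theta = (H, R)$ with the width $K \geq d_x + 2$ held fixed and $\mathcal{M}_\theta = FFNN^\sigma_{H,K}(R)$. Once the hypotheses of those two theorems are verified for the family $\cup_{H, R} FFNN^\sigma_{H,K}(R)$, the non-consistency and (when $g$ depends on the labels) non-validity conclusions follow immediately; the only real work is (a) establishing the universal approximation property and (b) upgrading the existential conclusion of the theorems to the monotone ``for all $H \geq H_0$ and $R \geq R_0$'' statement.

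For step (a), I would invoke the deep--narrow universal approximation theorem of \cite{kidger2020universal}: for any width $K \geq d_x + 2$ and any $\sigma$ that is non-polynomial and continuously differentiable at some point with nonzero derivative there, feed-forward networks of width $K$ and sufficiently large depth approximate any continuous $f : \mathcal{X} \to \mathcal{Y}$ uniformly on the compact set $\mathcal{X}$. To match \cref{def:dense_family}, the weight bound $R$ is handled by noting that any single approximating network has only finitely many weights and biases, hence all bounded in absolute value by some finite $R_f$; thus the approximant lies in $FFNN^\sigma_{H,K}(R)$ for $H$ and $R$ large enough, and $\cup_{H,R} FFNN^\sigma_{H,K}(R)$ indeed enjoys the universal approximation property.

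For step (b), applying \cref{prop:dense_family_consistent} (whose hypotheses, universal approximation and loss assumption (i), now hold) yields a hyper-parameter $(H_0, R_0)$ at which $\palg$ is not consistent. To promote this to all $H \geq H_0$, $R \geq R_0$, I would use that enlarging the depth or the radius only enlarges the model class: a network in $FFNN^\sigma_{H,K}(R)$ is recovered, up to arbitrarily small uniform error, by a network in $FFNN^\sigma_{H',K}(R')$ for $H' \geq H$ and $R' \geq R$, by inserting layers that approximate the identity (again using that $\sigma$ is continuously differentiable with nonzero derivative at a point). Since non-consistency is witnessed by the existence of a single parameter $w \in \param_\theta$ at which the pruned and full limiting losses disagree strictly (by \cref{prop:asymp_behavior} and \cref{def:consistent_alg}), and both losses are continuous in $w$, this witness survives, up to an arbitrarily small perturbation, in every richer family, so the property persists for all $H \geq H_0$, $R \geq R_0$. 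The validity claim is analogous: under assumptions (i)--(ii) and the label-dependence of $g$, \cref{prop:dense_family_valid} produces $\theta_1, \theta_2$, hence two finite-depth, finite-radius network classes; choosing $H'_0, R'_0$ large enough that $FFNN^\sigma_{H,K}(R)$ contains both classes for all $H \geq H'_0$, $R \geq R'_0$ supplies the inclusion $\param_{\theta_1} \cup \param_{\theta_2} \subset \param_\theta$ required by the theorem, yielding non-validity for $r \leq r_0$.

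The main obstacle is the interplay between the two monotone parameters $H$ and $R$ together with the hard weight cap. The cited approximation theorem is naturally stated with unbounded weights and growing depth, whereas the corollary asks for a statement uniform over all sufficiently large $(H,R)$ in a family with a fixed bound $R$ on the weights. Reconciling these requires the identity-approximation embedding $FFNN^\sigma_{H,K}(R) \subset \overline{FFNN^\sigma_{H',K}(R')}$, and it is precisely here that the regularity hypothesis on $\sigma$ is essential; everything else is bookkeeping built on the already-established \cref{prop:dense_family_consistent,prop:dense_family_valid}.
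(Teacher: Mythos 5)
Your route is the same as the paper's: the paper states this corollary with no separate proof, treating it as an immediate consequence of \cref{prop:dense_family_consistent}, \cref{prop:dense_family_valid} and the deep--narrow universal approximation theorem of Kidger--Lyons. Your step (a) is correct and is the natural way to make that citation rigorous (any single approximating network has finitely many weights, hence lies in $FFNN^\sigma_{H,K}(R)$ for some finite $H,R$, so the union over $(H,R)$ has the universal approximation property of \cref{def:dense_family}). Your ``open condition'' observation is also sound: by \cref{prop:asymp_behavior} and \cref{def:consistent_alg}, non-consistency at $\theta$ is witnessed by a single model at which $\mathbb{E}_{\nu_r}\ell$ and $\loss$ differ strictly, and by \cref{lemma:approx_through_loss} this strict gap survives any sufficiently small uniform perturbation of the witness, so it transfers to any model class containing a good enough approximant of the witness.

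The genuine gap is in the step you yourself flag as the crux: the claimed embedding $FFNN^\sigma_{H,K}(R)\subset \overline{FFNN^\sigma_{H',K}(R')}$ (closure in $\ell_\infty$) for \emph{all} $H'\geq H$, $R'\geq R$. The quantifier structure of the corollary fixes $R_0$ first and then demands non-consistency for every $H\geq H_0$ at that \emph{fixed} weight cap, so you must approximate the witness at every larger depth with weights capped by a constant. Exact identity layers do not exist for general $\sigma$, and the near-identity layer $x\mapsto \bigl(\sigma(\alpha+\varepsilon x)-\sigma(\alpha)\bigr)/(\varepsilon\sigma'(\alpha))$ needs the factor $1/(\varepsilon\sigma'(\alpha))$ to be realized by capped weights, forcing $\varepsilon\gtrsim 1/R'$; hence the per-layer linearization error is bounded below in terms of the modulus of continuity $\omega$ of $\sigma'$ at $\alpha$ and $1/R'$, and it compounds over the $H'-H\to\infty$ inserted layers. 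Attenuate-then-amplify schedules do not escape this: the accumulated error is of order $\sum_j \omega(a_j)$ over the operating amplitudes $a_j$, which must climb geometrically (with ratio at most $\sim R'$) back up to a fixed output scale, so one needs a Dini-type summability condition on $\omega$ --- not implied by the corollary's hypothesis that $\sigma$ is merely continuously differentiable at one point with nonzero derivative there. So the embedding is asserted exactly where the difficulty lies, and it is doubtful under the stated hypotheses; it does hold for ReLU-like activations (exact bounded-weight identities on compacta) or under the reading ``at most $H$ hidden layers,'' in which case depth-nesting is trivial, as it already is for the width-monotone corollary. The same issue hits your validity argument even more directly, since \cref{prop:dense_family_valid} asks for the literal inclusion $\param_{\theta_1}\cup\param_{\theta_2}\subset\param_\theta$, which you again reduce to this embedding. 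To be fair, the paper glosses over precisely the same point; but as a proof, the depth-monotonicity step remains unestablished.
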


A similar result for convolutional architectures is provided in \cref{sec:additional_theory}.
To summarize, these corollaries show that for large enough neural network architectures, any \CBPA\ is non-consistent. Besides, for large enough neural network architectures, any \CBPA\ that depends on the label is non-valid, and hence a performance gap should be expected even in the abundant data regime. 

\subsection{General problems}\label{sec:gen_problem}
In the previous section, we leveraged the universal approximation property and proved non-validity and non-consistency results that hold for any data-generating process.
In this section, we show a different No-free-Lunch result in the general setting presented in \cref{sec:setup}. This result  does not require the universal approximation property. More precisely, we show that under mild assumptions, given any \CBPA\ algorithm, we can always find a data distribution $\mu$ such that the algorithm is not valid (\cref{def:valid}). Since random pruning is valid for any generating process, this means that there exist data distributions for which the \CBPA\ algorithm provably underperforms random pruning in the abundant data regime.  

For $K \in \mathbb{N}^*$, let $\mathcal{P}_C^K$ denote the set of generating processes for $K$-classes classification problems, for which the input $X$ is a continuous random variable\footnote{In the sense that the marginal of the input is dominated by the Lebesgue measure}, and the output $Y$ can take one of $K$ values in $\mathcal{Y}$ (the same set of values for all $\pi \in \mathcal{P}_C^K$). Similarly, denote $\mathcal{P}_R$, the set of generating processes for regression problems for which both the input and output distributions are continuous. Let $\mathcal{P}$ be any set of generating processes introduced previously for regression or classification (either $\mathcal{P}= \mathcal{P}_C^K$ for some $K$, or $\mathcal{P}= \mathcal{P}_R$). In the next theorem, we show that under minimal conditions, there exists a data generating process for which the algorithms is not valid. 
\begin{thm}\label{thm:main_thm}
Let $\mathcal{A}$ be a \CBPA\ with an adapted score function. For any hyper-parameter $\theta \in \Theta$, if there exist $(x_1, y_1), (x_2, y_2) \in \mathcal{D}$ such that
\begin{equation}
 \tag{H1}\label{main_hyp}
  \textup{argmin}_{w\in \mathcal{W}_\theta} \ell(y_{out}(x_1; w), y_1) \cap \textup{argmin}_{w\in \mathcal{W}_\theta} \ell(y_{out}(x_2; w), y_2)  = \emptyset,
\end{equation}
then there exists $r_0 \in(0,1)$ and a generating process $\mu \in \mathcal{P}$ for which the algorithm is not valid for $r\leq r_0$.
\end{thm}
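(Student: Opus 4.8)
The plan is to invoke the necessary condition for validity. By \cref{prop:asymp_behavior}, any \CBPA\ with adapted score function satisfies \eqref{eq:condition_general} with the explicit limit $\nu_r = \frac{1}{r}\,\mu_{|A_r}$, so by \cref{corr:necessary_cond} it suffices to construct a generating process $\mu \in \mathcal{P}$ and a threshold $r_0 \in (0,1)$ for which $\mathcal{W}^*_\theta(\nu_r) \cap \mathcal{W}^*_\theta(\mu) = \emptyset$ for every $r \le r_0$; non-validity on $\{r \le r_0\}$ is then immediate. Everything reduces to engineering $\mu$ so that the minimizer of the full loss and that of the pruned loss are trapped in two disjoint regions of $\param_\theta$.

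Set $z_i = (x_i, y_i)$ and $M_i = \textup{argmin}_{w \in \param_\theta}\, \ell(y_{out}(x_i; w), y_i)$; these are nonempty compact subsets of $\param_\theta$, disjoint by \eqref{main_hyp}, so $\delta := d(M_1, M_2) > 0$. Assume first that $g(z_1) \ne g(z_2)$ and relabel so that $g(z_2) < g(z_1)$. Continuity of $g$ yields disjoint neighborhoods $B_1 \ni z_1$ and $B_2 \ni z_2$ in $\mathcal{D}$ with $\sup_{B_2} g < \inf_{B_1} g$, so every point of $B_2$ is strictly preferred by $\mathcal{A}$ to every point of $B_1$. Fix a small $\alpha \in (0,1)$, take $\mu \in \mathcal{P}$ supported on $B_1 \cup B_2$ with a density there, $\mu(B_1) = 1-\alpha$ and $\mu(B_2) = \alpha$, and set $r_0 = \alpha$.

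Both limits then follow from uniform convergence on the compact set $\param_\theta$. Put $L_i(w) = \ell(y_{out}(x_i;w), y_i)$ and $\omega_i = \sup_{w} \big| \E_{\mu_{|B_i}}\ell(y_{out}(X;w),Y) - L_i(w) \big|$, which tends to $0$ as $B_i$ shrinks by continuity of $\ell$ and $y_{out}$. On the full data, $\sup_w |\loss(w) - L_1(w)| \le \omega_1 + \alpha\, C$ for a constant $C$ bounding the loss on the relevant compact domain, so choosing $\alpha$ then $B_1$ small makes $\loss$ uniformly close to $L_1$; a standard $\epsilon$-argmin stability argument (using that $L_1$ stays bounded away from its minimum outside any neighborhood of $M_1$) gives $\mathcal{W}^*_\theta(\mu) \subseteq \{w : d(w, M_1) < \delta/3\}$. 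On the pruned data, since $\mu(B_2) = \alpha = r_0$ and $B_2$ carries strictly the lowest scores, the quantile satisfies $A_r \subseteq B_2$ for every $r \le r_0$; hence $\nu_r$ is supported in $B_2$, $\sup_w|\E_{\nu_r}\ell - L_2(w)| \le \omega_2$ uniformly in $r \le r_0$, and shrinking $B_2$ gives $\mathcal{W}^*_\theta(\nu_r) \subseteq \{w : d(w, M_2) < \delta/3\}$. As $d(M_1, M_2) = \delta$, these two sets are disjoint, so $\mathcal{W}^*_\theta(\nu_r) \cap \mathcal{W}^*_\theta(\mu) = \emptyset$ for all $r \le r_0$, and \cref{corr:necessary_cond} concludes.

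The main obstacle is the tie case $g(z_1) = g(z_2)$, which destroys the score separation the construction relies on. I would remove it by perturbing a point: since $g$ is adapted its level sets are Lebesgue-null, so there exist points $z_2'$ arbitrarily close to $z_2$ with $g(z_2') \ne g(z_1)$, and because the correspondence $z \mapsto \textup{argmin}_w\, \ell(y_{out}(x;w),y)$ is upper semicontinuous (any limit of minimizers along $z_n \to z$ is itself a minimizer at $z$), for $z_2'$ close enough one still has $M_2' \subseteq \{w : d(w, M_2) < \delta/2\}$, which remains disjoint from $M_1$; replacing $z_2$ by $z_2'$ restores strict separation while preserving \eqref{main_hyp}. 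A last routine point, easily handled using the latitude in choosing $\mu$, is to ensure $\mu$ genuinely lies in the prescribed class $\mathcal{P}$ (continuous input marginal and admissible label structure) and that $g(Z)$ has no atoms, so that the quantile $q^r$ and the identity $\mu(A_r) = r$ underlying \cref{prop:asymp_behavior} are valid.
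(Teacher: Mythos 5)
Your proof is correct and rests on the same core construction as the paper's: a two-bump data distribution whose bumps are separated by the score function, so that pruning at small $r$ retains only the neighborhood of the low-score point while the bulk of the mass sits at the other point, followed by an appeal to \cref{corr:necessary_cond} together with \cref{prop:asymp_behavior}; your tie-breaking step for $g(z_1)=g(z_2)$ (adaptedness to find a nearby point with a different score, plus upper semicontinuity of $z \mapsto \textup{argmin}_{w} \ell(y_{out}(x;w),y)$) is exactly the paper's Results 2--3. The endgame, however, is executed differently, in ways worth noting. The paper localizes only the \emph{pruned} minimizers, via a subsequence/contradiction argument over a sequence of measures $\mu^k$ (its Result 5), and then lower-bounds the excess risk directly (Result 6), obtaining the explicit threshold $r_0 = \Delta/(2(M+\Delta))$; crucially, the measure it constructs depends on $r$ --- the proof's displayed goal is of the form ``$\forall r \le r_0\ \exists \mu$'' --- whereas the theorem asserts a single $\mu$ that works for every $r \le r_0$. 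Your construction, which fixes the low-score mass at $\alpha = r_0$ once and for all and observes that the retained region stays inside $B_2$ for \emph{every} $r \le r_0$, produces one measure valid simultaneously for all such $r$, so it actually delivers the quantifier order claimed in the statement; your uniform-approximation-plus-argmin-stability argument, localizing both $\mathcal{W}^*_\theta(\mu)$ and $\mathcal{W}^*_\theta(\nu_r)$ in disjoint balls, is also simpler than the paper's, at the price of giving up the explicit constants. Two quibbles: the correct claim is $A_r \cap \mathrm{supp}(\mu) \subseteq B_2$ rather than $A_r \subseteq B_2$ (the acceptance region may contain low-score points far from the support of $\mu$ --- harmless, since $\nu_r$ is a restriction of $\mu$); and in the classification case $\mathcal{P} = \mathcal{P}_C^K$ the constructed $\mu$ must place its label mass on the $K$ admissible values and still make $g(Z)$ atomless, a point you flag as routine and which the paper glosses over to the same extent (its abstract richness condition on $\mathcal{P}$ simply assumes the needed measures exist).
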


The rigorous proof of \cref{thm:main_thm} requires careful manipulations of different quantities, but the intuition is rather simple. \cref{fig:visual_proof} illustrates the main idea of the proof. We construct a distribution $\mu$ with the majority of the probability mass concentrated around a point where the value of $g$ is not minimal. Consequently, for sufficiently small $r$, the distribution of the retained samples will significantly differ from the original distribution. This shift in data distributions causes the algorithm to be non-valid. We see in the next section how we can solve this issue via randomization. Finally, notice that \cref{main_hyp} is generally satisfied in practice since usually for two different examples $(x_1, y_1)$ and $(x_2, y_2)$ in the datasets, the global minimizers of $\ell(y_{out}(x_1;w), y_1)$ and $\ell(y_{out}(x_2;w), y_2)$ are  different.

\begin{figure}
\centering
\includegraphics[width=0.8\textwidth]{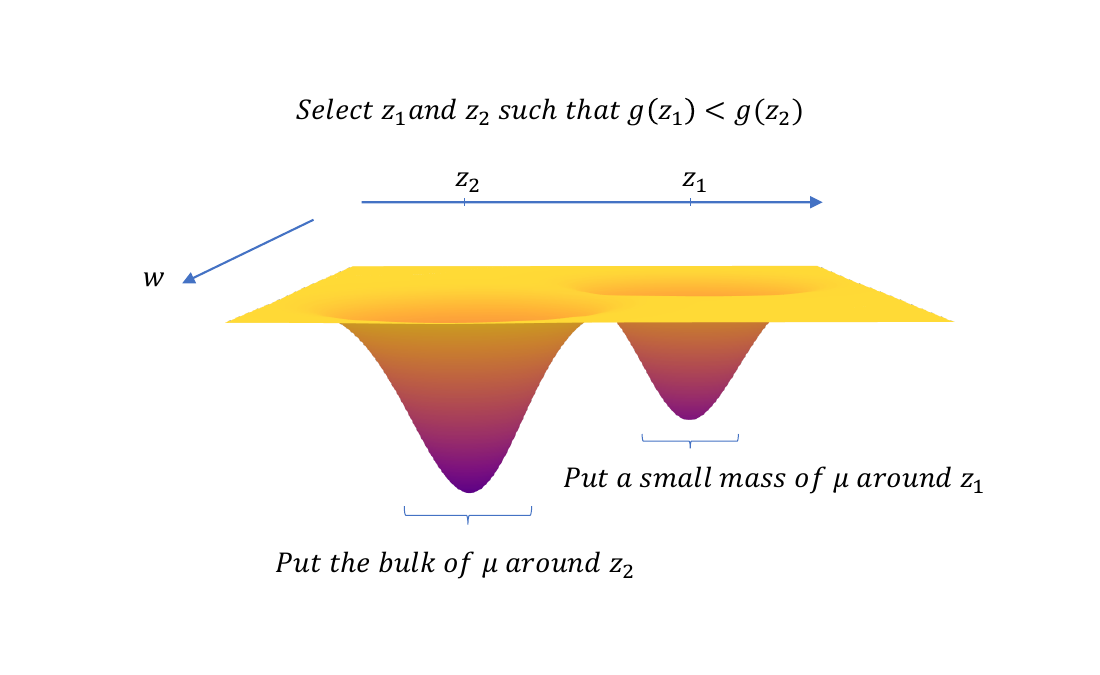}
\caption{Graphical sketch of the proof of \cref{thm:main_thm}. The surface represents the loss function $f(z,w) = \ell(y_{out}(x), y)$ in 2D, where $z=(x, y)$.}
\label{fig:visual_proof}
\end{figure}

\section{Solving non-consistency via randomization}\label{sec:correction}

\begin{figure}
\centering
    \includegraphics[width=0.66\linewidth]{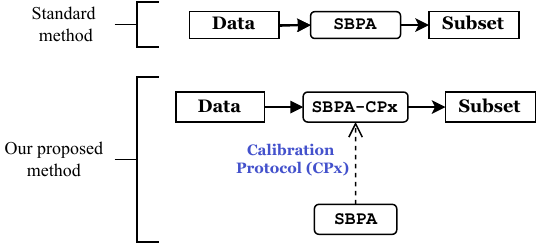}
    \caption{An illustration of how the calibration protocols modify \CBPA\ algorithms.}
    \label{fig:cp_chart}
\end{figure}

We have seen in \cref{sec:no_free_lunch} that \CBPA\ algorithms inherently transform the data distribution by asymptotically rejecting all samples in $\dataset \setminus A_r$. These algorithms are prone to inconsistency; the transformation of the data distribution translates to a distortion of the loss landscape, potentially leading to a  deterioration of the generalization error. This effect is exacerbated for smaller compression ratios $r$ as the acceptance region becomes arbitrarily small and concentrated.

With this in mind, one can design practical solutions to mitigate the problem. For illustration, we propose to resort to a \emph{Calibration Protocol} to retain information from the previously discarded region $\dataset \setminus A_r.$ The calibration protocols can be thought of as wrapper modules that can be applied on top of any \CBPA\ algorithm to solve the consistency issue through randomization (see \cref{fig:cp_chart} for a graphical illustration). Specifically, we split the data budget $rn$ into two parts: the first part, allocated for the \textit{signal}, leverages the knowledge from the \CBPA\ and its score function $g$. The second part, allocated for \textit{exploration}, accounts for the discarded region and consists of a subset of the rejected points, selected uniformly at random. In other words, we write $ r = r_{signal} + r_{exploration}.$ With standard \CBPA\ procedures, $r_{exploration} = 0.$  We define $\alpha = \frac{r_{signal}}{r}$ the proportion of signal in the overall budget. Accordingly, the set of retained points can be expressed as
$$ \bar \palg(\dataset_n, r, \alpha) = \bar \palg_{s}(\dataset_n, r, \alpha) \cup \bar \palg_{e}(\dataset_n, r, \alpha),$$
where $\bar \palg$ denotes the calibrated version of $\palg$, and the indices `s' and `e' refer to signal and exploration respectively. In this work, we consider the simplest approach.  The ``signal subset" is composed of the $\alpha r n$ points with the highest importance according to $g$, i.e.
$\bar \palg_{s}(\dataset_n, r, \alpha) = \palg(\dataset_n, r\alpha)$. The ``exploration subset", $\bar \palg_{e}(\dataset_n, r, \alpha)$ is composed on average of $(1-\alpha)rn$ points selected uniformly at random from the remaining samples $\dataset_n \setminus \palg(\dataset_n, r\alpha)$, each sample being retained with probability $p_e=\frac{(1-\alpha) r}{1-\alpha r}$, independently.  The calibrated loss is then defined as a weighted sum of the contributions of the signal and exploration budgets,
\begin{equation}\label{eq:calibrated_loss}
    \loss_n^{\bar \palg, r, \alpha} (w) = \frac{1}{n} \left( \gamma_s \sum_{z \in \bar \palg_s(\dataset_n, r, \alpha)} f(z; w) + \gamma_e \sum_{z \in \bar \palg_e(\dataset_n, r, \alpha)} f(z; w) \right)
\end{equation}
where $f(z; w) = \ell(y_{out}(x), y)$ for $z=(x,y) \in \dataset$ and $w \in \mathcal{W}_\theta.$ The weights $\gamma_s$ and $\gamma_e$ are chosen so that the calibrated procedure is consistent; they are inversely proportional to the probability of acceptance within each region:
\begin{eqnarray*}
    \gamma_s &=& 1 \\
    \gamma_e &=& \frac{1-\alpha r}{(1-\alpha) r}
\end{eqnarray*} \cref{prop:exact_calibration} hereafter states that any \CBPA\ calibrated with this procedure is made consistent as long as a non-zero budget is allocated to exploration. For this reason, we refer to this method as the Exact Calibration protocol (EC). The proof builds on an adapted version of the law of large numbers for sequences of dependent variables which we prove in the Appendix (\cref{thm:generalized_slln}).

\begin{prop}[Consistency of Exact Calibration+\CBPA]\label{prop:exact_calibration}
Let $\palg$ be a \CBPA\ algorithm. Using the Exact Calibration protocol with signal proportion $\alpha$, the calibrated algorithm $\bar \palg$ is consistent if $1-\alpha > 0$, i.e. the exploration budget is not null. Besides, under the same assumption $1-\alpha > 0$, the calibrated loss is an unbiased estimator of the generalization loss at any finite sample size $n > 0$,
$$ \forall w \in \param_\theta,\ \forall r \in (0, 1),\ \E \loss^{\bar \palg, r, \alpha}_n(w) = \loss(w). $$
\end{prop}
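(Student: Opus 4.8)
The plan is to prove the unbiasedness claim first, since it is the cleaner of the two statements and will illuminate the structure needed for the almost-sure convergence. Fix $w \in \param_\theta$ and $r \in (0,1)$, and write $f_i = f(z_i; w) = \ell(y_{out}(x_i; w), y_i)$. From the calibrated loss in \eqref{eq:calibrated_loss}, I would decompose $\E \loss_n^{\bar\palg, r, \alpha}(w)$ into a signal term and an exploration term. The delicate point is that the partition of indices into $I_s$ and $I_e$ is itself data-dependent: the threshold $\beta_{\alpha, rn} = g^{\alpha rn}$ is the $(\alpha rn)^{th}$ order statistic of the scores, so the event $\{g_i \le \beta_{\alpha, rn}\}$ couples all the samples. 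To handle this cleanly, I would condition on the scores $(g_1, \dots, g_n)$, which fixes the threshold and hence fixes $I_s$ deterministically, and then take expectations over the Bernoulli variables $b^i$ (independent of everything) and over the labels. The key algebraic identity is that the importance weight $\gamma_e = \frac{1-\alpha r}{(1-\alpha)r}$ exactly cancels the acceptance probability $p_e = \frac{(1-\alpha)r}{1-\alpha r}$, so that $\gamma_e \, \E[b^i] = 1$. This is precisely the Importance-Sampling design, and it should yield $\E\big[\gamma_s \ind_{i \in I_s} + \gamma_e \ind_{i \in I_e}\big] = 1$ for each $i$, whence the $\frac{1}{n}$ prefactor recovers $\E f_i = \loss(w)$ by the tower property.

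For the consistency claim (almost-sure convergence $\loss_n^{\bar\palg, r, \alpha}(w) \to \loss(w)$), I would invoke the generalized strong law of large numbers for dependent sequences (\cref{thm:generalized_slln}) promised in the appendix, rather than the classical SLLN, precisely because the summands are not independent: the signal indicators $\ind_{i \in I_s}$ share the common random threshold $\beta_{\alpha, rn}$. The strategy is to show that the calibrated loss converges to the same limit as an \emph{idealized} version in which the random threshold is replaced by the deterministic population quantile. Concretely, I would use \cref{prop:asymp_behavior} to control the signal part: the empirical measure on $\bar\palg_s(\dataset_n, r, \alpha) = \palg(\dataset_n, r\alpha)$ converges weakly to $\frac{1}{r\alpha}\mu_{|A_{r\alpha}}$, so $\frac{1}{n}\sum_{i \in I_s} f_i \to \alpha r \cdot \E_{\frac{1}{r\alpha}\mu_{|A_{r\alpha}}} f = \E_{\mu_{|A_{r\alpha}}} f = \E_\mu[f \, \ind_{g(Z) \le q^{r\alpha}}]$ almost surely. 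For the exploration part, $\frac{1}{n}\sum_{i \in I_e} f_i$ concerns points drawn uniformly at random from the complement $\dataset_n \setminus \palg(\dataset_n, r\alpha)$, each kept with probability $p_e$; after multiplying by $\gamma_e$, the law of large numbers gives convergence to $\gamma_e \, p_e \, \E_\mu[f \, \ind_{g(Z) > q^{r\alpha}}] = \E_\mu[f \, \ind_{g(Z) > q^{r\alpha}}]$. Summing the two limits gives exactly $\E_\mu[f] = \loss(w)$, which is the definition of consistency (\cref{def:consistent_alg}).

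The main obstacle will be making the exploration-term argument rigorous, because the quantity $\frac{1}{n}\sum_{i \in I_e} f_i$ mixes two sources of randomness whose dependence must be carefully disentangled: the conditioning event $\{g_i > \beta_{\alpha, rn}\}$ (which depends on the global order statistic, hence on all samples) and the independent Bernoulli thinning $b^i$. The adapted-score assumption (\cref{def:adapted}) is what guarantees that $\beta_{\alpha, rn} \to q^{r\alpha}$ almost surely and that $\mu(g^{-1}(\{q^{r\alpha}\})) = 0$, so the boundary contributes nothing in the limit and the number of points in each region concentrates at its nominal value. I would need the generalized SLLN to accommodate the weak dependence introduced by the shared threshold; verifying its hypotheses for these specific triangular-array summands — in particular a uniform integrability or bounded-variance-type condition, which holds because $\ell$ and $y_{out}$ are continuous on the compact domain $\mathcal{X} \times \mathcal{Y} \times \param_\theta$ and hence bounded — is where the real work lies. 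A clean way to organize this is to prove convergence of the conditional expectation and then bound the fluctuation around it, rather than attacking the dependent sum directly.
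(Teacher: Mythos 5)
Your proof is correct in substance, but it takes a genuinely different route from the paper's. The paper never decomposes the calibrated loss into separate signal and exploration limits: it centers the whole sum at the plain empirical loss, defining $Y_{n,i} = \left( \gamma_i \ind_{\{z_i \textrm{ selected}\}} - 1\right) f(z_i, w)$, and observes that conditionally on the data and the other Bernoulli draws one has $\E\big[\gamma_i \ind_{\{z_i \textrm{ selected}\}} \mid \cdot\big] = 1$ --- your importance-sampling identity $\gamma_e p_e = 1$, together with $\gamma_s = 1$ on the signal set, since membership in $I_s$ is measurable with respect to the data. Hence the $Y_{n,i}$ satisfy the conditional zero-mean, identically distributed, bounded-fourth-moment hypotheses of \cref{thm:generalized_slln}; that theorem gives $\frac{1}{n}\sum_{i=1}^n Y_{n,i} \to 0$ a.s., the classical SLLN applied to $\frac{1}{n}\sum_{i=1}^n f(z_i,w)$ finishes consistency, and the same conditional identity plus the tower property gives unbiasedness (it reappears in the paper's variance computation). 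The crucial payoff of that design is that the paper never needs the random threshold $\beta_{\alpha, rn}$ to converge, never invokes \cref{prop:asymp_behavior}, and never uses adaptedness of the score: the argument works for an arbitrary score function. Your route --- computing the two limits $\E_\mu\big[f\,\ind_{\{g \leq q^{r\alpha}\}}\big]$ and $\E_\mu\big[f\,\ind_{\{g > q^{r\alpha}\}}\big]$ separately and summing --- is also sound and is more informative, since it exhibits exactly where each part of the budget contributes mass in the limit; but it costs precisely the technical work you flag: you must prove $\beta_{\alpha, rn} \to q^{r\alpha}$ with vanishing boundary contribution (this is where the adapted-score hypothesis becomes necessary, although it is not needed under the paper's argument), and you must center the exploration summands before any SLLN applies to them, since $b^i\,\ind_{\{g_i > \beta_{\alpha, rn}\}} f(z_i,w)$ does not itself have conditional mean zero; after centering by $p_e\,\ind_{\{g_i > \beta_{\alpha, rn}\}} f(z_i,w)$ the generalized SLLN handles the Bernoulli fluctuation, and the remaining deterministic-threshold term is controlled by the Glivenko--Cantelli argument already used in the proof of \cref{prop:asymp_behavior}. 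Both proofs are valid; the paper's is shorter and more general, yours is more explicit about the limiting measure.
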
 

The proposed EC protocol offers a simple yet effective approach to address the challenges of non-consistency and non-validity. It can be seamlessly applied in conjunction with any \CBPA. The core concept revolves around the implementation of soft-pruning: any data point is assigned a non-zero selection probability. Samples with lower scores are given a higher acceptance rate. The contribution of each accepted data point to the loss is then weighted accordingly. The EC protocol embodies one specific implementation of soft-pruning, offering the advantage of a single interpretable tuning parameter, the signal proportion $\alpha \in [0,1]$. By setting $\alpha$ to 1 or 0, one can recover the \CBPA\ and \random\ pruning as extreme cases.

\cref{prop:exact_calibration} states that any \CBPA\ calibrated with EC is made consistent and valid as long as some budget is allocated to exploration. This is empirically validated in \cref{sec:experiments} where we show promising results on a Toy example (Logistic regression) and other image tasks. However, the exact calibration protocol does not systematically allow to outperform random pruning. 

Besides, it is worth noting that different implementations of the same general recipe can be considered. It is reasonable to expect that more tailored protocols can be designed to suit specific pruning algorithms and problems. Nevertheless, addressing these questions falls outside the scope of the present work which focus is to provide a  framework to analyse data pruning algorithms, as well as to identify and understand their fundamental limitations. We propose the EC protocol to illustrate how this understanding allows to design simple yet efficient solutions to address these limitations.

\section{Experiments}\label{sec:experiments}

\begin{figure}
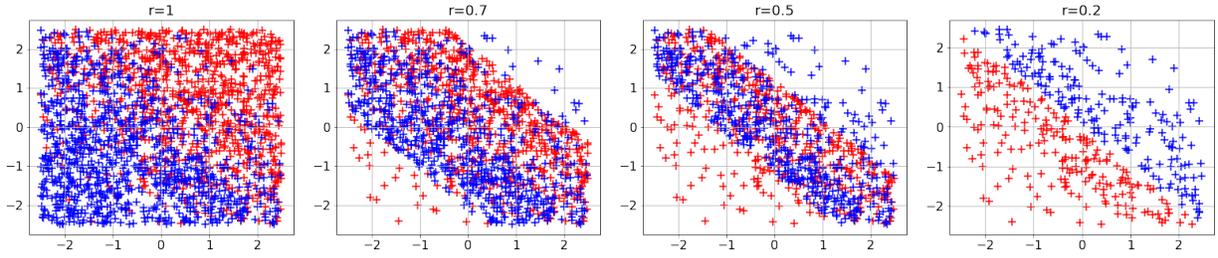

\centering
\begin{subfigure}{0.24\textwidth}
    \includegraphics[width=\textwidth]{figures/logistic/distribution_100.png}
\end{subfigure}
\begin{subfigure}{0.24\textwidth}
    \includegraphics[width=\textwidth]{figures/logistic/distribution_70.png}
\end{subfigure}
\begin{subfigure}{0.24\textwidth}
    \includegraphics[width=\textwidth]{figures/logistic/distribution_50.png}
\end{subfigure}
\begin{subfigure}{0.24\textwidth}
    \includegraphics[width=\textwidth]{figures/logistic/distribution_20.png}
\end{subfigure}
\caption{Data distribution alteration due to pruning in the logistic regression setting. Here we use \grand\ as the pruning algorithm. Blue points correspond to $Y_i = 0$, red points correspond to $Y_i=1$.}\label{fig:logistic_distribution}
\end{figure}

\subsection{Logistic regression:}\label{subsec:logisitic}
\begin{wrapfigure}{r}{0.4\textwidth}
\vspace{-0.8cm}
\centering
\includegraphics[width=0.4\textwidth]{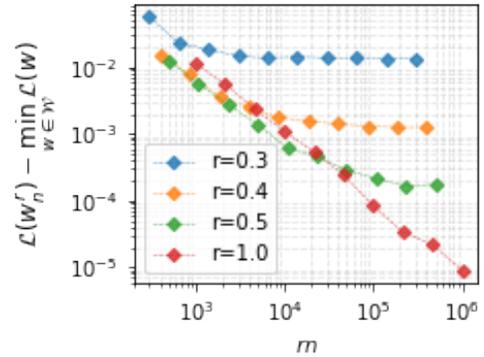}
\caption{Evolution of the performance gap as the data budget $m=rn$ increases (average over 10 runs).}
 \label{fig:logistic_loss_perf}
\vspace{-1.cm}
\end{wrapfigure}
We illustrate the main results of this work on a logistic regression task. We consider the following data-generating process
\begin{eqnarray*}
    X_i &\sim& \mathcal{U}\big([-2.5, 2.5]^{d_x}\big)\\
    Y_i\ |\ X_i &\sim& \mathcal{B}\left(\frac{1}{1+e^{-w_0^T X_i}}\right),
\end{eqnarray*}
where $w_0=(1,...,1) \in \mathbb{R}^{d_x}$,  $\mathcal{U}$ and $\mathcal{B}$ are respectively the uniform and Bernoulli distributions. The class of models is given by
$$\mathcal{M} = \left\{ y_{out}(\cdot; w): x \mapsto \frac{1}{1+e^{-w^T X_i}} \ |\ w\in \param \right\},$$
where $\param = [-10, 10]^{d_x}$. We train the models using stochastic gradient descent with the cross entropy loss. For performance analysis, we take $d_x=20$ and $n=10^6$. For the sake of visualization, we take $d_x=1$ when we plot the loss landscapes (so that the parameter $w$ is univariate) and $d_x=2$ when we plot the data distributions. \\

We use \grand\ \citeps{datadiet2021} as a pruning algorithm in a teacher-student setting. For simplicity, we use the optimal model to compute the scores, i.e.
$$g(X_i, Y_i) = - \|\nabla_w \ell(y_{out}(X_i, w_0), Y_i)\|^2,$$ which is proportional to $-(y_{out}(X_i; w_0)-Y_i)^2.$ Notice that in this setting, \grand\ and \texttt{EL2N}  \citeps{datadiet2021} are equivalent\footnote{This is different from the original version of \grand\ , here, we have access to the true generating process, which is not the case in practice.}. We bring to the reader's attention that $r=1$ corresponds to \random\ pruning \rev{in our plots. Indeed, we compare models as a function of the data budget $m=r n$. But notice that in the case of \random, for a given $m$, the values of $r$ and $n$ do not affect the distribution of the accepted datapoints, and this distribution is always the same as the original data distribution, i.e. when $r=1$.}
\medskip

\noindent \textbf{Distribution shift and performance degradation:} In \cref{sec:no_free_lunch}, we have seen that the pruning algorithm induces a shift in the data distribution (\cref{fig:logistic_distribution}). This alteration is most pronounced when $r$ is small; For $r=20\%$, the bottom-left part of the space is populated by $Y=1$ and the top-right by $Y=0$. Notice that it was the opposite in the original dataset ($r=1$). This translates into a distortion of the loss landscape and the optimal parameters $w^{\palg, r}_n$ of the pruned empirical loss becomes different from $w_0=1$. Hence, even when a large amount of data is available, the performance gap does not vanish (\cref{fig:logistic_loss_perf}). \\
\begin{wrapfigure}{r}{0.4\textwidth}
\vspace{-0.5cm}
\centering
\begin{subfigure}{0.4\textwidth}
    \includegraphics[width=\textwidth]{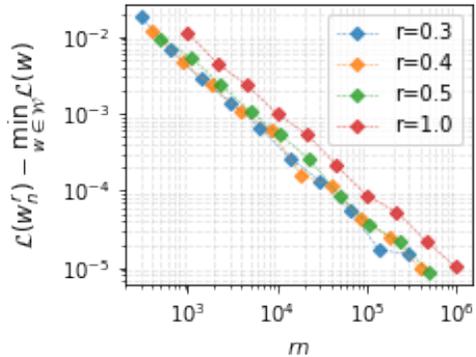}
\end{subfigure}
\caption{Evolution of the performance gap with calibrated \grand\ as the data budget $m=rn$ increases (average over 10 runs).}
 \label{fig:logistic_loss_perf_calibrated}
\vspace{-2em}
\end{wrapfigure}

\noindent \textbf{Calibration with the exact protocol:} To solve the distribution shift, we resort to the exact protocol with $\alpha=90\%$. In other words, $10\%$ of the budget is allocated to exploration. The signal points (top images in \cref{fig:logistic_distribution_calibrated}) are balanced with the exploration points (bottom images in \cref{fig:logistic_distribution_calibrated}). Even though there are nine times fewer of them, the importance weights allow to correct the distribution shift, as depicted in \cref{fig:logistic_regression_illustration} (Introduction): the empirical losses overlap for all values of $r$, even for small values for which the predominant labels are swapped (for example $r=20\%$). Hence, the performance gap vanishes when enough data is available at any compression ratio (\cref{fig:logistic_loss_perf_calibrated}).   

\begin{figure}
\centering
\begin{subfigure}{0.28\textwidth}
    \includegraphics[width=\textwidth]{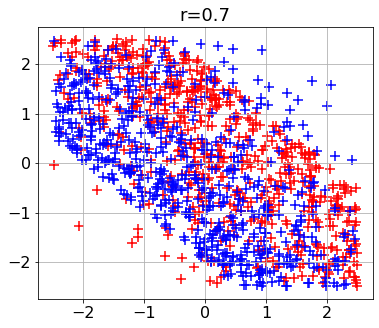}
\end{subfigure}
\begin{subfigure}{0.28\textwidth}
    \includegraphics[width=\textwidth]{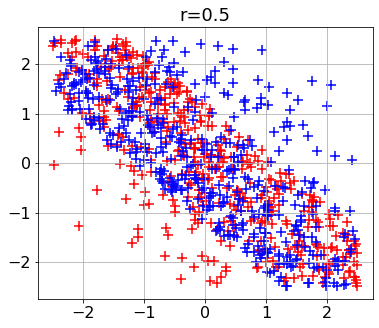}
\end{subfigure}
\begin{subfigure}{0.28\textwidth}
    \includegraphics[width=\textwidth]{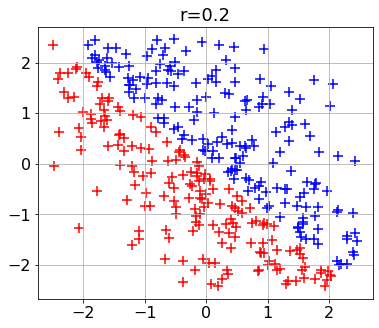}
\end{subfigure}
\begin{subfigure}{0.28\textwidth}
    \includegraphics[width=\textwidth]{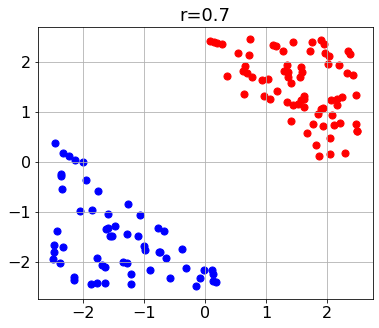}
\end{subfigure}
\begin{subfigure}{0.28\textwidth}
    \includegraphics[width=\textwidth]{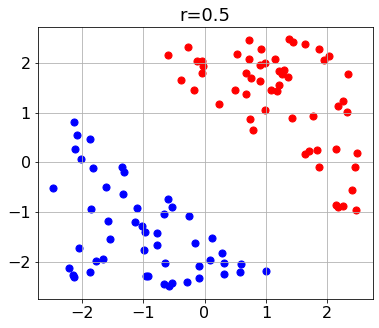}
\end{subfigure}
\begin{subfigure}{0.28\textwidth}
    \includegraphics[width=\textwidth]{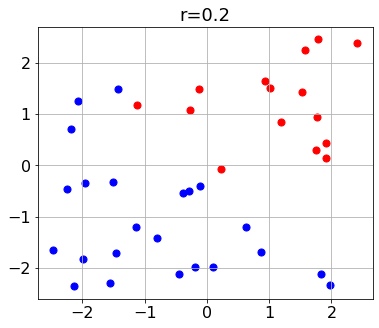}
\end{subfigure}
\caption{Pruned data distribution for \grand\ calibrated with exact protocol with $\alpha=90\%$. The top figures represent the 'signal' points. The bottom figures represent the 'exploration' points.  Blue markers correspond to $Y_i = 0$, and red markers correspond to $Y_i=1$.}\label{fig:logistic_distribution_calibrated}
\vspace{-1em}
\end{figure}

\medskip
\begin{wrapfigure}{r}{0.4\textwidth}
\vspace{-1cm}
\centering
\includegraphics[width=0.4\textwidth]{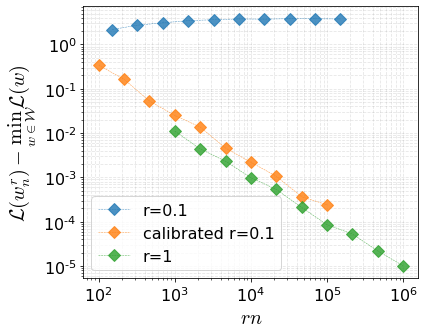}
\caption{Evolution of the performance gap for a small value $r=0.1$ for \grand\ and its calibrated version with $\alpha=90\%$.}\label{fig:logistic_bad_grand}
\vspace{-1em}
\end{wrapfigure}
\noindent \textbf{Impact of the quality of the pruning algorithm:} The calibration protocols allow the performance gap to eventually vanish if enough data is provided. However, from a practical point of view, a natural further requirement is that the pruning method should be better than \random , in the sense that for a given finite budget $rn$, the error with the pruning algorithm should be lower than the one of \random . We argue that this mostly decided by the quality of the original \CBPA\ and its score function. Let us take a closer look at what happens in the logistic regression case. For a given $X_i$, denote $\tilde Y_i$ the most probable label for the input, i.e. $\widetilde Y_i = 1$ if $y_{out}(X_i, w_0) > 1/2$, and $\widetilde Y_i = 0$ otherwise. As explained, in this setting, \grand\ is equivalent to using the score function $g(Z_i) = -|Y_i-y_{out}(X_i; w_0)|$. For a given value of $r$, consider $q^r$ the $r^{th}$ quantile of $g(Z)$. Notice that $g(Z) \leq q^r$ if and only if
$$\underbrace{\left( \left|y_{out}(X_i; w_0)-\frac{1}{2}\right| \leq q^r+\frac{1}{2} \right)}_{\text{Condition 1}} \text{ or } \underbrace{\left( \left|y_{out}(X_i; w_0)-\frac{1}{2} \right| > \left| q^r+\frac{1}{2}\right| \text{ and } Y_i \not =\widetilde Y_i \right)}_{\text{Condition 2}}$$
Therefore, the signal acceptance region is the union of two disjoint sets. The first set is composed of all samples that are close to the decision boundary, i.e. samples for which the true conditional probability $y_{out}(X_i; w_0)$ is close to $1/2$. The second set is composed of samples that are further away from the decision boundary, but the realized labels need to be the least probable ones ($Y_i \not = \widetilde Y_i$). These two subsets are visible in \cref{fig:logistic_distribution,fig:logistic_distribution_calibrated} for $r=70\%$ and even more for $r=50\%$. The signal points can be divided into two sets: 
\begin{enumerate}
    \item the set of points close to the boundary line $y=-x$, where the colors match the original configurations (mostly blue points under the line, red points over the line)
    \item the set of points far away from the boundary line, for which the colors are swapped (only red under the line, blue over the line).
\end{enumerate}

Hence, the signal subset corresponding to Condition 1 gives valuable insights; it provides finer-grained visibility in the critical region. However, the second subset is unproductive, as it only retains points that are not representative of their region. Calibration allows mitigating the effect of the second subset while preserving the benefits of the first subset; in \cref{fig:logistic_loss_perf_calibrated}, we can see that the calibrated \grand\ outperforms random pruning (which corresponds to the $r=1$ curve), requiring on average two to three times fewer data to achieve the same generalization error. However, as $r$ becomes lower, $q^r$ will eventually fall under $-1/2$, and the first subset becomes empty (for example, $r=0.2$ in \cref{fig:logistic_distribution_calibrated}). Therefore, when $r$ becomes small, \grand\ does not bring valuable information anymore (for this particular setting). In \cref{fig:logistic_bad_grand}, we compare \grand\ and Calibrated \grand\ (with the exact protocol) to \random\ with $r=10\%$. We can see that thanks to the calibration protocol, the performance gap will indeed vanish if enough data is available. However, \random\ pruning outperforms both versions of \grand\ at this compression level. This underlines the fact that for high compression levels, (problem-specific) high-quality pruning algorithms and score functions are required. Given the difficulty of the task, we believe that in the high compression regime ($r \leq 10\%$ here), one should allocate a larger budget to random exploration (take smaller values of $\alpha$).

\newpage
\subsection{Scaling laws with neural networks}
\begin{wrapfigure}{r}{0.5\textwidth}
    \vspace{-1.3cm}
\centering
    \includegraphics[width=\linewidth]{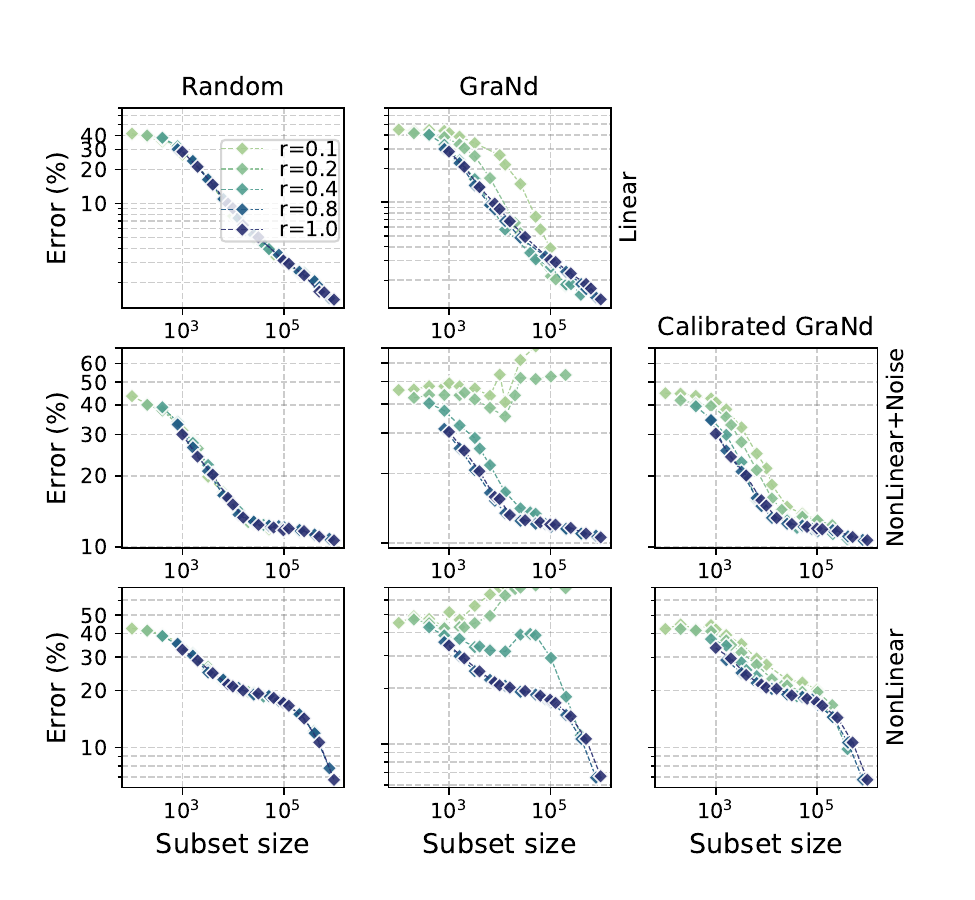}
    \caption{Test error on a 3-layers MLP (details are provided in \cref{sec:experimental_details}) on different pruned datasets for compression levels $r \in \{0.1, 0.2, 0.4, 0.8, 1\}$ where the pruning procedure is performed with \random~ pruning or \grand. The case $r=1$ corresponds to no pruning. In all the experiments, the network is trained until convergence.}
    \label{fig:power_law}
    \vspace{-0.5cm}
\end{wrapfigure}
The distribution shift is the primary cause of the observed alteration in the loss function, resulting in the emergence of new minima. Gradient descent could potentially converge to a bad minimum, in which case the performance is significantly affected. To illustrate this intuition, we report in \cref{fig:power_law} the observed scaling laws for three different synthetic datasets. Let $N_{train} = 10^6$, $N_{test} = 3 \cdot 10^4$, $d= 1000$, and $m=100$. The datasets are generated as follows:

\noindent1. \emph{Linear} dataset: we first generate a 
    random vector $W \sim \mathcal{N}(0, d^{-1} \,I_d)$. Then, we generate $N_{train}$ training samples and $N_{test}$ test samples with the rule $y = \ind_{\{W^\top x >0\}}$, where $x \in \reals^d$ is simulated from $\mathcal{N}(0, I_d)$.

\noindent2. \emph{NonLinear} dataset (Non-linearity): we first generate a random matrix $W_{in} \sim \mathcal{N}(0, d^{-1} \, I_{d\times m}) \in \reals^{d \times m}$ and a random vector $W_{out} \sim \mathcal{N}(0, m^{-1} \,I_m)$. The samples are then generated with the rule $y = \ind_{\{W_{out}\top \phi(W_{in}^\top x )\}}$, where $x \in \reals^d$ is simulated from $\mathcal{N}(0, I_d)$, and $\phi$ is the ReLU activation function. \footnote{The ReLU activation function is given by $\phi(z) = \max(z,0)$ for $z \in \reals$. Here, we abuse the notation a bit and write $\phi(z) = (\phi(z_1), \dots, \phi(z_m))$ for $z=(z_1, \dots, z_m) \in \reals^m$.}

\noindent3. \emph{NonLinear+Noisy} dataset: we first generate a  random vector $W \sim \mathcal{N}(0, d^{-1} \,I_d)$. Then, we generate $N_{train}$ training samples and $N_{test}$ test samples with the rule $y = \ind_{\{ \sin(W^\top x  + 0.3 \epsilon)>0\}}$, where $x \in \reals^d$ is simulated from $\mathcal{N}(0, I_d)$ and $\epsilon$ is simulated from $\mathcal{N}(0, 1)$ and `sin' refers to the sine function.\\

In \cref{fig:power_law}, we compare the test error of an 3-layers MLP trained on different subsets generated with either \random~pruning, or \grand. As expected, with random pruning, the results are consistent regardless of the compression level $r$ as long as the subset size is the same. With \grand~however, the results depend on the difficulty of the dataset. For the linear dataset, it appears that we can indeed beat the power law scaling, provided that we have access to enough data. In contrast, \grand~seems to perform poorly on the nonlinear and noisy datasets in the high compression regime. This is due to the emergence of new local (bad) minima as $r$ decreases as evidenced in \cref{fig:illustration_change}. Calibrated with the exact protocol, \grand\ becomes valid: we can see that at any compression rate, the error  converges to its minimum, which was not the case for $r \leq 20\%$. Whether calibration protocols can allow data pruning algorithms to beat the power law scaling remains however an open question: further research is needed in this direction.
It is also worth noting that for the Nonlinear datasets, the scaling law pattern exhibits multi-phase behavior. For instance, for the \emph{Nonlinear+Noisy} dataset, we can (visually) identify two phases, each one of which follows a different power law scaling pattern.

\subsection{Image recognition}
Through our theoretical analysis, we have concluded that \CBPA\ algorithms are generally non-consistent. This effect is most pronounced when the compression level $r$ is small. In this case, the loss landscape can be significantly altered due to the change in the data distribution caused by the pruning procedure. Given a \CBPA\ algorithm, we argue that this alteration in distribution will inevitably affect the performance of the model trained on the pruned subset, and for small $r$, \random~pruning becomes more effective than the \CBPA\ algorithm.

In the following, we empirically investigate this behaviour. We evaluate the performance of different \CBPA\ algorithms from the literature and confirm our theoretical predictions with  empirical evidence. We consider the following \CBPA\ algorithms:
\begin{itemize}
    \item \texttt{GraNd} \citeps{datadiet2021}: with this method, given a datapoint $z = (x,y)$, the score function $g$ is given by $g(z) = -\E_{w_t} \|\nabla_w \ell(y_{out}(x, w_t), y)\|^2$, where $y_{out}$ is the model output and $w_t$ are the model parameters (e.g. the weights in a neural network) at training step $t$, and where the expectation is taken with respect to random initialization. \texttt{GraNd} selects datapoints with the highest average gradient norm (w.r.t to initialization).
    \item \texttt{Uncertainty} \citeps{Coleman2020Uncertainty}: in this method, the score function is designed to capture the uncertainty of the model in assigning a classification label to a given datapoint\footnote{\texttt{Uncertainty} is specifically designed to be used for classification tasks. This means that it is not well-suited for other types of tasks, such as regression.}. Different metrics can be used to measure this assignment uncertainty. We focus here on the entropy approach in which case the score function $g$ is given by $g(z) = \sum_{i=1}^C p_i(x) \log(p_i(x) )$ where $p_i(x)$ is the model output probability that $x$ belongs to class $i$. For instance, in the context of neural networks, we have  $(p_i(x))_{1 \leq i \leq C}= \texttt{Softmax}(y_{out}(x, w_t))$, where $t$ is the training step where data pruning is performed.
    \item \texttt{DeepFool} \citeps{ducoffe2018adversarial}: this method is rooted in the idea that in a classification problem, data points that are nearest to the decision boundary are, in principle, the most valuable for the training process. While a closed-form expression of the margin is typically not available, the authors use a heuristic from the literature on adversarial attacks to estimate the distance to the boundary. Specifically, given a datapoint $z=(x,y)$, perturbations are added to the input $x$ until the model assigns the perturbed input to a different class. The amount of perturbation required to change the label for each datapoint defines the score function in this case (see \citeps{ducoffe2018adversarial} for more details).
\end{itemize}
We illustrate the limitations of the \CBPA\ algorithms above for small $r$, and show that random pruning remains a strong baseline in this case. We further evaluate the performance of our calibration protocols and show that the signal parameter $\alpha$ can be tuned so that the calibrated \CBPA\ algorithms outperform random pruning for small $r$. We conduct our experiments using the following setup:
\begin{itemize}
    \item \textbf{Datasets and architectures.} Our framework is not constrained by the type of the learning task or the model. However, for our empirical evaluations, we focus on classification tasks with neural network models. We consider two image datasets: \cifar{10}~with \texttt{ResNet18} and \cifar{100}~with \texttt{ResNet34}.
More datasets and neural architectures are available in our code, which is based on that of \cite{guo2022deepcore}. The code to reproduce all our experiments will be soon open-sourced.

    \item \textbf{Training.} We train all models using SGD with a decaying learning rate schedule that was empirically selected following a grid search. This learning rate schedule was also used in \cite{guo2022deepcore}. More details are provided in \cref{sec:experimental_details}.
    
    \item \textbf{Selection epoch.} The selection of the coreset can be performed at differnt training stages. We consider data pruning at two different training epochs: $1$, and $5$. We found that going beyond epoch $5$ (e.g., using a selection epoch of $10$) has minimal impact on the performance as compared to using a selection epoch of $5$. 
    
    \item \textbf{Pruning methods.} We consider the following data pruning methods: \texttt{Random, GraNd, DeepFool, Uncertainty}. In addition, we consider the pruning methods resulting from applying the proposed exact calibration protocol to a given \CBPA\ algorithm. We use the notation \texttt{SBPA-CP1} to refer to the resulting method. For instance, \texttt{DeepFool-CP1} refers to the method resulting from applying (EC) to \texttt{DeepFool}.
    
\end{itemize}

\begin{figure}[h]
\centering
\begin{subfigure}[b]{0.49\textwidth}
\includegraphics[width=\textwidth]{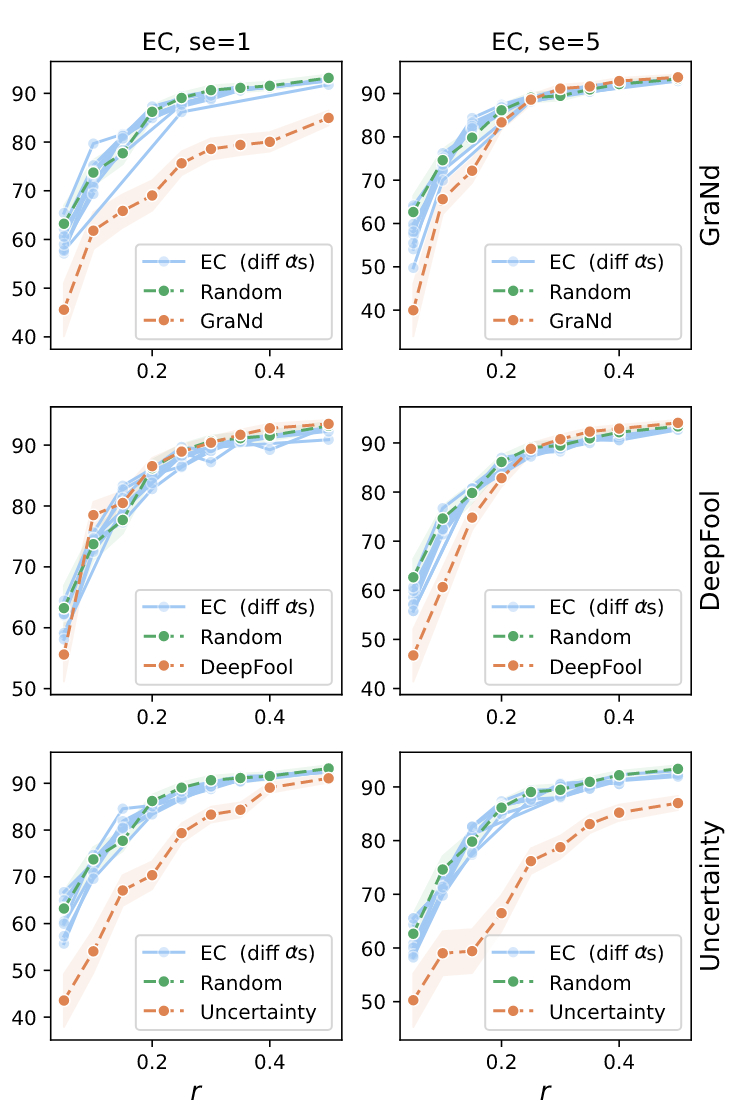}
\caption{ResNet18 on \cifar{10}}
\end{subfigure}
\hfill
\begin{subfigure}[b]{0.49\textwidth}
\includegraphics[width=\textwidth]{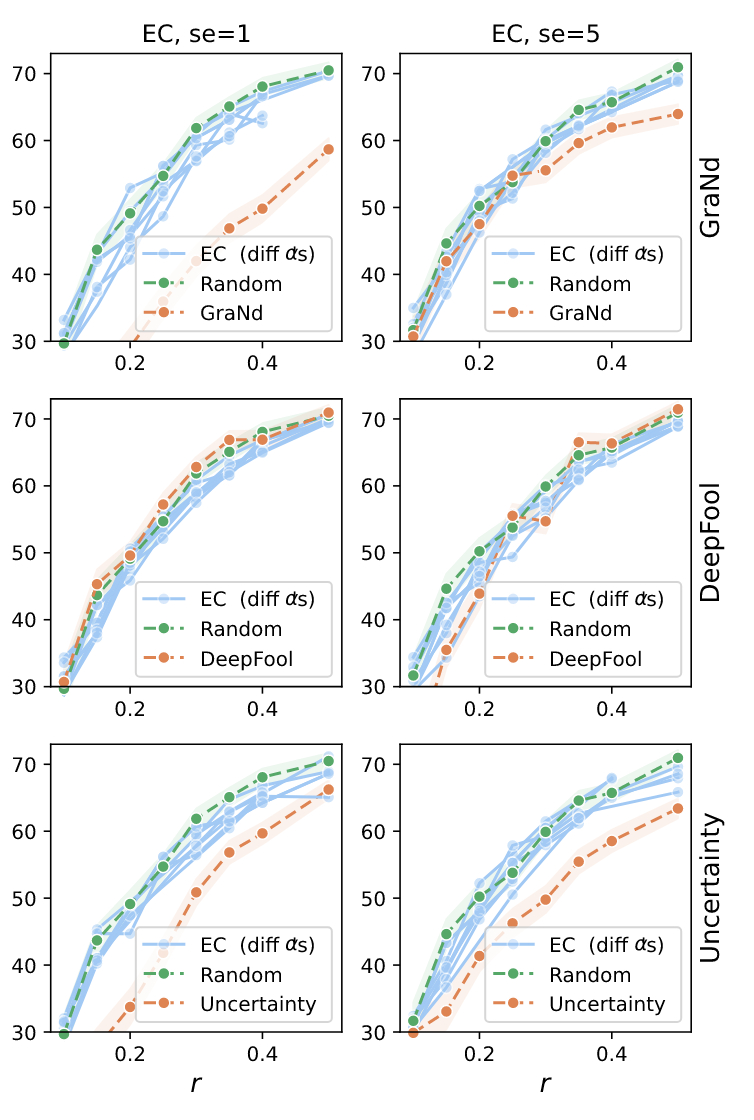}
\caption{ResNet34 on \cifar{100}}
\end{subfigure}
\caption{Test accuracy for different pruning methods, fractions $r$, signal parameters $\alpha$, and selection epochs ($se=1$ or $5$). Confidence intervals  based on 3 runs are shown.}
\label{fig:CIFAR10_CIFAR100}
\end{figure}

\noindent \textbf{Poor performance of \CBPA\ in the high compression regime:} \cref{fig:CIFAR10_CIFAR100} shows the results of the data pruning methods described above with \texttt{ResNet18} on \texttt{CIFAR10} and  \texttt{ResNet34} on \texttt{CIFAR100}. As expected, we observe a consistent decline in the performance of the trained model when the compression ratio $r$ is small, typically in the region $r < 0.3$. More importantly, we observe that \CBPA\ methods (\grand, \deepfool, \uncertainty in orange) perform consistently worse than \random~pruning (in green), confirming our hypothesis. We also observe that amongst the three \CBPA\ methods, \deepfool~is generally the best in the region of interest of $r$ and competes with random pruning when the subset selection is performed at training epoch $1$. We noticed that in that setting \deepfool\ is close to random pruning.

\noindent \textbf{Effect of the calibration protocol} Our proposed calibration protocol aim to correct the bias by injecting some randomness in the selection process and keeping (on average) only a fraction $\alpha$ of the \CBPA\ method. We notice that the calibration protocol applied to different \CBPA\ consistently boosts the performance in the high compression regime, as can be observed in \cref{fig:CIFAR10_CIFAR100}. \cref{fig:best_alpha} shows that the calibrated \CBPA\ perform better than \random~pruning for specific choices of $\alpha$. However, the difference is not always significant. Besides, finding the optimal proportion of signal $\alpha$ can be difficult in practice. 

\begin{figure}[h]
\centering
\begin{subfigure}[b]{0.49\textwidth}
\includegraphics[width=\textwidth]{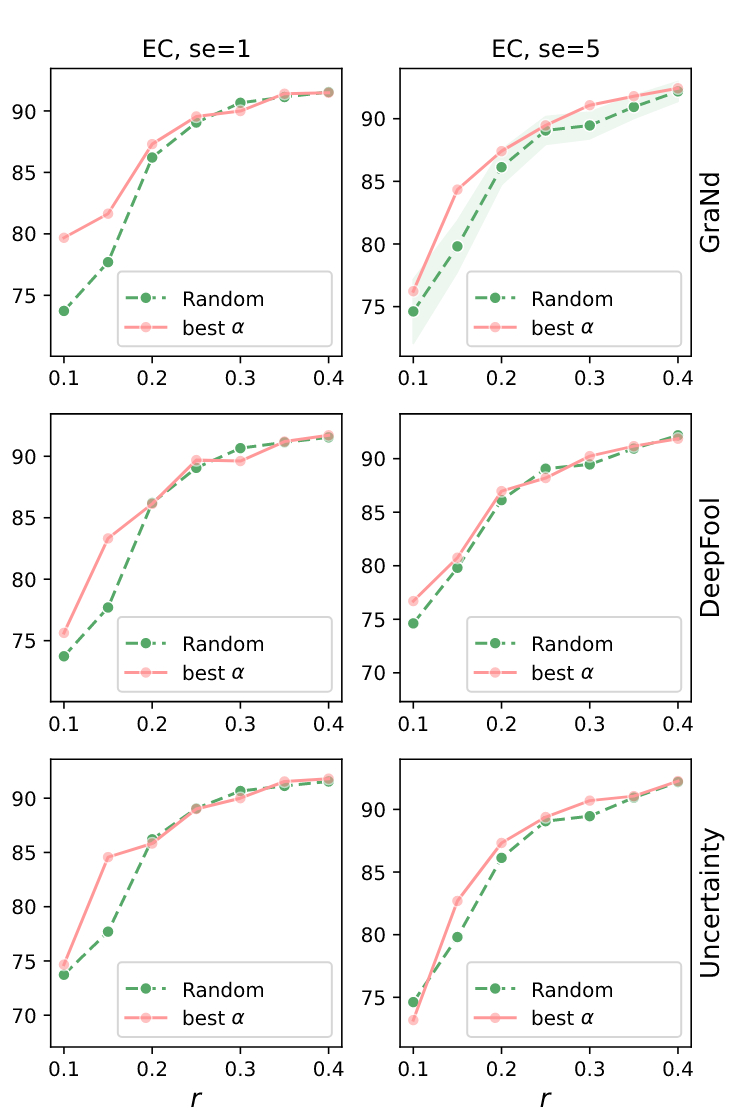}
\caption{ResNet18 on \cifar{10}}
\end{subfigure}
\hfill
\begin{subfigure}[b]{0.49\textwidth}
\includegraphics[width=\textwidth]{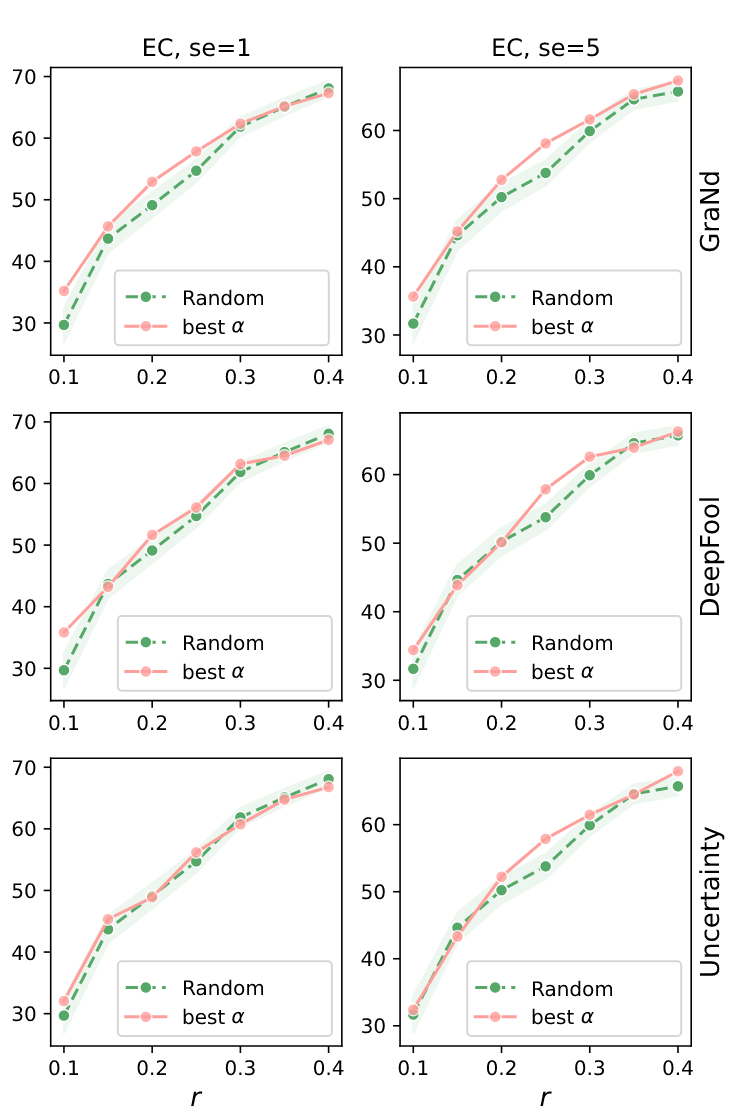}
\caption{ResNet34 on \cifar{100}}
\end{subfigure}
\caption{Test accuracy for different pruning methods, fractions $r$, and selection epochs ($se=1$ or $5$). Best $\alpha$ used for calibration. \rev{Different values of $r$ may have different $\alpha$ values.}}
\label{fig:best_alpha}
\end{figure}

\section{Related work}\label{sec:lit_review}
As we mentioned in the introduction. The topic of coreset selection has been extensively studied in classical machine learning and statistics   \citeps{welling2009herding,chen2012super,feldman2011scalable,huggins2016coresets,campbell2019automated}. These classical approaches were either model-independent or designed for simple models (e.g. linear models). The recent advances in deep learning has motivated the need for new adapted methods for these deep models. Many approaches have been proposed to adapt to the challenges of the deep learning context. We will cover existing methods that are part of our framework (\CBPA\ algorithms) and others that fall under different frameworks (non-\CBPA\ algorithms).

\subsection{Score-based methods}
These can generally be categorized into four groups:
\begin{enumerate}
    \item Geometry based methods: these methods are based on some geometric measure in the feature space. The idea is to remove redundant examples in this feature space (examples that similar representations). Examples include Herding (\citeps{chen2012super}) which aims to greedily select examples by ensuring that the centers of the coreset and that of the full dataset are close. A similar idea based on the K-centroids of the input data was used in \citeps{sener2017active, agarwal2020contextual, sinha2020small}. 
    \item Uncertainty based methods: the aim of such methods is to find the most ``difficult" examples, defined as the ones for which the model is the least confident. Different uncertainty measures can be used for this purpose, see \citeps{coleman2019selection} for more details. 
    \item Error based methods: the goal is to find the most significant examples defined as the ones that contribute the most to the loss. In \cite{datadiet2021}, the authors consider the second norm of the gradient as a proxy to find such examples. Indeed, examples with the highest gradient norm tends to affect the loss more significantly (a first order Taylor expansion of the loss function can explain the intuition behind this proxy). This can be thought of as a relaxation of a Lipschitz-constant based pruning algorithm that was recently introduced in \cite{ayed2022the}. Another method consider keeping the most forgettable examples defined as those that change the most often from being well classified to being mis-classified during the course of the training (\cite{toneva2018empirical}). Other methods in this direction consider a score function based on the relative contribution of each example to the total loss over all training examples (see \cite{bachem2015coresets, munteanu2018coresets}).
    \item Decision boundary based: although this can be encapsulated in uncertainty-based methods, the idea behind these methods is more specific. The aim is to find the examples near the decision boundary, the points for which the prediction has the highest variation (e.g. with respect to the input space, \cite{ducoffe2018adversarial,margatina2021active}).

\end{enumerate}

\subsection{Non-\CBPA\ methods}
Other methods in the literature select the coreset based on other desirable properties. For instance, one could argue that preserving the gradient is an important feature to have in the coreset as it would lead to similar minima \citeps{gradmatch, mirzasoleiman2020coresets}. Other work considered the problem of corset selection as a two-stage optimization problem where the subset selection can be seen also as an optimization problem \citeps{killamsetty2021glister, killamsetty2021retrieve}. Other methods consider conisder the likelihood and its connection with submodular functions in order to select the subset \citeps{kaushal2021prism, kothawade2021similar}.

It is worth noting that there exist other approaches to data pruning that involve synthesizing a new dataset with smaller size that preserves certain desired properties, often through the brute-force construction of samples that may not necessarily represent the original data. These methods are known as data distillation methods (see e.g. \cite{wang2018dataset, zhao2021DC, zhao2021DSA}) However, these methods have significant limitations, including the difficulty of interpreting the synthesized samples and the significant computational cost. The interpretability issue is particularly a these approaches to use in real-world applications, particularly in high-stakes fields such as medicine and financial engineering.

\section{Discussion and Limitations}\label{sec:discussion}

\subsection{Extreme scenarios}
Our framework provides insights in the case where both $n$ and $r n$ are large. As a result, there are cases where this framework is not applicable. We call these cases extreme scenarios.
\paragraph{Extreme scenario 1: small $n$.} Our asymptotic analysis can provide insights when a sufficient number of samples are available. In the scarce data regime (small $n$), our theoretical results may not accurately reflect the impact of pruning on the loss function. It is worth noting, however, that this case is generally not of practical interest as there is no benefit to data pruning when the sample size is small.

\paragraph{Extreme scenario 2: large $n$ with $r = \Theta(n^{-1})$).} In this case, the ``effective" sample size after pruning is $r\,n = \Theta(1)$. Therefore, we cannot glean useful information from the asymptotic behavior of $\loss^{\palg, r}_n$ in this case. It is also worth noting that the variance of $\loss^{\palg, r}_n$ does not vanish in the limit $n \to \infty, r \to 0$ with $r n = \gamma$ fixed, and therefore the empirical mean does not converge to the asymptotic mean.

\subsection{Asymptotic results:}  \cref{prop:dense_family_consistent} and \ref{prop:dense_family_valid}, and the subsequent corollaries are asymptotic results. They essentially reveal the limitations of \CBPA\ for "large enough models". We decided to take this direction to get results that are as general as possible, showing that the discussed limitations will appear in most situations. \cref{prop:dense_family_consistent} and \ref{prop:dense_family_valid} apply to any configuration from a large variety of classes of models, SBPAs, generating processes, and loss functions. The theory readily covers realistic architectures illustrated by \cref{corr:wide_nn}, \ref{corr:deep_nn} and \ref{corr:conv_nn} (in the appendix) that cover wide NN, deep neural NN, and convolutional NN with enough filters. However, these limitations could appear for unrealistically large models. We acknowledge this drawback of the proposed theory and address it in two ways. First, we experimentally show that for the usual settings (ResNet on Cifar), we already observe this significant drop in the performance of SBPAs compared to random pruning. This also aligns with other empirical observations (\cite{guo2022deepcore}). In addition, we provide \cref{thm:main_thm} to cover the cases where the class of functions is potentially not rich enough; even in that case, for any SBPA, one can find datasets for which the pruning algorithm will fail for small compression ratios.
Besides, to our knowledge, the lines of work that allow one to derive explicit bounds for Neural Networks usually require specific and often overly simplistic architectures (typically one hidden layer feed-forward). In our context, we additionally expect similar strong restrictions to be required for the SBPAs and generating processes. This could wrongfully lead practitioners to consider that using a different \CBPA\ or class of models than the one for which one could derive quantitative results would circumvent the limitations.

\subsection{Overparameterized Regime}
The scenario in which both the number of parameters $p$ and the sample size $n$ tend to infinity (e.g. with a constant ratio $\gamma = p/n$) holds practical significance. Our framework does not cover this case and we would like to elucidate the main point at which our proof machinery encounters challenges under this scenario. The main issue resides in understanding the asymptotic behaviour of SBPA algorithms in this context, particularly the extension of Proposition 3. While we can establish concentration with $p$ constant and $n$ growing large, achieving this with both $n$ and $p$ going to infinity, especially when the underlying model is a neural network, becomes generally intractable. Nonetheless, under certain supplementary assumptions, it remains feasible to demonstrate concentration.

Recall that in Proposition 3, we essentially use a variation of the law of large numbers to show the convergence in the infinite sample size limit ($n\to \infty$), while $p$ (and consequently, $w$) is fixed. However, in the scenario where both $p$ and $n$ tend to infinity, the dependency of $w$ on $p$ complicates matters, rendering the used variation of the LLN inapplicable. An essential condition under such circumstances becomes the convergence of $w$ in a certain sense as well, as $p \to \infty$. For this purpose, a pertinent tool is LLN for Triangular Arrays, which takes the shape of:

\textit{Consider a Triangular Array $(X_{n,i})_{i\in [n], n\geq 1}$ of random variables such that for each $n$, the variables $(X_{n,i})_{i\in[n]}$ are iid with mean $\mu_n$. Then, under some assumptions (bounded second moments) we have}
$$
n^{-1}\left(\sum_{i=1}^n X_{n,i} - \mu_n\right) \to \infty.
$$

However, note that in our case, the terms $X_{n,i} = \ell(y_{out}(X_i; w_p), Y_i) 1_{\{i \textrm{ is accepted}\}}$ are not necessarily independent, and therefore more advanced LLN for trinagular arrays should be proven and used. We leave this for future work.

\subsection{Pruning Time Vs Training Time} 
In some cases, the pruning procedure might be compute-intensive, and requires more resources than the actual training with the full dataset. This is the case when, for instance, the pruning criterion depends on second order geometry (Hessian etc) and/or multiple perturbations of some quantity (DeepFool), or pruning is performed in multi-shot settings (dynamical pruning). In this paper, we considered pruning criteria that can be performed ``one shot'' and rely on criteria that involve either gradients (GraNd) or network outputs (Uncertainty) or perturbations of the network output (DeepFool). For GraNd and Uncertainty pruning, the pruning time is typically less than the time required for 1 epoch of training. For DeepFool however, the pruning time might be comparable to 10 epochs of training. 

\bibliography{main}
\bibliographystyle{tmlr}

\section{Acknowledgement}
We would like to thank the authors of DeepCore project (\cite{guo2022deepcore}) for open-sourcing their excellent code\footnote{The code by \cite{guo2022deepcore} is available at \texttt{https://github.com/PatrickZH/DeepCore}.}. The high flexibility and modularity of their code allowed us to quickly implement our calibration protocols on top of existing \CBPA\ algorithms.

\appendix
\section{Proofs}\label{sec:proofs}

\subsection{Proofs of \cref{sec:setup}}
Propositions \ref{prop:caract_valid} and \ref{prop:convergence_minima} are built on the following lemma.
\begin{lemma}\label{lemma:carac_dist_to_opt}
    Let $\pi$ be a distribution on $\dataset$ and $(w_n)_n$ a sequence of parameters in $\param_\theta$ satisfying
    $$ \mathbb{E}_\pi \ell(y_{out}(X; w_n), Y) \rightarrow \min_{w\in \param_\theta}\mathbb{E}_\pi \ell(y_{out}(X; w), Y).$$
    Then, it comes that
    $$d(w_n, \param^*_\theta(\pi)) \rightarrow 0. $$
\end{lemma}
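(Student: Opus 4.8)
The plan is to use a standard compactness-plus-continuity argument, i.e. an $\arg\min$ stability result specialized to the fixed objective $\loss_\pi(w) := \E_\pi \ell(y_{out}(X;w), Y)$. First I would record the structural facts that make this objective well-behaved. Since $\mathcal{X}$, $\mathcal{Y}$ and $\param_\theta$ are compact and both $\ell$ and $y_{out}$ are continuous, the map $(x,y,w) \mapsto \ell(y_{out}(x;w), y)$ is continuous on a compact set, hence bounded; by dominated convergence the function $w \mapsto \loss_\pi(w)$ is continuous on $\param_\theta$. Combined with compactness of $\param_\theta$, this guarantees that $m := \min_{w \in \param_\theta} \loss_\pi(w)$ is attained, so that $\param^*_\theta(\pi) = \{w \in \param_\theta : \loss_\pi(w) = m\}$ is nonempty; being the preimage of a closed set under a continuous map, it is also closed, hence compact. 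In particular $d(\cdot, \param^*_\theta(\pi))$ is a well-defined continuous function.

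Second, I would argue by contradiction. Suppose $d(w_n, \param^*_\theta(\pi)) \not\to 0$. Then there exist $\varepsilon > 0$ and a subsequence $(w_{n_k})$ with $d(w_{n_k}, \param^*_\theta(\pi)) \geq \varepsilon$ for all $k$. By compactness of $\param_\theta$ I can extract a further subsequence (not relabeled) converging to some $w_\infty \in \param_\theta$. Continuity of the distance function then gives $d(w_\infty, \param^*_\theta(\pi)) \geq \varepsilon > 0$, so $w_\infty \notin \param^*_\theta(\pi)$, i.e. $\loss_\pi(w_\infty) > m$.

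Third, I would derive the contradiction from the hypothesis. By continuity of $\loss_\pi$ we have $\loss_\pi(w_{n_k}) \to \loss_\pi(w_\infty)$ along this subsequence; but the hypothesis states $\loss_\pi(w_n) \to m$, so in particular the subsequence converges to $m$. Uniqueness of limits forces $\loss_\pi(w_\infty) = m$, contradicting $\loss_\pi(w_\infty) > m$. Hence $d(w_n, \param^*_\theta(\pi)) \to 0$, as claimed.

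The argument is essentially routine; the only point requiring minor care is the continuity of $\loss_\pi$, where one must justify passing the limit inside the expectation. This is immediate here because the integrand is continuous in $w$ for each fixed $(x,y)$ and uniformly bounded (being a continuous function on the compact set $\mathcal{X} \times \mathcal{Y} \times \param_\theta$), so dominated convergence applies with a constant dominating function. No other step presents a genuine obstacle.
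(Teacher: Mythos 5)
Your proof is correct and takes essentially the same route as the paper's: a contradiction argument that extracts a convergent subsequence via compactness of $\param_\theta$, uses continuity of the distance function to place the limit $w_\infty$ outside $\param^*_\theta(\pi)$, and then uses continuity of $\loss_\pi$ (justified by dominated convergence) to force $\loss_\pi(w_\infty)$ to equal the minimum. The only difference is cosmetic: you additionally spell out that $\param^*_\theta(\pi)$ is nonempty and compact, which the paper leaves implicit.
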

\begin{proof}
    Denote $\loss_\pi$ the function from $\param_\theta$ to $\mathbb{R}$ defined by
    $$ \loss_\pi (w) = \mathbb{E}_\pi \ell(y_{out}(X; w), Y).$$
    Notice that under our assumptions, the dominated convergence theorem gives that $\loss_\pi$ is continuous.
    This lemma is a simple consequence of the continuity of $\loss_\pi$ and the compacity of $\param_\theta$. Consider a sequence $(w_n)$ such that 
    $$ \loss_\pi(w_n) \rightarrow \min_{w\in\param_\theta}\loss_\pi(w).$$
    We can prove the lemma by contradiction. Consider $\epsilon > 0$ and assume that there exists infinitely many indices $n_k$ for which $d\Big(w_{n_k},  \mathcal{W}^*_\theta(\pi) \Big) > \epsilon.$ Since $\param_\theta$ is compact, we can assume that $w_{n_k}$ is convergent (by considering a subsequence of which if needed), denote $w_\infty\in\param_\theta$ its limit. The continuity of $d$ then gives that $d\Big(w_\infty,  \mathcal{W}^*_\theta(\pi) \Big) \geq \epsilon$, and in particular 
    $$w_\infty \not \in \mathcal{W}^*_\theta(\pi) = \textup{argmin}_{w\in\param_\theta}\loss_\pi(w).$$ 
    But since $\loss_\pi$ is continuous, the initial assumption on $(w_n)$ translates to
    $$ \min_{w\in\param_\theta}\loss_\pi(w) = \lim_k  \loss_\pi(w_{n_k}) = \loss_\pi(w_\infty),$$
    concluding the proof.
\end{proof}

\textbf{Proposition \ref{prop:caract_valid}.}
    \emph{A pruning algorithm $\palg$ is valid at a compression ratio $r\in (0,1]$ if and only if
    $$ d\Big(w_n^{\mathcal{A}, r},  \mathcal{W}^*_\theta(\mu) \Big) \rightarrow 0\ a.s.$$
    where 
    $\mathcal{W}^*_\theta(\mu) = \textup{argmin}_{w \in \param_\theta} \loss(w) \subset \mathcal{W}_\theta$ and $d\Big(w_n^{\mathcal{A}, r},  \mathcal{W}^*_\theta(\mu) \Big)$ denotes the euclidean distance from the point $w_n^{\mathcal{A}, r}$ to the set $\mathcal{W}^*_\theta(\mu)$.}
\medskip

\begin{proof}
    This proposition is a direct consequence of \cref{lemma:carac_dist_to_opt}. Consider a valid pruning algorithm $\mathcal{A}$, a compression ratio $r$ and a sequence of observations $(X_k, Y_k)$ such that 
    $$ \loss(w_n^{\mathcal{A}, r}) \rightarrow \min_{w\in\param_\theta}\loss(w).$$
    We can apply \cref{lemma:carac_dist_to_opt} on the sequence $(w_n^{\mathcal{A}, r})$ with the distribution $\pi = \mu$ to get the result.
\end{proof}

\textbf{Proposition \ref{prop:convergence_minima}.}
\emph{Let $\mathcal{A}$ be a pruning algorithm and $r\in (0, 1]$ a compression ratio. Assume that there exists a probability measure $\nu_r$ on $\dataset$ such that
\begin{equation}\label{eq:app_condition_general}\tag{\ref{eq:condition_general}}
\forall w \in \mathcal{W}_\theta,\ \loss_n^{\mathcal{A}, r} (w) \rightarrow \mathbb{E}_{\nu_r} \ell(y_{out}(X; w), Y)\ a.s. 
\end{equation}
Then, denoting $\mathcal{W}^*_\theta(\nu_r) = \textup{argmin}_{w \in \param_\theta} \E_{\nu_r} \ell(y_{out}(X; w), Y) \subset \mathcal{W}_\theta$, we have that
    $$d\Big(w_n^{\mathcal{A}, r},  \mathcal{W}^*_\theta(\nu_r) \Big) \rightarrow 0\ a.s.$$
}
\medskip
\begin{proof}
    Leveraging \cref{lemma:carac_dist_to_opt}, it is enough to prove that 
    $$  \E_{\nu_r} \ell(y_{out}(X; w_n^{\mathcal{A}, r}), Y) - \min_{w \in \param_\theta} \E_{\nu_r} \ell(y_{out}(X; w), Y) \rightarrow 0\ a.s. $$
    To simplify the notations, we introduce the function $f$ from $\dataset\times\param_\theta$ to $\mathbb{R}$ defined by
    $$f(z, w) = \ell(y_{out}(x; w), y),$$
    where $z=(x, y).$ Since $\mathcal{W}_\theta$ is compact, we can find $w^* \in \mathcal{W}_\theta$ such that $\E_{\nu_r}[f(z,w^*)] = \min_w \E_{\nu_r}[f(z,w)] $. It comes that
    \begin{eqnarray*}
        0 &\leq & \E_{\nu_r}[f(z,w_n^{\mathcal{A}, r})] -\E_{\nu_r}[f(z,w^*)] \\
        &\leq & \E_{\nu_r}[f(z,w_n^{\mathcal{A}, r})] -  \frac{1}{rn} \sum_{z \in \palg(\dataset_n, r)} f(z,w_n^{\mathcal{A}, r}) \\
        &+& \frac{1}{rn} \sum_{z \in \palg(\dataset_n, r)} f(z,w_n^{\mathcal{A}, r}) - \frac{1}{rn} \sum_{z \in \palg(\dataset_n, r)} f(z,w^*) \\
        &+& \frac{1}{rn} \sum_{z \in \palg(\dataset_n, r)} f(z,w^*) -  \E_{\nu_r}[f(z,w^*)]
    \end{eqnarray*}
    The last term converges to zero almost surely by assumption. By definition of $w_n^{\mathcal{A}, r}$, the middle term is non-positive. It remains to show that the first term also converges to zero. With this, we can conclude that $\lim_n \E_{\nu_r}[f(z,w_n^{\mathcal{A}, r})] -\E_{\nu_r}[f(z,w^*)] = 0$
    
    To prove that the first term converges to zero, we use the classical result that  if every subsequence of a sequence $(u_n)$ has a further subsequence that converges to $u$, then the sequence $(u_n)$ converges to $u$. Denote
    $$ u_n = \E_{\nu_r}[f(z,w_n^{\mathcal{A}, r})] -  \frac{1}{rn} \sum_{z \in \palg(\dataset_n, r)} f(z,w_n^{\mathcal{A}, r}).$$
    By compacity of $\mathcal{W}_\theta$, from any subsequence of $(u_n)$ we can extract a further subsequence with indices denoted $(n_k)$ such that $w^*_{n_k}$ converges to some $w_\infty \in \mathcal{W}_\theta$. We will show that $(u_{n_k})$ converges to $0$. Let $\epsilon >0$, since $f$ is continuous on the compact set $\dataset\times\mathcal{W}_\theta$, it is uniformly continuous. Therefore, almost surely, for $k$ large enough, 
    $$ \sup_z |f(z, w^*_{n_k})-f(z, w_\infty)| \leq \epsilon.$$
    Denoting 
    $$ v_n = \E_{\nu_r}[f(z, w_\infty)] -  \frac{1}{rn} \sum_{z \in \palg(\dataset_n, r)} f(z, w_\infty), $$
    the triangular inequality then gives that, almost surely, for $k$ large enough
    $$ |u_{n_k}-v_{n_k}| \leq 2\epsilon.$$
    By assumption, the sequence $v_{n_k}$ converges to zero almost surely, which concludes the proof.
\end{proof}

We now prove \cref{corr:necessary_cond}, since \cref{corr:cons_sub_valid} is a straightforward application of \cref{prop:convergence_minima}.

\textbf{Corollary \ref{corr:necessary_cond}.}
    \emph{Let $\palg$ be any pruning algorithm and $r \in (0, 1]$, and assume that \eqref{eq:condition_general} holds for a given probability measure $\nu_r$ on $\dataset$.
    If $\palg$ is valid, then $\mathcal{W}^*_\theta(\nu_r) \cap \mathcal{W}^*_\theta(\mu) \not = \emptyset$; or, equivalently,
    $$ \min_{w\in \mathcal{W}^*_\theta(\nu_r)} \loss(w) = \min_{w \in \param} \loss(w).$$
}

\begin{proof}
This proposition is a direct consequence of \cref{prop:convergence_minima} that states that
$$ d(w_n^{\mathcal{A}, r},  \mathcal{W}^*_\theta(\nu_r)) \rightarrow 0\ a.s.$$
Since the  $\loss$ is continuous on the compact $\param_\theta$, it is uniformly continuous. Hence, for any $\epsilon >0$, we can find $\eta >0$ such that if $d(w,w') \leq \eta$, then $|\loss(w)-\loss(w')| \leq \epsilon$ for any parameters $w, w' \in \param_\theta$. Hence, for $n$ large enough, $d(w_n^{\mathcal{A}, r},  \mathcal{W}^*_\theta(\nu_r)) \leq \eta$, leading to
$$ \loss(w_n^{\mathcal{A}, r}) \geq \min_{w \in \mathcal{W}^*_\theta(r)} \loss(w) -\epsilon. $$
Since the algorithm is valid, we know that $\loss(w_n^{\mathcal{A}, r})$ converges to $\min_{w \in \param_\theta} \loss(w)$ almost surely. Therefore, for any $\epsilon >0$,
$$
\min_{w \in \param_\theta} \loss(w) \geq \min_{w \in \mathcal{W}^*_\theta(r)} \loss(w) -\epsilon.
$$
which concludes the proof. 
\end{proof}

\subsection{Proof of \cref{prop:asymp_behavior}}

\textbf{Proposition \ref{prop:asymp_behavior}. }[Asymptotic behavior of \CBPA]\\
\emph{Let $\palg$ be a \CBPA\ algorithm and let $g$ be its corresponding score function. Assume that $g$ is adapted, and consider a compression ratio $r \in (0,1)$. Denote by $q^r$ the $r^{th}$ quantile of the random variable $g(Z)$ where $Z \sim \mu$. Denote $A_r = \{ z \in \dataset\ |\ g(z) \leq q^r\}$. Almost surely, the empirical measure of the retained data samples converges weakly to $\nu_r = \frac{1}{r} \mu_{|A_r}$,
where $\mu_{|A_r}$ is the restriction of $\mu$ to the set $A_r$. In particular, we have that
$$
\forall w \in \mathcal{W}_\theta,\ \loss_n^{\mathcal{A}, r} (w) \rightarrow \mathbb{E}_{\nu_r} \ell(y_{out}(X; w), Y)\ a.s. 
$$}

\begin{proof}
    Consider $\mathcal{F}$ the set of functions $f: \dataset \rightarrow [-1, 1]$ that are continuous. We will show that 
    \begin{equation}\label{eq:weak_conv}
        \sup_{f\in\mathcal{F}} \left|  \frac{1}{|\palg(\dataset_n, r)|} \sum_{z \in \palg(\dataset_n, r)} f(z) - \frac{1}{r}\int_{A_r} f(z) \mu(z) dz \right| \rightarrow 0\ a.s.
    \end{equation}

    To simplify the notations, and since $\frac{|\palg(\dataset_n, r)|}{rn}$ converges to 1, we will assume that $rn$ is an integer. Denote $q^r_n$ the $(rn)^{th}$ ordered statistic of $\big(g(z_i)\big)_{i=1,...,n}$, and $q^r$ the $r^{th}$ quantile of the random variable $g(Z)$ where $Z \sim \mu$.

    We can upper bound the left hand side in equation \eqref{eq:weak_conv} by the sum of two random terms $A_n$ and $B_n$ defined by
    \begin{eqnarray*}
        B_n &=& \frac{1}{r} \sup_{f\in\mathcal{F}} \left|  \frac{1}{n} \sum_{z \in \dataset_n} f(z) \mathbb{I}_{g(z) \leq q^r_n} - \frac{1}{n} \sum_{z \in \dataset_n} f(z) \mathbb{I}_{g(z) \leq q^r}  \right| \\
        C_n &=& \frac{1}{r} \sup_{f\in\mathcal{F}} \left|  \frac{1}{n} \sum_{z \in \dataset_n} f(z) \mathbb{I}_{g(z) \leq q^r} - \int f(z) \mathbb{I}_{g(z) \leq q^r} \mu(z) dz \right|
    \end{eqnarray*}
    To conclude the proof, we will show that both terms converge to zero almost surely. 

    For any $f \in \mathcal{F}$, denoting $G_n$ the empirical cumulative density function (cdf) of $(g(z_i))$ and $G$ the cdf of $g(Z)$, we have that 
    \begin{eqnarray*}
        \left|  \frac{1}{n} \sum_{z \in \dataset_n} f(z) \mathbb{I}_{g(z) \leq q^r_n} - \frac{1}{n} \sum_{z \in \dataset_n} f(z) \mathbb{I}_{g(z) \leq q^r}  \right| 
        &\leq& \frac{1}{n} \sum_{z \in \dataset_n} |f(z)| \times \left| \mathbb{I}_{g(z) \leq q^r_n} - \mathbb{I}_{g(z) \leq q^r} \right| \\
        &\leq& \frac{1}{n} \sum_{z \in \dataset_n} \left| \mathbb{I}_{g(z) \leq q^r_n} - \mathbb{I}_{g(z) \leq q^r} \right| \\
        &\leq& \left| G_n(q^r_n) - G_n(q^r)\right| \\
        &=& \left| \frac{1}{r} - G_n(q^r)\right| \\
        &=& \left| G(q^r) - G_n(q^r)\right|.
    \end{eqnarray*}
    Therefore, $B_n \leq \textup{sup}_{t\in \mathbf{R}} |G(t)-G_n(t)|$ which converges to zero almost surely by the Glivenko-Cantelli theorem.
    
    Similarly, the general Glivenko-Cantelli theorem for metric spaces \citeps{varadarajan1958convergence} gives that almost surely,
    $$ \sup_{f\in\mathcal{F}} \left|  \frac{1}{n} \sum_{z \in \dataset_n} f(z) - \int f(z)\mu(z) dz \right| \rightarrow 0.$$
    Consider $k \geq 1$. Since $g$ is continuous and $\mathcal{D}$ is compact, the sets $A_{r(1-1/k)}$ and $\overline{A_r} = \dataset \setminus A_r$ are disjoint and closed subsets. Using Urysohn’s lemma (Theorem \ref{lemma:urysohn} in the Appendix), we can find $f_k \in \mathcal{F}$ such that $f_k(z) = 1$ if $z \in A_{r(1-1/k)}$ and $f_k(z) = 0$ if $z\in \overline{A}_r$. Consider $f \in \mathcal{F}$, it comes that
    \begin{eqnarray*}
        \left|  \frac{1}{n} \sum_{z \in \dataset_n} f(z) \mathbb{I}_{g(z) \leq q^r} - \int f(z) \mathbb{I}_{g(z) \leq q^r} \mu(z) dz \right| 
        &\leq& \left|  \frac{1}{n} \sum_{z \in \dataset_n} f\times f_k(z) - \int f\times f_k(z) \mu(z) dz \right| \\
        &+&  \frac{1}{n} \sum_{z \in \dataset_n} \mathbb{I}_{ q^{r(1-1/k)} \leq g(z) \leq q^r} \\
        &+& \int \mathbb{I}_{ q^{r(1-1/k)} \leq g(z) \leq q^r}  \mu(z) dz
    \end{eqnarray*}
    Hence, noticing that $f\times f_k \in \mathcal{F}$, we find that
    $$ C_n \leq \textup{sup}_{f\in\mathcal{F}} \left|  \frac{1}{n} \sum_{z \in \dataset_n} f(z) - \int f(z)\mu(z) dz \right| + |G_n(q^r)-G_n(q^{r(1-1/k)})| + \frac{r}{k}.$$
    We can conclude the proof by noticing that $|G_n(q^r)-G_n(q^{r(1-1/k)})|$ converges to $\frac{r}{k}$ and taking $k \rightarrow \infty$.   
\end{proof}

\subsection{Proof of \cref{prop:dense_family_consistent}}
In order to prove the theorem, we will need a few technical results that we state and prove first.
\begin{lemma}\label{lemma:approx_through_loss}
Consider a set of continuous functions $\mathcal{M}$ from $\mathcal{X}$ to $\mathcal{Y}$. Consider $\psi_0$ a function in the closure of $\mathcal{M}$ for the $\ell_\infty$ norm. Then for any $\epsilon >0$, there exists $\psi \in \mathcal{M}$ such that
$$ \textup{sup}_{x, y \in \dataset} \lVert \ell(\psi(x), y) - \ell(\psi_0(x), y) \rVert \leq \epsilon$$
\end{lemma}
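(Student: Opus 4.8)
The plan is to reduce everything to the uniform continuity of the loss on a compact domain. Since $\mathcal{Y}$ is compact, the product $\mathcal{Y}\times\mathcal{Y}$ is compact, and the continuous loss $\ell$ is therefore uniformly continuous there. First I would fix $\epsilon>0$ and extract from uniform continuity a modulus $\delta>0$ such that $\lVert a-a'\rVert\leq\delta$ implies $|\ell(a,y)-\ell(a',y)|\leq\epsilon$ for every $a,a',y\in\mathcal{Y}$. The point is that the same $\delta$ works simultaneously for all second arguments $y$, which is exactly what \emph{joint} uniform continuity provides (as opposed to mere continuity in the first variable).

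Before applying this, I need to check that $\psi_0$ actually takes values in $\mathcal{Y}$, so that the pairs $(\psi_0(x),y)$ lie in the domain $\mathcal{Y}\times\mathcal{Y}$ on which uniform continuity holds. This follows because $\psi_0$ lies in the $\ell_\infty$-closure of $\mathcal{M}$: it is a uniform limit of functions with range in $\mathcal{Y}$, and since $\mathcal{Y}$ is compact, hence closed, each value $\psi_0(x)$ remains in $\mathcal{Y}$. The same argument shows $\psi_0$ is continuous, although we will not need this.

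With the modulus $\delta$ in hand, I would use the definition of the closure to select $\psi\in\mathcal{M}$ satisfying $\sup_{x\in\mathcal{X}}\lVert\psi(x)-\psi_0(x)\rVert\leq\delta$. Then for an arbitrary pair $(x,y)\in\dataset$, applying the modulus with $a=\psi(x)$ and $a'=\psi_0(x)$ gives $|\ell(\psi(x),y)-\ell(\psi_0(x),y)|\leq\epsilon$, and taking the supremum over $(x,y)\in\dataset$ yields the claim. There is no genuine obstacle here: the only mild subtlety is securing a $\delta$ independent of $y$, which is handled by invoking uniform continuity on the full compact product $\mathcal{Y}\times\mathcal{Y}$ rather than continuity in the first coordinate alone, together with the verification that $\psi_0$ is $\mathcal{Y}$-valued.
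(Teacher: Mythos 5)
Your proof is correct and follows essentially the same route as the paper's: uniform continuity of $\ell$ on the compact product $\mathcal{Y}\times\mathcal{Y}$ yields a modulus $\delta$ valid uniformly in the second argument, and then one picks $\psi\in\mathcal{M}$ within $\delta$ of $\psi_0$ in the $\ell_\infty$ norm. Your version is in fact slightly more careful than the paper's, since you verify that $\psi_0$ is $\mathcal{Y}$-valued and state the modulus for perturbations of the \emph{first} argument (the one actually perturbed), whereas the paper's writeup states it for the second.
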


\begin{proof}
Since the loss $\ell$ is continuous on the compact $\mathcal{Y}\times \mathcal{Y}$, it is uniformly continuous. We can therefore find $\eta > 0$ such that for any $y_0, y, y' \in \mathcal{Y}$, if $\lVert y-y' \rVert \leq \eta$ then $\lVert \ell(y_0, y)-\ell(y_0, y') \rVert \leq \epsilon.$ We conclude the proof using by selecting any $\psi \in \mathcal{M}$ that is at a distance not larger than $\eta$ from $\psi_0$ for the $\ell_\infty$ norm.
\end{proof}

\begin{lemma}\label{lemma:pointwise_convergence}
Consider a \CBPA\ $\mathcal{A}$. Let $\mathcal{M}$ be a set of continuous functions from $\mathcal{X}$ to $\mathcal{Y}$. Consider $r \in (0,1)$ and assume that $\mathcal{A}$ is consistent on $\mathcal{M}$ at level $r$, i.e.
$$ \forall \psi \in \mathcal{M},\ \frac{1}{\left| \mathcal{A}(\mathcal{D}, r)\right|} \sum_{(x,y) \in \mathcal{A}(\mathcal{D}, r)} \ell (\psi(x), y) \rightarrow \mathbb{E}_{\mu} \ell(\psi(X), Y)\ a.s.$$
Let $\psi_\infty$ be any measurable function from $\mathcal{X}$ to $\mathcal{Y}$. If there exists a sequence of elements of $\mathcal{M}$ that converges point-wise to $\psi_\infty$, then
\begin{equation}\label{eq:pointwise_limit_consistent}
  \mathbb{E}_{\frac{1}{r}\mu_{|A_r}} \ell(\psi_\infty(X), Y) = \mathbb{E}_{\mu} \ell(\psi_\infty(X), Y).    
\end{equation}
In particular, if $\mathcal{M}$ has the universal approximation property, then \eqref{eq:pointwise_limit_consistent} holds for any continuous function.
\end{lemma}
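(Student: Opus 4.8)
The plan is to first pin down the identity \eqref{eq:pointwise_limit_consistent} on the continuous approximants $\psi_k \in \mathcal{M}$, using the consistency hypothesis together with \cref{prop:asymp_behavior}, and then to push it to the merely measurable limit $\psi_\infty$ by a bounded-convergence argument.

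First I would fix an arbitrary $\psi \in \mathcal{M}$. Since $\psi$ is continuous and $\dataset$ is compact, the map $(x,y)\mapsto \ell(\psi(x),y)$ is bounded and continuous, so the weak-convergence statement of \cref{prop:asymp_behavior} applies to it and yields that the empirical average over the retained samples converges almost surely to $\mathbb{E}_{\nu_r}\ell(\psi(X),Y)$, where $\nu_r = \frac{1}{r}\mu_{|A_r}$. On the other hand, the very same empirical average converges almost surely to $\mathbb{E}_\mu \ell(\psi(X),Y)$ by the consistency hypothesis on $\mathcal{M}$. Two deterministic almost-sure limits of a single sequence must coincide, hence
$$\mathbb{E}_{\frac{1}{r}\mu_{|A_r}}\ell(\psi(X),Y) = \mathbb{E}_\mu \ell(\psi(X),Y) \quad \textup{for all } \psi\in\mathcal{M}.$$

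Next I would take a sequence $(\psi_k)_k$ in $\mathcal{M}$ with $\psi_k \to \psi_\infty$ pointwise. Continuity of $\ell$ gives $\ell(\psi_k(x),y)\to \ell(\psi_\infty(x),y)$ for every $(x,y)\in\dataset$. The key structural point — and the reason the statement is not an immediate corollary of \cref{prop:asymp_behavior} — is that $\psi_\infty$ is only assumed measurable, so $\ell(\psi_\infty(\cdot),\cdot)$ need not be continuous and the weak-convergence result cannot be invoked for it directly. Instead I would transfer the identity through the limit $k\to\infty$: since $\mathcal{Y}$ is compact and $\ell$ is continuous, $\ell$ is bounded on $\mathcal{Y}\times\mathcal{Y}$, so the integrands $\ell(\psi_k(X),Y)$ are uniformly bounded; as both $\frac{1}{r}\mu_{|A_r}$ and $\mu$ are probability measures, the dominated convergence theorem (with a constant dominating function) applies on each side. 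Passing to the limit in the identity above for each $\psi_k$ then delivers $\mathbb{E}_{\frac{1}{r}\mu_{|A_r}}\ell(\psi_\infty(X),Y) = \mathbb{E}_\mu \ell(\psi_\infty(X),Y)$, which is exactly \eqref{eq:pointwise_limit_consistent}.

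Finally, for the \emph{in particular} clause I would observe that the universal approximation property furnishes, for any continuous $f:\mathcal{X}\to\mathcal{Y}$, a sequence in $\mathcal{M}$ converging to $f$ uniformly and hence pointwise, so the first part applies with $\psi_\infty = f$. I do not anticipate a genuine obstacle; the only step requiring care is the uniform bound on the integrand that legitimizes the bounded-convergence passage to the limit, and this is precisely where the compactness of $\mathcal{Y}$ (hence boundedness of $\ell$) is used.
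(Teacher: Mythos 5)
Your proposal is correct and follows essentially the same route as the paper's proof: establish the identity for each element of $\mathcal{M}$ by equating the two almost-sure limits (the consistency hypothesis on one side, the weak convergence of \cref{prop:asymp_behavior} on the other), then pass to the pointwise limit $\psi_\infty$ via dominated convergence using the boundedness of $\ell$ on the compact $\mathcal{Y}\times\mathcal{Y}$. The only difference is that you spell out the ``two deterministic a.s.\ limits must coincide'' step and the uniform-implies-pointwise reduction for the universal-approximation clause, both of which the paper leaves implicit.
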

\begin{proof}
Le $(\psi_k)_k$ be a sequence of functions in $\mathcal{M}$ that converges point-wise to $\psi_\infty$. Consider $k \geq 0$, since $\mathcal{A}$ is consistent and that $\psi_k$ is continuous and bounded, \cref{prop:asymp_behavior} gives that
\begin{equation*}
     \mathbb{E}_{\frac{1}{r}\mu_{|A_r}} \ell \left( \psi_k(X), Y \right) = \mathbb{E}_{\mu} \ell\left(\psi_k(X), Y\right).
\end{equation*}
Since $\ell$ is bounded, we can apply the dominated convergence theorem to both sides of the equation to get the final result.
\end{proof}

\rev{\cref{prop:dense_family_consistent_nolabel} hereafter proves the final result for \CBPA\ that do not depend on the labels. The proof of \cref{prop:dense_family_consistent} that follows essentially deals with the remaining case of \CBPA\ that depend on the labels.}

\begin{prop}\label{prop:dense_family_consistent_nolabel}
Let $\mathcal{A}$ be any \CBPA\ with an adapted score function $g$ satisfying 
$$ \exists \tilde{g}: \mathcal{X} \rightarrow \mathbb{R}_+,\ g(x, y) = \tilde{g}(x)\ \ a.s.$$ Assume that there exists two continuous functions $f_1$ and $f_2$ such that $$\mathbb{E}_{\mu} \ell(f_1(X), Y) \not = \mathbb{E}_{\mu} \ell(f_2(X), Y).$$
If $\cup_\theta \mathcal{M}_\theta$ has the universal approximation property, then there exist hyper-parameters $\theta \in \Theta$ for which the algorithm is not consistent.
\end{prop}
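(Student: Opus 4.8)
The plan is to argue by contradiction: fix any compression level $r\in(0,1)$ and suppose $\mathcal{A}$ were consistent at level $r$ (in the sense of \cref{def:consistent_alg}) for \emph{every} hyper-parameter $\theta\in\Theta$. I will then contradict the hypothesis $\mathbb{E}_\mu\ell(f_1(X),Y)\neq\mathbb{E}_\mu\ell(f_2(X),Y)$, which forces some $\theta$ to be non-consistent. The first step is to record the special geometry of the acceptance region: since $g(x,y)=\tilde g(x)$ almost surely, the quantile $q^r$ of $g(Z)$ is the $r$-th quantile of $\tilde g(X)$, and \cref{prop:asymp_behavior} gives the limiting measure $\nu_r=\tfrac1r\mu_{|A_r}$ with $A_r=\mathcal{X}_r\times\mathcal{Y}$, where $\mathcal{X}_r=\{x:\tilde g(x)\le q^r\}$. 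Writing $I(\psi)=\mathbb{E}_\mu\ell(\psi(X),Y)$, $I_r(\psi)=\mathbb{E}_\mu[\ell(\psi(X),Y)\ind_{X\in\mathcal{X}_r}]$ and $I_r^c(\psi)=I(\psi)-I_r(\psi)$, the consistency identity $\mathbb{E}_{\nu_r}\ell(\psi(X),Y)=I(\psi)$ from \eqref{eq:consistency_cbpa} reads simply $I_r(\psi)=r\,I(\psi)$.

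Next I would observe that, combined with \cref{prop:asymp_behavior}, consistency at every $\theta$ means $I_r(\psi)=r\,I(\psi)$ holds for every model $\psi\in\cup_\theta\mathcal{M}_\theta$. \cref{lemma:pointwise_convergence} then promotes this, via the universal approximation property, to all continuous $\psi$ and, crucially, to any $\psi$ that is a pointwise limit of elements of $\cup_\theta\mathcal{M}_\theta$. I apply this to three maps: $f_1$, $f_2$, and the \emph{spliced} function $\psi^*$ equal to $f_1$ on $\mathcal{X}_r$ and to $f_2$ on $\mathcal{X}_r^c$. Because $\psi^*$ coincides with $f_1$ on $\mathcal{X}_r$ and with $f_2$ off it, $I_r(\psi^*)=I_r(f_1)$ and $I_r^c(\psi^*)=I_r^c(f_2)$. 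Plugging $I(\psi^*)=I_r(f_1)+I_r^c(f_2)$ into $I_r(\psi^*)=r\,I(\psi^*)$ gives $(1-r)I_r(f_1)=r\,I_r^c(f_2)$; substituting the two single-function identities $I_r(f_1)=r\,I(f_1)$ and $I_r^c(f_2)=(1-r)I(f_2)$ collapses this, after dividing by $r(1-r)\neq0$, to $I(f_1)=I(f_2)$ — the desired contradiction.

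The main obstacle is the final ingredient: justifying that the consistency identity applies to the discontinuous $\psi^*$, i.e. producing a sequence in $\cup_\theta\mathcal{M}_\theta$ converging pointwise to $\psi^*$ so that \cref{lemma:pointwise_convergence} applies. I would first build continuous interpolants $\chi_k=(1-\rho_k)f_1+\rho_k f_2$, where $\rho_k(x)=\min\!\big(1,\max(0,k(\tilde g(x)-q^r))\big)$ is a continuous clamp of the (continuous) function $\tilde g$ that vanishes on $\mathcal{X}_r$ and equals $1$ once $\tilde g(x)\ge q^r+1/k$. Since $\mathcal{Y}=[0,1]$ is convex these stay $\mathcal{Y}$-valued, and $\chi_k\to\psi^*$ pointwise everywhere: each $x$ with $\tilde g(x)\neq q^r$ is eventually on one fixed side of the threshold, while boundary points $\tilde g(x)=q^r$ lie in $\mathcal{X}_r$ and are assigned $f_1$ throughout. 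A diagonal argument using universal approximation then yields $\psi_k\in\cup_\theta\mathcal{M}_\theta$ with $\|\psi_k-\chi_k\|_\infty\le 1/k$, hence $\psi_k\to\psi^*$ pointwise, as required.

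A secondary point to handle carefully is the boundary $\{\tilde g=q^r\}$: it is $\mu_X$-null, since the adapted property of $g$ forces $\lambda(g^{-1}(\{q^r\}))=0$ and hence $\lambda(\tilde g^{-1}(\{q^r\}))=0$, while the input marginal is absolutely continuous. Thus the boundary contributes nothing to any of the integrals above, so even if the pointwise convergence degraded to almost-everywhere convergence there, the dominated-convergence step inside \cref{lemma:pointwise_convergence} (valid because $\ell$ is bounded on the compact $\mathcal{Y}\times\mathcal{Y}$) would still deliver the identity for $\psi^*$. With that identity in hand, the algebraic collapse of the previous paragraph completes the contradiction.
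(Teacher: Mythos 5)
Your proposal is correct and follows essentially the same route as the paper's proof: assume consistency, apply \cref{lemma:pointwise_convergence} to $f_1$, $f_2$ and to the spliced function ($f_1$ on $\mathcal{X}_r$, $f_2$ off it), exhibit the spliced function as a pointwise limit of models via continuous interpolants plus universal approximation, and derive a contradiction from $\mathbb{E}_\mu\ell(f_1(X),Y)\neq\mathbb{E}_\mu\ell(f_2(X),Y)$ --- the same algebra the paper uses, merely rearranged to contradict the hypothesis rather than the lemma. The only cosmetic difference is that your explicit clamp $\rho_k(x)=\min\bigl(1,\max(0,k(\tilde g(x)-q^r))\bigr)$ replaces the paper's Urysohn-lemma function separating $\mathcal{X}_r$ from $\{\tilde g\geq q^{r+\epsilon_k}\}$, which is the same idea instantiated concretely (and even gives pointwise convergence everywhere, making your boundary remark superfluous).
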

\begin{proof}
Consider a compression ratio $r \in (0,1)$. We will prove the result by means of contradiction. Assume that the \CBPA\ is consistent on $\cup_\theta \mathcal{M}_\theta$. From the universal approximation property and \cref{lemma:pointwise_convergence}, we get that
$$ \frac{1}{r} \mathbb{E}_{\mu_{|A_r}} \ell \left( f_1(X), Y \right) = \mathbb{E}_{\mu} \ell\left(f_1(X), Y\right),$$
from which we deduce that 
\begin{eqnarray}
\mathbb{E}_{\mu} \Big[ \ell\left(f_1(X), Y\right) \mathbb{I}(Z \in A_r) \Big] &=& r\ \mathbb{E}_{\mu} \ell\left(f_1(X), Y\right)\label{eq:def_alternative} \\
\mathbb{E}_{\mu} \Big[ \ell\left(f_1(X), Y\right) \mathbb{I}(Z \in \dataset \setminus A_r) \Big] &=& (1-r)\ \mathbb{E}_{\mu} \ell\left(f_1(X), Y\right) 
\end{eqnarray}
and similarly for $f_2$. 

Notice that since the score function $g$ does not depend on $Y$, there exists $\mathcal{X}_r\subset \mathcal{X}$ such that $A_r = \mathcal{X}_r \times \mathcal{Y}$. Consider the function defined by
$$f: x \mapsto f_1(x)\mathbb{I}(x\in \mathcal{X}_r) + f_2(x)\left(1-\mathbb{I}(x\in \mathcal{X}_r) \right),$$
we will show that 
\begin{enumerate}[label=\roman*)]
\item $\frac{1}{r} \mathbb{E}_{\mu_{|A_r}} \ell \left( f(X), Y \right) \not = \mathbb{E}_{\mu} \ell\left(f(X), Y\right)$
\item There exists a sequence of elements in $\cup_\theta \mathcal{M}_\theta$ that converges point-wise almost everywhere to $f$
\end{enumerate}
The conjunction of these two points  contradicts \cref{lemma:pointwise_convergence}, which would conclude the proof.

The first point is obtained through simple derivations, evaluating both sides of the equation i).
\begin{eqnarray*}
\frac{1}{r} \mathbb{E}_{\mu_{|A_r}} \ell \left( f(X), Y \right) &=& \frac{1}{r} \mathbb{E}_{\mu} \ell \left( f(X), Y \right) \mathbb{I}(Z\in \mathcal{X}_r \times \mathcal{Y}) \\
&=& \frac{1}{r} \mathbb{E}_{\mu} \ell \left( f(X), Y \right) \mathbb{I}(X\in \mathcal{X}_r) \\
&=& \frac{1}{r} \mathbb{E}_{\mu} \ell \left( f_1(X), Y \right) \mathbb{I}(X\in \mathcal{X}_r) \\
&=& \frac{1}{r} \mathbb{E}_{\mu} \ell \left( f_1(X), Y \right) \mathbb{I}(Z\in A_r) \\
&=& \mathbb{E}_{\mu} \ell \left( f_1(X), Y \right),
\end{eqnarray*}
where we successively used the definition of $f$ and equation \eqref{eq:def_alternative}. Now, using the definition of $f$, we get that
\begin{eqnarray*}
\mathbb{E}_{\mu} \ell\left(f(X), Y\right) &=&  \mathbb{E}_{\mu} \ell\left(f_1(X), Y\right)\mathbb{I}(X\in \mathcal{X}_r) + \mathbb{E}_{\mu} \ell\left(f_2(X), Y\right)\left( 1-\mathbb{I}(X\in \mathcal{X}_r) \right) \\
&=& \mathbb{E}_{\mu} \ell\left(f_1(X), Y\right)\mathbb{I}(Z\in A_r) + \mathbb{E}_{\mu} \ell\left(f_2(X), Y\right)\mathbb{I}(Z\in \dataset \setminus A_r)  \\
&=& r \mathbb{E}_{\mu} \ell\left(f_1(X), Y\right) + (1-r)\mathbb{E}_{\mu} \ell\left(f_2(X), Y\right).
\end{eqnarray*}
These derivations lead to 
$$ \frac{1}{r} \mathbb{E}_{\mu_{|A_r}} \ell \left( f(X), Y \right) -\mathbb{E}_{\mu} \ell\left(f(X), Y\right) = (1-r) \left[ \mathbb{E}_{\mu} \ell\left(f_1(X), Y\right) -  \mathbb{E}_{\mu} \ell\left(f_2(X), Y\right)\right] \not = 0,$$
by assumption on $f_1$ and $f_2$.

For point ii), we will construct a sequence $(\psi_k)_k$ of functions in $\cup_\theta \mathcal{M}_\theta$ that converges point-wise to $f$ almost everywhere, using the definition of the universal approximation property and Urysohn’s lemma (\cref{lemma:urysohn} in the Appendix). Consider $k \geq 0$ and denote $\epsilon_k = \frac{1-r}{k+1}$. Denote $q^{r}$ and $q^{r+\epsilon_k}$ the $r^{th}$ and $(r+\epsilon_k)^{th}$ quantile of the random variable $\tilde g(X)$ where $(X, Y)\sim \mu$.  Denote $\mathcal{X}_r = \{ x \in \mathcal{X}\ |\ \tilde{g}(x) \leq q^r\}$ and $B_{r, k} = \{ x \in \mathcal{X}\ |\ \tilde{g}_r(x) \geq q^{r+\epsilon_k}\}$. Since $\tilde{g}$ is continuous and $\mathcal{X}$ is compact, the two sets are closed. Besides, since the random variable $\tilde{g}(X)$ is continuous ($g$ is an adapted score function), both sets are disjoint. Therefore, using Urysohn’s lemma (\cref{lemma:urysohn} in the Appendix), we can chose a continuous function $\phi_k: \mathcal{X} \rightarrow [0, 1]$ such that $\phi_k(x) = 1$ for $x \in \mathcal{X}_r$ and $\phi_k(x) = 0$ for $x \in B_{r, k}$. Denote $f_k$ the function defined by 
$$ \bar f_k(x) = f_1(x)\phi_k(x) + f_2(x)(1-\phi_k(x)).$$
Notice that $(\phi_k)_k$ converges point-wise to  $\mathbb{I}(\cdot \in \mathcal{X}_r)$, and therefore $(\bar f_k)_k$ converges point-wise to $f$.
Besides, since $\bar f_k$ is continuous, and $\cup_\theta \mathcal{M}_\theta$ has the universal approximation property, we can chose $\psi_k \in \cup_\theta \mathcal{M}_\theta$ such that 
$$ \textup{sup}_{x\in\mathcal{X}} |\psi_k(x) - \bar f_k(x) | \leq \epsilon_k.$$
Hence, for any input $x\in \mathcal{X}$, we can upper-bound $|\psi_k(x)-f(x)|$ by $\epsilon_k + |\bar f_k(x)-f(x)|$, giving that $\psi_k$ converges pointwise to $f$ and concluding the proof.
\end{proof}

We are now ready to prove the \cref{prop:dense_family_consistent} that we state here for convenience.
\medskip

\textbf{Theorem \ref{prop:dense_family_consistent}.}
\emph{
Let $\mathcal{A}$ be any \CBPA\ algorithm with an adapted score function. If $\cup_\theta \mathcal{M}_\theta$ has the universal approximation property, then there exist hyper-parameters $\theta \in \Theta$ for which the algorithm is not consistent.
}
\medskip

\begin{proof}
We will use the universal approximation theorem to construct a model for which the algorithm is biased. Denote $\textup{supp}(\mu)$ the support of the generating measure $\mu$. We can assume that there exists $x \in \mathcal{X}$ such that $(x_0, 0) \in \textup{supp}(\mu)$, $(x_0, 1) \in \textup{supp}(\mu)$, and $g(x_0, 1) \not = g(x_0, 0),$ otherwise one can apply \cref{prop:dense_family_consistent_nolabel} to get the result. Denote $y_0 \in \{ 0, 1\}$ such that $g(x_0, y_0) > g(x_0, 1-y_0)$. Since $g$ is continuous, we can find $\epsilon > 0, r_0 \in (0, 1)$ such that 
\begin{equation}\label{eq:prop3:g_y0_1-y0}
    \forall x\in \mathcal{B}(x_0, \epsilon),\ g(x, y_0) > q^{r_0} > g(x, 1-y_0),
\end{equation}
where $q^{r_0}$ is the $r_0^{th}$ quantile of $g(Z)$ where $Z \sim \mu$.

Since $(x_0, 1-y_0) \in \text{supp}(\mu)$, it comes that 
$$ \Delta = \frac{1-r_0}{2(1+r_0)}\mathbb{P}\Big( X \in \mathcal{B}(x_0, \epsilon), Y=1-y_0 \Big)\ell(y_0, 1-y_0) > 0. $$
By assumption, the distribution of $X$ is dominated by the Lebesgue measure, we can therefore find a positive $\epsilon' < \epsilon$ such that 
$$ \mathbb{P}\Big( X \in \mathcal{B}(x_0, \epsilon) \setminus \mathcal{B}(x_0, \epsilon')\Big) < \frac{\Delta}{2\max \ell}.$$
The sets $K_1 = \mathcal{B}(x_0, \epsilon')$ and $K_2 = \mathcal{X} \setminus \mathcal{B}_o(x_0, \epsilon)$ are closed and disjoint sets, \cref{lemma:urysohn} in Appendix  insures the existance of a continuous function $h$ such that $h(x) = y_0 $ for $x \in K_1$, and $h(x) = 1-y_0$ for $x \in K_2$. We use \cref{lemma:approx_through_loss} to construct $\psi \in \cup_\theta \mathcal{M}_\theta$ such that for any $x,y \in \mathcal{D}$, $| \ell(\psi(x), y)-\ell(h(x), y)| < \Delta/2.$  Let $f_1 (x, y) = \ell(\psi(x), y)$ and $f_2 (x, y) = \ell(1-y_0, y)$. Denote $f = f_1-f_2$. Notice that if we assume that the algorithm is consistent on $\cup_\theta \mathcal{M}_\theta$, \cref{lemma:pointwise_convergence} gives that
$ \mathbb{E} f(X, Y) = \frac{1}{r_0}\mathbb{E} f(X, Y) \mathbbm{1}_{g(X, Y) \leq q^{r_0}}.$
We will prove the non-consistency result by means of contradiction, showing that instead we have 
\begin{equation}\label{eq:prop3:biased}
    \mathbb{E} f(X, Y) < \frac{1}{r_0}\mathbb{E} f(X, Y) \mathbbm{1}_{g(X, Y) \leq q^{r_0}}.
\end{equation}

To do so, we start by noticing three simple results that are going to be used in the following derivations
\begin{itemize}
    \item $\forall x \in K_2, y\in \mathcal{Y}$, $f(x, y) = 0.$
    \item $\forall x \in K_1$, $f(x, y_0) = -\ell(1-y_0, y_0)$ and $f(x, 1-y_0) = \ell(y_0, 1-y_0)$
    \item $\forall  x \in \mathcal{B}(x_0, \epsilon) \setminus \mathcal{B}(x_0, \epsilon'), y\in \mathcal{Y},$ $|f(x, y)| \leq \max \ell$
\end{itemize}

We start be upper bounding the left hand side of \eqref{eq:prop3:biased} as follows:
\begin{eqnarray*}
    \mathbb{E} f(X, Y) &=& \mathbb{E} f(X, Y) \big[ \mathbbm{1}_{X \in K_1} + \mathbbm{1}_{X \in K_2} + \mathbbm{1}_{X \in \mathcal{B}(x_0, \epsilon) \setminus \mathcal{B}(x_0, \epsilon')} \big] \\
    &\leq& \mathbb{P}\Big(X \in K_1, Y=1-y_0\Big) \ell(y_0, 1-y_0) \\
    &-& \mathbb{P}\Big(X \in K_1, Y=y_0 \Big) \ell(1-y_0, y_0) \\
    &+& \mathbb{P}\Big( X \in \mathcal{B}(x_0, \epsilon) \setminus \mathcal{B}(x_0, \epsilon')\Big) \max \ell \\
    &<& \mathbb{P}\Big(X \in K_1, Y=1-y_0\Big) \ell(y_0, 1-y_0) + \frac{\Delta}{2}
\end{eqnarray*}
Using \eqref{eq:prop3:g_y0_1-y0}, we can lower bound the right hand side of \eqref{eq:prop3:biased} as follows:
\begin{eqnarray*}
    \frac{1}{r_0} \mathbb{E} f(X, Y) \mathbbm{1}_{g(X, Y) \leq q^{r_0}} &=& \frac{1}{r_0} \mathbb{E} f(X, Y) \big[ \mathbbm{1}_{X \in K_1} + \mathbbm{1}_{X \in K_2} + \mathbbm{1}_{X \in \mathcal{B}(x_0, \epsilon) \setminus \mathcal{B}(x_0, \epsilon')} \big] \mathbbm{1}_{g(X, Y) \leq q^{r_0}}\\
    &\geq & \frac{1}{r_0} \mathbb{P}\Big(X \in K_1, Y=1-y_0\Big) \ell(y_0, 1-y_0) \\
    &-& \frac{1}{r_0} \mathbb{P}\Big( X \in \mathcal{B}(x_0, \epsilon) \setminus \mathcal{B}(x_0, \epsilon')\Big) \max \ell \\
    &>& \frac{1}{r_0}  \Big[ \mathbb{P}\Big(X \in K_1, Y=1-y_0\Big) \ell(y_0, 1-y_0) -\frac{\Delta}{2} \Big] \\
    &>& \mathbb{E} f(X, Y)  \\
    &+& \big[\frac{1}{r_0}-1 \big] \mathbb{P}\Big(X \in K_1, Y=1-y_0\Big) \ell(y_0, 1-y_0) \\
    &-& \frac{1}{2} \big[\frac{1}{r_0}+1 \big] \Delta \\
    &>& \mathbb{E} f(X, Y),
\end{eqnarray*}
where the last line comes from the definition of $\Delta$.
\end{proof}

\subsection{Proof of \cref{prop:dense_family_valid}}
Denote $\mathcal{P}_B$ the set of probability distributions on $\mathcal{X} \times \{ 0, 1\}$, such that the marginal distribution on the input space is continuous (absolutely continuous with respect to the Lebesgue measure on $\mathcal{X}$) and for which 
$$p_\pi: x\mapsto \mathbb{P}_{\pi}(Y=1 | X=x)$$
is  upper semi-continuous.
For a probability measure $\pi \in \mathcal{P}_B$, denote $\pi^X$ the marginal distribution on the input. Denote $\gamma$ the function from $[0,1]\times[0,1]$ to $\mathbb{R}$ defined by
\[
\gamma(p, y) = p\ell(y, 0) + (1-p)\ell(y, 1).
\]
Finally, denote $\mathcal{F}$ the set of continuous functions from $\mathcal{X}$ to $[0,1]$. We recall the two assumptions made on the loss:
\begin{enumerate}[label=(\roman*)]
    \item The loss is non-negative and that $\ell(y, y')=0$ if and only if $y=y'$
    \item For $p\in [0,1]$, $y\mapsto \gamma(p, y) = p \ell(y, 1) + (1-p)\ell(y, 0)$ has a unique minimizer, denoted $y^*_p\in [0,1]$, that is increasing with $p$. 
\end{enumerate} 

\begin{lemma}\label{lemma:bound_dist_to_minimizer}
    Consider a loss $\ell$ that satisfies (ii). Then, for any $p \in [0,1]$ and $\delta >0$, there exists $\epsilon >0$ such that for any $y \in \mathcal{Y}=[0, 1], $
    $$\gamma(p, y) - \gamma(p, y^*_p) \leq \epsilon \implies |y-y^*_p|\leq \delta.$$
\end{lemma}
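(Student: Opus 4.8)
The plan is to argue by contradiction, exploiting the compactness of the interval $[0,1]$ together with the continuity of $y \mapsto \gamma(p, y)$. First I would record the continuity: since $\ell$ is continuous on $\mathcal{Y}\times\mathcal{Y}$, the map $y \mapsto \gamma(p, y)$ is, for fixed $p$, a fixed convex combination of the continuous maps $y \mapsto \ell(y, 0)$ and $y \mapsto \ell(y, 1)$, hence continuous on the compact interval $[0,1]$. In particular it attains its minimum, and by assumption (ii) this minimum is attained at the single point $y^*_p$.

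Now fix $p \in [0,1]$ and $\delta > 0$, and suppose for contradiction that no admissible $\epsilon$ exists. Then for every integer $n \geq 1$ (taking the role of $\epsilon = 1/n$) we may select a point $y_n \in [0,1]$ satisfying $\gamma(p, y_n) - \gamma(p, y^*_p) \leq 1/n$ while $|y_n - y^*_p| > \delta$. By compactness of $[0,1]$, I would extract a convergent subsequence $y_{n_k} \to y_\infty \in [0,1]$.

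Passing to the limit along this subsequence, continuity of $\gamma(p, \cdot)$ gives $\gamma(p, y_{n_k}) \to \gamma(p, y_\infty)$, while the bound $\gamma(p, y_{n_k}) - \gamma(p, y^*_p) \leq 1/n_k \to 0$ forces $\gamma(p, y_\infty) = \gamma(p, y^*_p) = \min_{y \in [0,1]} \gamma(p, y)$. Hence $y_\infty$ is a minimizer of $\gamma(p, \cdot)$, and by the uniqueness part of assumption (ii) we conclude $y_\infty = y^*_p$. But $|y_{n_k} - y^*_p| > \delta$ for all $k$, so taking $k \to \infty$ yields $|y_\infty - y^*_p| \geq \delta > 0$, contradicting $y_\infty = y^*_p$. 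This contradiction establishes the existence of the claimed $\epsilon$.

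Note that the statement is for each fixed $p$ and $\delta$ separately, so no uniformity in $p$ is needed, and the argument is essentially the routine ``near-minimizers converge to the minimizer'' compactness lemma. The only points requiring care are confirming that $\gamma(p,\cdot)$ genuinely attains its minimum at a single point, so that uniqueness may be invoked (this is exactly what assumption (ii) supplies), and that continuity lets the extremal value pass to the limit; I do not expect any substantial obstacle beyond these bookkeeping checks.
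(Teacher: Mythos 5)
Your proof is correct and follows essentially the same route as the paper's: a contradiction argument choosing near-minimizers $y_k$ with $\gamma(p,y_k)-\gamma(p,y^*_p)\leq 1/k$ and $|y_k-y^*_p|>\delta$, extracting a convergent subsequence by compactness of $[0,1]$, passing to the limit by continuity, and contradicting the uniqueness of the minimizer guaranteed by assumption (ii). The only cosmetic difference is that you invoke uniqueness to conclude $y_\infty=y^*_p$ and then contradict the distance bound, whereas the paper keeps both limit facts and contradicts uniqueness directly; these are logically identical.
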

\begin{proof}
    Consider $p\in [0,1]$ and $\eta > 0$. Assume that for any $\epsilon_k = \frac{1}{k+1}$ there exists $y_k \in \mathcal{Y}$ such that $|y- y^*_p| \geq \delta$ and
    $$p\ell(y_k, 1) + (1-p)\ell(y_k,0) - p\ell(y^*_p, 1) - (1-p)\ell(y^*_p,0) \leq \epsilon_k$$
    Since $\mathcal{Y}$ is compact, we can assume that the sequence $(y_k)_k$ converges (taking, if needed, a sub-sequence of the original one). Denote $y_\infty$ this limit. Since $\ell$ and $|\cdot|$ are continuous, it comes that $|y_\infty- y^*_p| \geq \delta$ and
    $$p\ell(y_\infty, 1) + (1-p)\ell(y_\infty,0) - p\ell(y^*_p, 1) - (1-p)\ell(y^*_p,0) = 0,$$
    contradicting the assumption that $y^*_p$ is unique.
\end{proof}

\begin{lemma}\label{lemma:approx_of_measurable}
If $\psi$ is a measurable map from $\mathcal{X}$ to $[0,1]$, then there exists a sequence of continuous functions $f_n \in \mathcal{F}$ that converges point-wise to $\psi$ (for the Lebesgue measure)
\end{lemma}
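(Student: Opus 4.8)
The plan is to combine Lusin's theorem with the Tietze extension theorem to produce continuous approximants that agree with $\psi$ on larger and larger sets, and then invoke Borel--Cantelli to upgrade this agreement into pointwise convergence almost everywhere. Throughout, recall that $\mathcal{X}$ is a compact subset of $\reals^{d_x}$, so it is a compact metric (hence normal) space on which the Lebesgue measure $\lambda$ is a finite Radon measure; these are exactly the regularity hypotheses needed for both classical theorems.

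First I would fix $n \geq 1$ and apply Lusin's theorem to the measurable map $\psi: \mathcal{X} \to [0,1]$ with tolerance $2^{-n}$: this yields a closed set $K_n \subset \mathcal{X}$ with $\lambda(\mathcal{X} \setminus K_n) \leq 2^{-n}$ such that the restriction $\psi|_{K_n}$ is continuous. Since $K_n$ is a closed subset of the normal space $\mathcal{X}$ and $\psi|_{K_n}$ is a continuous $[0,1]$-valued function, the Tietze extension theorem provides a continuous function $f_n: \mathcal{X} \to \reals$ extending $\psi|_{K_n}$; because the codomain $[0,1]$ is a bounded interval, the extension can be taken $[0,1]$-valued (equivalently, replace any extension $\tilde f_n$ by $\min(\max(\tilde f_n, 0), 1)$, which is continuous, still equals $\psi$ on $K_n$, and takes values in $[0,1]$). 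Hence each $f_n \in \mathcal{F}$ and $f_n = \psi$ on $K_n$.

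It remains to show $f_n \to \psi$ pointwise $\lambda$-almost everywhere. Consider $E = \bigcap_{N \geq 1} \bigcup_{n \geq N} (\mathcal{X} \setminus K_n)$, the set of points lying in infinitely many of the complements. Since $\sum_n \lambda(\mathcal{X} \setminus K_n) \leq \sum_n 2^{-n} < \infty$, the Borel--Cantelli lemma gives $\lambda(E) = 0$. For any $x \notin E$ there is an index $N$ with $x \in K_n$ for all $n \geq N$, so that $f_n(x) = \psi(x)$ for all $n \geq N$ and in particular $f_n(x) \to \psi(x)$. Thus $(f_n)$ converges to $\psi$ everywhere outside the $\lambda$-null set $E$, which is the desired pointwise convergence for the Lebesgue measure. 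The only delicate points are verifying the regularity needed to invoke Lusin and Tietze (handled by the compactness of $\mathcal{X}$) and ensuring the extensions remain $[0,1]$-valued so that $f_n \in \mathcal{F}$ (handled by the truncation above); neither is a genuine obstacle, so the main content is simply assembling these standard ingredients in the right order.
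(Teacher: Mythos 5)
Your proof is correct and follows essentially the same route as the paper's: Lusin's theorem to get continuity on a large compact set, followed by a continuous extension (the paper cites an extension theorem equivalent to Tietze) applied along a sequence of shrinking tolerances. The Borel--Cantelli step and the truncation to keep the extensions $[0,1]$-valued are exactly the details the paper leaves implicit in its one-line proof, and you have filled them in correctly.
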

\begin{proof}
This result is a direct consequence of two technical results, the Lusin's Theorem (\cref{thm:lusin} in the appendix), and the continuous extension of functions from a compact set (\cref{thm:cont_extension} in the appendix). 
\end{proof}

\begin{lemma}\label{lemma:psi_star}
For a distribution $\pi \in \mathcal{P}_B$. define $\psi^*_\pi$ the function from $\mathcal{X}$ to $[0,1]$ by 
$$\forall x\in \mathcal{X},\ \psi^*_{\pi}(x) = y^*_{p_\pi(x)}$$
is measurable. Besides,
$$ \textup{inf}_{f\in\mathcal{F}} \mathbb{E}_\pi \ell(f(X), Y) = \mathbb{E}_\pi \ell(\psi^*_\pi(X), Y)$$
\end{lemma}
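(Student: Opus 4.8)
The plan is to establish the two assertions of the lemma separately: first the measurability of $\psi^*_\pi$, and then the variational identity. For measurability, I would write $\psi^*_\pi = \Phi \circ p_\pi$, where $\Phi: q \mapsto y^*_q$ is the map sending a conditional probability to the associated pointwise minimizer. By assumption (ii), $\Phi$ is increasing on $[0,1]$, and any monotone real function is Borel measurable; since $p_\pi$ is upper semi-continuous (hence Borel measurable) by definition of $\mathcal{P}_B$, the composition $\psi^*_\pi$ is measurable into $[0,1]$. This is precisely what is needed to later invoke \cref{lemma:approx_of_measurable}.

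For the identity, I would prove the two inequalities. The ``$\geq$'' direction is the easy one and follows from conditioning on $X$: for any $f \in \mathcal{F}$,
$$\mathbb{E}_\pi \ell(f(X), Y) = \mathbb{E}_{\pi^X}\big[\, p_\pi(X)\ell(f(X),1) + (1-p_\pi(X))\ell(f(X),0) \,\big].$$
For each fixed $x$, the bracketed quantity, as a function of $y=f(x)$, equals $\gamma(p_\pi(x),y)$ and is minimized over $y\in[0,1]$ exactly at $y^*_{p_\pi(x)} = \psi^*_\pi(x)$ by assumption (ii). This yields the pointwise inequality, and hence $\mathbb{E}_\pi \ell(f(X),Y) \geq \mathbb{E}_\pi \ell(\psi^*_\pi(X),Y)$; taking the infimum over $f\in\mathcal{F}$ gives ``$\geq$''.

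The reverse inequality is the crux, since $\psi^*_\pi$ need not be continuous and therefore is not itself an admissible competitor in $\mathcal{F}$. Here I would approximate: because $\psi^*_\pi$ is a measurable map into $[0,1]$, \cref{lemma:approx_of_measurable} furnishes a sequence $(f_n)_n \subset \mathcal{F}$ converging pointwise to $\psi^*_\pi$ for the Lebesgue measure, and hence $\pi^X$-almost everywhere since $\pi^X$ is dominated by Lebesgue. By continuity of $\ell$ in its first argument, $\ell(f_n(x),y) \to \ell(\psi^*_\pi(x),y)$ for $\pi^X$-a.e.\ $x$ and every $y$; as $\ell$ is continuous on the compact set $\mathcal{Y}\times\mathcal{Y}$ it is bounded, so the dominated convergence theorem gives
$$\mathbb{E}_\pi \ell(f_n(X),Y) \longrightarrow \mathbb{E}_\pi \ell(\psi^*_\pi(X),Y).$$
Consequently $\inf_{f\in\mathcal{F}} \mathbb{E}_\pi \ell(f(X),Y) \leq \mathbb{E}_\pi \ell(\psi^*_\pi(X),Y)$, and combining the two inequalities closes the argument.

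The main obstacle is this last step: the Bayes-type predictor is generally discontinuous, so one cannot simply substitute it into the infimum, and everything rests on approximating it by continuous functions while passing the loss through the limit. The two ingredients that make this work are the monotonicity of $q\mapsto y^*_q$ from assumption (ii) (which guarantees measurability, and thus approximability via \cref{lemma:approx_of_measurable}) and the boundedness of $\ell$ on a compact domain (which legitimizes dominated convergence). I expect no surprises in the ``$\geq$'' direction; the care is entirely in verifying the hypotheses of \cref{lemma:approx_of_measurable} and the domination needed for the limit.
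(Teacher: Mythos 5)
Your proof is correct and follows essentially the same route as the paper's: measurability via the monotonicity of $q \mapsto y^*_q$ composed with the measurable $p_\pi$, the ``$\geq$'' direction by conditioning on $X$ and pointwise minimization, and the ``$\leq$'' direction by approximating $\psi^*_\pi$ with continuous functions via \cref{lemma:approx_of_measurable} and passing to the limit with dominated convergence using the boundedness of $\ell$. If anything, you are slightly more careful than the paper in noting explicitly that Lebesgue-a.e.\ pointwise convergence transfers to $\pi^X$-a.e.\ convergence because $\pi^X$ is dominated by the Lebesgue measure.
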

\begin{proof}
The function from $[0,1]$ to $[0,1]$ defined by 
$$ p \mapsto \textup{argmin}_{y\in [0,1]} \gamma(p, y)=y^*_p,$$
is well defined and increasing from assumption (ii) on the loss. It is, therefore, measurable. Since $p_\pi: x \mapsto \mathbb{P}_{\pi}(Y=1 | X=x)$ is measurable, we get that $\psi^*_{\pi}$ is measurable as the composition of two measurable functions. For the second point, notice that by definition of $\psi^*_\pi$, for any $f\in\mathcal{F}$,
\begin{eqnarray*}
\mathbb{E}_\pi \ell(f(X), Y) &=& \mathbb{E}_{\pi_X}\mathbb{E}_{\pi}\Big[ \ell(f(X), Y) \ |\ X \Big] \\
&\geq& \mathbb{E}_{\pi_X}\mathbb{E}_{\pi}\Big[ \ell(\psi^*_{\pi}(X), Y) \ |\ X \Big] \\
&\geq& \mathbb{E}_\pi \ell(\psi^*_\pi(X), Y).
\end{eqnarray*}
Using \cref{lemma:approx_of_measurable}, we can take a sequence of continuous functions $f_n \in \mathcal{F}$ that converge point-wise to $\psi^*_\pi$. We can conclude using the dominated convergence theorem, leveraging that $\ell$ is bounded.
\end{proof}

\begin{lemma}\label{lemma:disjoint_inf}
Let $\mathcal{A}$ a \CBPA\ with an adapted score function $g$ that depends on the labels. 
Then there exists a compression level $r>0$ and $\varepsilon >0$ such that for any $f_0\in \mathcal{F}$, the two following statements exclude each other
\begin{enumerate}[label=(\roman*)]
    \item $\mathbb{E}_{\nu_r} \ell(f_0(X), Y) - \inf_{f\in \mathcal{F}}\mathbb{E}_{\nu_r} \ell(f(X), Y) \leq \varepsilon$
    \item $\mathbb{E}_{\mu} \ell(f_0(X), Y) - \inf_{f\in \mathcal{F}}\mathbb{E}_{\mu} \ell(f(X), Y) \leq \varepsilon$
\end{enumerate}
\end{lemma}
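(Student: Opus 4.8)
The plan is to exhibit a region of the input space on which the label-dependent score forces the pruned conditional law to be degenerate, while the original conditional law genuinely mixes both labels, and to turn this into a robust (margin $\varepsilon$) separation of the two excess-risk functionals. Throughout write the Bayes-risk gap of a predictor $f\in\mathcal{F}$ under a measure $\pi$ as $R_\pi(f)=\mathbb{E}_\pi\ell(f(X),Y)-\inf_{h\in\mathcal{F}}\mathbb{E}_\pi\ell(h(X),Y)$; by \cref{lemma:psi_star} this equals $\mathbb{E}_{\pi^X}[\,\gamma(p_\pi(X),f(X))-\gamma(p_\pi(X),y^*_{p_\pi(X)})\,]$, an integral of a nonnegative integrand $\Delta_\pi(x,f)\ge 0$. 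The two conditions in the statement are exactly $R_{\nu_r}(f_0)\le\varepsilon$ and $R_\mu(f_0)\le\varepsilon$, so it suffices to show these cannot both hold for small $\varepsilon$.

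First I would set up the geometry. Since $g$ depends on the labels, choose $x_0$ and $y_0\in\{0,1\}$ with $(x_0,y_0),(x_0,1-y_0)\in\mathrm{supp}(\mu)$ and $g(x_0,1-y_0)<g(x_0,y_0)$. Because $g$ is continuous and $g(Z)$ has a continuous c.d.f.\ $G$ (the score is adapted and $\mu^X$ is absolutely continuous, so $g(Z)$ has no atoms), I can pick a value $q$ strictly between $g(x_0,1-y_0)$ and $g(x_0,y_0)$ with $G(q)\in(0,1)$, set $r=G(q)$ so that $q=q^r$, and a ball $B=\mathcal{B}(x_0,\rho)$ on which $g(x,1-y_0)<q^r<g(x,y_0)$ for all $x\in B$. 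Then in the acceptance region $A_r=\{g\le q^r\}$ every $(x,1-y_0)$ with $x\in B$ is kept and every $(x,y_0)$ is discarded; by \cref{prop:asymp_behavior}, $\nu_r=\tfrac1r\mu_{|A_r}$, so $\nu_r$-a.s.\ the label equals $1-y_0$ on $B$, i.e.\ $p_{\nu_r}\equiv 1-y_0$ and $\psi^*_{\nu_r}\equiv y^*_{1-y_0}=1-y_0$ there.

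Next the crucial measure-theoretic step: produce $S\subset B$ with $\mu^X(S)>0$ on which $p_\mu$ stays in $[a,1-a]$ for some $a\in(0,\tfrac12)$. Here I use upper semicontinuity of $p_\mu$ together with both labels lying in $\mathrm{supp}(\mu)$. If instead $p_\mu\in\{0,1\}$ $\mu^X$-a.e.\ on $B$, then $\{p_\mu=1\}$ accumulating at $x_0$ would force $\limsup_{x\to x_0}p_\mu(x)\ge 1>p_\mu(x_0)$, contradicting upper semicontinuity, whereas if $\{p_\mu=1\}$ is null near $x_0$ then $\mathbb{P}(X\in\mathcal{B}(x_0,\rho'),Y=1)=0$, contradicting $(x_0,1)\in\mathrm{supp}(\mu)$ (symmetrically for $Y=0$); hence $\{p_\mu\in(0,1)\}$ has positive $\mu^X$-measure in every neighbourhood of $x_0$, and for small $a$ the set $S=\{x\in B:p_\mu(x)\in[a,1-a]\}$ has $\mu^X(S)>0$. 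On $S$ the Bayes predictors are separated: using $y^*_0=0$, $y^*_1=1$ and the strict monotonicity of $q\mapsto y^*_q$ I can shrink $a$ so that $y^*_{p_\mu(x)}\in[3\delta_0,1-3\delta_0]$ for $x\in S$, whence $|\psi^*_\mu(x)-\psi^*_{\nu_r}(x)|=|y^*_{p_\mu(x)}-(1-y_0)|\ge 3\delta_0$.

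Finally I would convert near-optimality into pointwise closeness and collide the two. A uniform version of \cref{lemma:bound_dist_to_minimizer}, obtained from compactness of $[0,1]\times\mathcal{Y}$ and continuity of $p\mapsto y^*_p$, yields $m(\delta_0)>0$ with $\Delta_\pi(x,f)\le m(\delta_0)\Rightarrow |f(x)-y^*_{p_\pi(x)}|\le\delta_0$. From $R_{\nu_r}(f_0)\le\varepsilon$ and Markov's inequality, $\Delta_{\nu_r}(\cdot,f_0)\le\sqrt\varepsilon$ off a set of $\nu_r^X$-measure $\le\sqrt\varepsilon$, so (taking $\sqrt\varepsilon<m(\delta_0)$) $|f_0-(1-y_0)|\le\delta_0$ there; from $R_\mu(f_0)\le\varepsilon$, $|f_0-y^*_{p_\mu}|\le\delta_0$ off a set of $\mu^X$-measure $\le\sqrt\varepsilon$, forcing $|f_0-(1-y_0)|\ge 3\delta_0-\delta_0=2\delta_0>\delta_0$ there. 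Since on $S$ one has $\tfrac{a}{r}\mu^X\le\nu_r^X\le\tfrac1r\mu^X$, both exceptional sets have $\mu^X$-measure $\le(\tfrac{r}{a}+1)\sqrt\varepsilon$; choosing $\varepsilon$ so small that this is $<\mu^X(S)$ leaves a point of $S$ where $f_0$ is simultaneously within $\delta_0$ and beyond $\delta_0$ of $1-y_0$ — a contradiction. I expect the main obstacle to be precisely the third paragraph: ruling out the degenerate deterministic-label configuration via upper semicontinuity, and reconciling the two mutually near-singular measures $\mu^X$ and $\nu_r^X$ on $S$ so that a common good point survives both Markov bounds.
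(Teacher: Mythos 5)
Your proof is correct and follows essentially the same route as the paper's: the same ball around $x_0$ on which the label-dependent score forces the pruned measure $\nu_r$ to keep only one label, the same extraction of a positive-$\mu^X$-measure subset where $p_\mu$ is bounded away from $\{0,1\}$ (the paper's $V$, your $S$), the same quantitative stability of the minimizer of $y\mapsto\gamma(p,y)$ (the paper's $\epsilon_1,\epsilon_2$ from \cref{lemma:bound_dist_to_minimizer} and a compactness argument, your uniform modulus $m(\delta_0)$), and the same two-sided comparability of $\nu_r^X$ and $\mu^X$ on that subset, with \cref{lemma:psi_star} supplying the Bayes-risk identity under both measures. The remaining differences are bookkeeping — you assume (i) and (ii) simultaneously and collide two Markov-inequality exceptional sets at a common point of $S$, whereas the paper assumes (i) and directly lower-bounds the $\mu$-excess risk above $\varepsilon$ via the set $U_{f_0}$ — together with one shared looseness: your claim that adaptedness plus absolute continuity of $\mu^X$ makes $g(Z)$ atomless is not literally true (adaptedness controls Lebesgue measure on $\mathcal{X}\times[0,1]$, while $\mu$ lives on the null set $\mathcal{X}\times\{0,1\}$), but the paper makes the same implicit assumption when it positions $q^r$ strictly between $g(x_0,0)$ and $g(x_0,1)$, so this does not distinguish the two arguments.
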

\begin{proof}
Since $g$ depends on the labels, we can find $x_0 \in \mathcal{X}$ in the support of $\mu^X$ such that $p_\mu(x_0)=\mathbb{P}_\mu(Y=1\ |\ X=x_0) \in (0, 1)$ and $g(x_0, 0) \not = g(x_0, 1)$. Without loss of generality, we can assume that $g(x_0, 0) < g(x_0, 1)$. Take $r\in(0,1)$ such that
\begin{equation*}
    g(x_0, 0) < q^{r} < g(x_0, 1)
\end{equation*}

By continuity of $g$, we can find a radius $\eta>0$ such that for any $x$ in the ball $\mathcal{B}_\eta(x_0)$ of center $x_0$ and radius $\eta$, we have that $g(x, 0) < q^{r} < g(x, 1)$. Besides, since $p_\mu$ is upper semi-continuous, we can assume that $\eta$ is small enough to ensure that for any $x\in\mathcal{B}_\eta(x_0)$, 
\begin{equation}\label{eq:upper_bound_semicont}
    p_\mu(x) < \frac{1+p_\mu(x_0)}{2} < 1.
\end{equation}

Therefore, recalling that $\nu_r = \frac{1}{r}\mu_{|A_r}$
\begin{itemize}
    \item $\mathbb{P}_{\nu_r}(X\in \mathcal{B}_\eta(x_0)) = \frac{1}{r} \mathbb{P}_{\mu}(X\in \mathcal{B}_\eta(x_0), Y=0) > 0$ and 
$ \mathbb{P}_{\nu_r}(Y=1\ |\ X\in \mathcal{B}_\eta(x_0) ) = 0.$ 
    \item $\mathbb{P}_{\mu}(X\in \mathcal{B}_\eta(x_0)) > 0$ and 
$ \mathbb{P}_{\mu}(Y=1\ |\ X \in \mathcal{B}_\eta(x_0) ) > 0.$
\end{itemize}

Denote $\Delta = \mathbb{P}_\mu(X\in \mathcal{B}_\eta(x_0), Y=1) > 0.$ Consider the subset $V$ defined by
$$V =\{ x\in \mathcal{B}_\eta(x_0)\ s.t.\  p_\mu(x) \geq \frac{\Delta}{2} \}$$
We can derive a lower-bound on $\mu^X(V)$ as follows:
\begin{eqnarray*}
    \Delta &=& \int_{x\in \mathcal{B}_\eta(x_0)} p(x) \mu^X(dx) \\
    &=&  \int_{x\in \mathcal{B}_\eta(x_0)} p(x) \mathbbm{1}_{p(x) < \frac{\Delta}{2}} \mu^X(dx) + \int_{x\in \mathcal{B}_\eta(x_0)} p(x) \mathbbm{1}_{p(x) \geq \frac{\Delta}{2}} \mu^X(dx) \\
    &\leq&  \int_{x\in \mathcal{B}_\eta(x_0)} \frac{\Delta}{2} \mu^X(dx) + \int_{x\in V} \mu^X(dx) \\
    &\leq& \frac{\Delta}{2} + \mu^X(V).
\end{eqnarray*}
The last inequality gives that $\mu^X(V) \geq \Delta/2 > 0$.
Moreover, we can lower-bound $\nu_r^X(V)$ using \eqref{eq:upper_bound_semicont} as follows:
\begin{eqnarray*}
    \nu_r^X(V) &=& \nu_r(V\times \{ 0\}) \\
    &=& \frac{1}{r}\mu(V\times \{ 0\}) \\
    &=& \frac{1}{r} \int_{x\in V} (1-p_\mu(x)) \mu^X(dx) \\
    &\geq& \frac{1-p_\mu(x_0)}{2r} \mu^X(V)\\
    &\geq& \frac{1-p_\mu(x_0)}{4r} \Delta \\ 
    &>& 0.
\end{eqnarray*}
Therefore,  assumptions i) and ii) on the loss give that $\psi^*_{\nu_r}(x) = 0$ and $\psi^*_{\mu}(x) \geq y^*_{\frac{\Delta}{2}} > 0$ for any $x\in V$. Using \cref{lemma:bound_dist_to_minimizer}, take $\epsilon_1 > 0$ such that 
\begin{equation}\label{eq:bound_diff_ell_ystar_1}
   \ell(y, 0) \leq \epsilon_1 \implies y \leq \frac{y^*_{\frac{\Delta}{2}}}{3}. 
\end{equation}

In the following, we will show that there exists $\epsilon_2>0$ such that for any $p \geq \frac{\Delta}{2}$,
\begin{equation}\label{eq:bound_diff_ell_ystar_2}
y \leq \frac{y^*_{\frac{\Delta}{2}}}{3} \implies 
     \gamma(p, y) - \gamma(p, y^*_p)\geq \epsilon_2
\end{equation}
Otherwise, leveraging the compacity of the sets at hand, we can find two converging sequences $p_k\rightarrow p_\infty \geq \frac{\Delta}{2}$ and $y_k\rightarrow y_\infty \leq \frac{y^*_{\frac{\Delta}{2}}}{3}$ such that
$$\gamma(p_k, y_k)-\min_{y'}\gamma(p_k, y') \leq \frac{1}{k+1}. $$
Since $\gamma$ is uniformly continuous, 
$$ p \mapsto \min_{y'} \gamma(p, y')$$
is continuous. Taking the limit it comes that
$$ \gamma(p_\infty, y_\infty)-\min_{y'}\gamma(p_\infty, y') = 0,$$
and consequently $y_\infty = y^*_{p_\infty}$. Since $p_\infty \geq \frac{\Delta}{2}$, 
$$y_\infty = y^*_{p_\infty} \geq y^*_{\frac{\Delta}{2}} > \frac{y^*_{\frac{\Delta}{2}}}{3}$$
reaching a contradiction.

Now, take $\epsilon_1$ and $\epsilon_2$ satisfying \eqref{eq:bound_diff_ell_ystar_1} and \eqref{eq:bound_diff_ell_ystar_2} respectively. Put together, we have that for any $p \geq \frac{\Delta}{2}$,
$$ \gamma(0, y) - \gamma(0, y^*_0)  \leq \epsilon_1 \implies \gamma(p, y)-\gamma(p, y^*_p) \geq \epsilon_2.$$
Using the definition of $V$, it comes that for any function $f_0$ and $x\in V$
\begin{equation}\label{eq:bound_diff_ell_on_V}
\gamma(0, f_0(x)) \leq \epsilon_1 \implies \gamma(p_{\mu}(x), f_0(x))- \gamma(p_{\mu}(x), \psi^*_{\mu}(x)) \geq \epsilon_2
\end{equation}

Let $\varepsilon=r\min(\epsilon_1, \epsilon_2) \frac{\nu_r^X(V)}{4} > 0$. Consider $f_0\in\mathcal{F}$ satisfying
$$\mathbb{E}_{\nu_r} \ell(f_0(X), Y) - \inf_{f\in \mathcal{F}}\mathbb{E}_{\nu_r} \ell(f(X), Y) \leq \varepsilon.$$
We will prove that 
$$\mathbb{E}_{\mu} \ell(f_0(X), Y) - \inf_{f\in \mathcal{F}}\mathbb{E}_{\mu} \ell(f(X), Y) > \varepsilon$$
to conclude the proof.
Denote $U_{f_0}$ is the subset of $V$ such that for any $x \in U_{f_0}$, $\gamma(0, f_0(x)) \leq  \frac{2\varepsilon }{\nu^X_r(V)}.$ We get that
\begin{eqnarray*}
\varepsilon &\geq& \mathbb{E}_{\nu_r} \ell(f_0(X), Y) - \inf_{f\in \mathcal{F}}\mathbb{E}_{\nu_r} \ell(f(X), Y)  \\
 &\geq& \int_\mathcal{X} \left[ \gamma(p_{\nu_r}(x), f_0(x)) - \gamma(p_{\nu_r}(x), \psi^*_{\nu_r}(x)) \right] \nu^X_r(dx)\\
 &\geq& \int_V \gamma(0, f_0(x)) \nu^X_r(dx)\\
 &\geq& \frac{2\varepsilon }{\nu^X_r(V)} \nu^X_r(V \setminus U_{f_0})
\end{eqnarray*}
Hence we get that $\nu^X_r(U_{f_0}) \geq \frac{\nu^X_r(V)}{2}$. Since $\frac{2\varepsilon }{\nu^X_r(V)} \leq \epsilon_1,$ the right hand side of \eqref{eq:bound_diff_ell_on_V} holds. In other words, 
$$ \forall x \in  U_{f_0}, \gamma(p_{\mu}(x), f_0(x))- \gamma(p_{\mu}(x), \psi^*_{\mu}(x)) \geq \epsilon_2,$$
from which we successively obtain
\begin{eqnarray*}
     \mathbb{E}_{\mu} \ell(f_0(X), Y) - \inf_{f\in \mathcal{F}}\mathbb{E}_{\mu} \ell(f(X), Y)  &=& \int_\mathcal{X} \left[ \gamma(p_{\mu}(x), f_0(x)) - \gamma(p_{\mu}(x), \psi^*_{\mu}(x)) \right] \mu^X(dx) \\
     &\geq& \int_\mathcal{U_{f_0}} \left[ \gamma(p_{\mu}(x), f_0(x)) - \gamma(p_{\mu}(x), \psi^*_{\mu}(x)) \right] \mu^X(dx) \\
     &\geq& \mu^X(U_{f_0}) \epsilon_2 \\
     &\geq& \mu(U_{f_0} \times \{0\}) \epsilon_2 \\
     &=& r\ \epsilon_2\  \nu^X_r(U_{f_0}) \\
     &\geq& r \epsilon_2 \frac{ \nu^X_r(V)}{2} \\
     &>& \varepsilon.
\end{eqnarray*}
\end{proof}

We can now ready to prove {\cref{prop:dense_family_valid}.
\medskip

\textbf{Theorem \ref{prop:dense_family_valid}.}
\emph{
Let $\mathcal{A}$ a \CBPA\ with an adapted score function $g$ that depends on the labels. If $\cup_\theta \mathcal{M}_\theta$ has the universal approximation property and the loss satisfies assumptions (i) and (ii), then there exist two hyper-parameters $\theta_1, \theta_2 \in \Theta$ such that the algorithm is not valid on $\mathcal{W}_{\theta_1} \cup \mathcal{W}_{\theta_2}$.
}
\medskip
\begin{proof}
 Denote $\tilde \Theta = \Theta \times \Theta$, and for $\tilde \theta = (\theta_1, \theta_2) \in \tilde \Theta$, $\param_{\tilde \theta} = \param_{\theta_1}\cup \param_{\theta_2}$ and $\mathcal{M}_{\tilde \theta} =  \mathcal{M}_{\theta_1}\cup \mathcal{M}_{\theta_2}$. We will leverage \cref{prop:caract_valid} and \cref{lemma:disjoint_inf} show that there exist a compression ratio $r\in(0,1)$ and a hyper-parameter $\tilde \theta$ such that
$$ \min_{w\in \param^*_{\tilde \theta}(r)} \loss(w) > \min_{w \in \param_{\tilde \theta}} \loss(w)$$ 
which would conclude the proof.

Using \cref{lemma:disjoint_inf}, we can find $r$ and $\epsilon > 0$ such that for any continuous function $f_0 \in \mathcal{F}$, the two following propositions exclude each other: 
\begin{enumerate}[label=(\roman*)]
    \item $\mathbb{E}_{\mu} \ell(f_0(X), Y) - \inf_{f\in \mathcal{F}}\mathbb{E}_{\mu} \ell(f(X), Y) \leq \epsilon$
    \item $\mathbb{E}_{\nu_r} \ell(f_0(X), Y) - \inf_{f\in \mathcal{F}}\mathbb{E}_{\nu_r} \ell(f(X), Y) \leq \epsilon$
\end{enumerate}

Since $\cup \mathcal{M}_\theta$ has the universal approximation property, and that $\psi^*_{\mu}$ and $\psi^*_{\nu_r}$ (defined as in \cref{lemma:psi_star}) are measurable, we consecutively use \cref{lemma:approx_of_measurable} and \cref{lemma:approx_through_loss} to find $\tilde \theta = (\theta_1, \theta_2)$ such that 
\begin{enumerate}
    \item There exists $\psi_1 \in \mathcal{M}_{\theta_1}$ such that $\mathbb{E}_{\mu} \ell(\psi_1(X), Y) - \mathbb{E}_{\mu} \ell(\psi^*_{\mu}(X), Y) \leq \epsilon / 2 $ 
    \item There exists $\psi_2 \in \mathcal{M}_{\theta_2}$ such that $\mathbb{E}_{\nu_r} \ell(\psi_2(X), Y) - \mathbb{E}_{\nu_r} \ell(\psi^*_{\nu_r}(X), Y) \leq \epsilon / 2 $ 
\end{enumerate}
Take $\psi_1, \psi_2 \in \mathcal{M}_{\tilde \theta}$ two such functions.
Consider any parameter $w \in \textup{argmin}_{w\in \param^*_{\tilde \theta}(r)} \loss(w).$ By definition, it comes that
\begin{eqnarray*}
    \mathbb{E}_{\nu_r} \ell(y_{out}(X; w), Y) - \mathbb{E}_{\nu_r} \ell(\psi^*_{\nu_r}(X), Y) &\leq& \mathbb{E}_{\nu_r} \ell(\psi_2, Y) - \mathbb{E}_{\nu_r} \ell(\psi^*_{\nu_r}(X), Y) \\
    &\leq& \epsilon/2
\end{eqnarray*}

Therefore, since \cref{lemma:psi_star} gives that $\inf_{f\in \mathcal{F}}\mathbb{E}_{\nu_r} \ell(f(X), Y) = \mathbb{E}_{\nu_r} \ell(\psi^*_{\nu_r}(X), Y)$, we can conclude that
$$ \mathbb{E}_{\mu} \ell(y_{out}(X; w), Y) - \inf_{f\in \mathcal{F}}\mathbb{E}_{\mu} \ell(f(X), Y) > \epsilon, $$
from which we deduce that
\begin{eqnarray*}
     \mathbb{E}_{\mu} \ell(y_{out}(X; w), Y) &>& \inf_{f\in \mathcal{F}}\mathbb{E}_{\mu} \ell(f(X), Y) + \epsilon \\
     &>& \mathbb{E}_{\mu} \ell(\psi_1(X), Y) + \epsilon/2 \\
     &\geq& \min_{w'\in \mathcal{W}_{\tilde \theta}} \loss(w') + \epsilon/2,
\end{eqnarray*}
which gives the desired result.
\end{proof}

\subsection{Proof of the Corollaries \ref
{corr:wide_nn} and \ref{corr:deep_nn}}

These two corollaries are a straightforward application of \cref{prop:dense_family_consistent} 
 and \cref{prop:dense_family_valid} as well as the existing literature on the universal approximation properties of Neural Networks: \citeps{HORNIK1991} and \citeps{kidger2020universal}. \rev{We give the proof of the result for wide neural networks. Consider any number of hidden layers $H \geq 1$ fixed. Denote $\theta = (K, R) \in \mathbb{N}\times\mathbb{R} = \Theta$. \citeps{HORNIK1991} implies that $\cup_{(K, R) \in \Theta}\ FFNN^\sigma_{H, K}(R)$ has the universal approximation property. \cref{prop:dense_family_consistent} states that one can find a $\theta_0 = (K_0, R_0)$ such that the \CBPA\ is not consistent on $FFNN^\sigma_{H, K_0}(R_0)$. Now from the definition of consistency, we get that if a \CBPA\ is not consistent on $\mathcal{M}$, then it is not consistent on any superset $\mathcal{M}'$ that contains $\mathcal{M}$. Therefore, we get the non-consistency result by noticing that 
 $$FFNN^\sigma_{H, K_0}(R_0) \subset FFNN^\sigma_{H, K}(R),$$
 for any $K \geq K_0$ and $R \geq R_0$.
 Similarly, \cref{prop:dense_family_valid} states that there exist $\theta_1 = (K_1, R_1)$ and $\theta_2 = (K_2, R_2)$ such that the model is not valid for any class of model such that 
 $$FFNN^\sigma_{H, K_1}(R_1) \cup FFNN^\sigma_{H, K_2}(R_2) \subset \mathcal{M}.$$ 
 We can conclude noticing that $FFNN^\sigma_{H, K}(R)$ satisfies this condition if $K \geq \max (K_1, K_2)$ and $R \geq \max (R_1, R_2)$.}

\subsection{Proof of \cref{thm:main_thm}}
For $K \in \mathbb{N}^*$, denote $\mathcal{P}_C^K$ the set of generating processes for $K$-classes classification problems, for which the input $X$ is a continuous random variable (the marginal of the input is dominated by the Lebesgue measure), and the output $Y$ can take one of $K$ values in $\mathcal{Y}$ (the same for all $\pi \in \mathcal{P}_C^K$). Similarly, denote $\mathcal{P}_R$, the set of generating processes for regression problems for which both the input and output distributions are continuous. Let $\mathcal{P}$ be any set of generating processes introduced previously for regression or classification (either $\mathcal{P}= \mathcal{P}_C^K$ for some $K$, or $\mathcal{P}= \mathcal{P}_R$). 
\medskip

Assume that there exist $(x_1, y_1), (x_2, y_2) \in \mathcal{D}$ such that
\begin{equation}
 \tag{H1}
  \textup{argmin}_{w\in \mathcal{W}_\theta} \ell(y_{out}(x_1; w), y_1) \cap \textup{argmin}_{w\in \mathcal{W}_\theta} \ell(y_{out}(x_2; w), y_2)  = \emptyset.
\end{equation} 
For any \CBPA\ algorithm $\mathcal{A}$ with adapted criterion, we will show that there exists a generating process $\mu \in \mathcal{P}$ for which $\mathcal{A}$ is not valid. More precisely, we will show that there exists $r_0 \in (0, 1)$ such that for any compression ratio $r \leq r_0$, there exists a generating process $\mu \in \mathcal{P}$ for which $\mathcal{A}$ is not valid. To do so, we leverage \cref{corr:necessary_cond} and prove that for any $r \leq r_0$, there exists $\mu \in \mathcal{P}$, for which $\mathcal{W}^*_\theta(\nu_r) \cap \mathcal{W}^*_\theta(\mu) \not = \emptyset$, i.e.
\begin{equation}
\exists r_0 \in (0, 1),\ \forall r \leq r_0, \exists \mu \in \mathcal{P}\ s.t.\ \forall w^*_r \in  \mathcal{W}^*_\theta(\nu_r),\ \loss_\mu(w^*_r) > \min_{w\in\param_\theta}\loss_\mu(w)\label{eq:proof_general_main}
\end{equation}
We bring to the reader's attention that $\nu_r = \frac{1}{r} \mu_{|A_r} =\nu_r(\mu)$ depends on $\mu$, and so does the acceptance region $A_r = A_r(\mu)$.

The rigorous proof of \cref{thm:main_thm} requires careful manipulations of different quantities, but the idea is rather simple. \cref{fig:app_visual_proof} illustrates the main idea of the proof. We construct a distribution $\mu$ with the majority of the probability mass concentrated around a point where the value of $g$ is not minimal.
\bigskip

\begin{figure}
\centering
\begin{subfigure}{0.39\textwidth}
    \includegraphics[width=\textwidth]{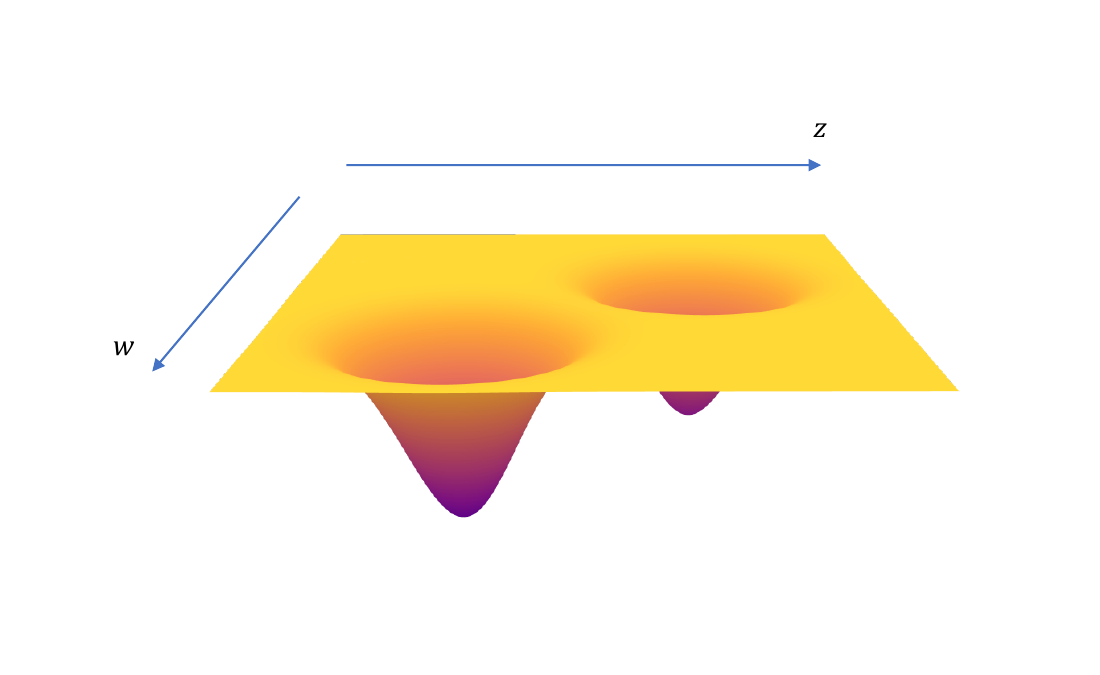}
\end{subfigure}\hfill
\begin{subfigure}{0.59\textwidth}
    \includegraphics[width=\textwidth]{figures/visualization_proof.pdf}
\end{subfigure}
        
\caption{Graphical sketch of the proof of \cref{thm:main_thm}. The surface represents the loss function $f(z,w) = \ell(y_{out}(x), y)$ in 2D, where $z=(x, y)$.}\label{fig:app_visual_proof}
\end{figure}

We start by introducing further notations. For $z=(x, y)\in \dataset$, and $w \in \param_\theta$, we denote by $f$ the function defined by $f(z, w) = \ell(y_{out}(x), y).$ We will use the generic notation $\ell_2$ to refer to the Euclidean norm on the appropriate space. We denote $\mathcal{B}(X, \rho)$ the $\ell_2$ ball with center $X$ and radius $\rho$. If $\mathcal{X}$ is a set, then $\mathcal{B}(\mathcal{X}, \rho) = \bigcup\limits_{X \in \mathcal{X}} \mathcal{B}(X, \rho).$ For $S \subset \mathcal{D}$, we denote  $\textup{argmin}_w f(\mathcal{S}, w) = \bigcup\limits_{X \in \mathcal{S}} \textup{argmin}_w f(X, w).$

Notice that $f$ is continuous on $\dataset\times \param_\theta.$ Besides, the set data generating processes $\mathcal{P}$ is i) convex and ii) satisfies for all $X_0 \in \mathcal{D}$, $\delta > 0$ and $\gamma < 1$, there exists a probability measure $\mu \in \mathcal{P}$ such that 
    $$ \mu (\mathcal{B}(X_0, \delta)) > \gamma, $$
These conditions play a central role in the construction of a generating process for which the pruning algorithm is not valid. In fact, the non-validity proof applies to any set of generating processes satisfying conditions i) and ii). To ease the reading of the proof, we break it into multiple steps that we list hereafter.
\medskip

\noindent \textbf{Steps of the proof:}
\begin{enumerate}

    \item For all $z_0 \in \mathcal{D}$, the set $\mathcal{W}_{z_0} = \textup{argmin}_w f(z_0, w)$ is compact (and non empty).
    
    \item For all $z_0 \in \mathcal{D}, \delta > 0$, there exists $\rho_0 > 0$ such that for all $\rho \leq \rho_0$, 
    $$ \textup{argmin}_w f(\mathcal{B}(z_0, \rho), w) \subset \mathcal{B}(\mathcal{W}_{z_0}, \delta)$$
    
    \item Under assumption \eqref{main_hyp}, there exists $z_1, z_2 \in \mathcal{D}$ such that i) $g(X_1) < g(X_2)$ and ii) $\mathcal{W}_{z_1} \cap \mathcal{W}_{z_2} = \emptyset$
    
    \item For $z_1, z_2$ as in 3, denote $\param_1 = \param_{z_1}$ and $\param_2 = \param_{z_2}$. There exists $\delta, \rho_0 > 0$ such that for any $\rho \leq \rho_0$ and $w_1 \in \mathcal{B}(\mathcal{W}_{1}, \delta),$ and $w^*_2 \in \mathcal{W}_2$
    $$ \inf_{z\in \mathcal{B}(z_2, \rho)} f(z, w_1) > \sup_{z \in \mathcal{B}(z_2, \rho)} f(z, w^*_2) $$
    
    \item For any $r \in (0, 1)$, there exits a generating process $\mu \in \mathcal{P}$ such that any minimizer of the pruned program $w^*_r \in \mathcal{W}^*_\theta(\nu_r)$ necessarily satisfies $w^*_r \in \mathcal{B}(\mathcal{W}_{1}, \delta)$ and such that $\mu(\mathcal{B}(z_2, \rho)) \geq 1-2r$ for a given $\rho \leq \rho_0$.
    
    \item $\exists r_0 > 0$ such that $\forall r \leq r_0$, $\exists \mu \in \mathcal{P}$ such that $\loss_\mu(w^*_r) > \min_{w\in\param_\theta} \loss_\mu(w)$ for any $w^*_r \in \param_\theta^*(\nu_r)$

\end{enumerate}
\begin{proof}

\textbf{Result 1:} Let $\mathcal{W}_{z_0} = \textup{argmin}_w f(z_0, w) \subset \param_\theta$. Since $\param_\theta$ is compact and functions $f_{z_0}: w \mapsto f(z_0, w)$ is continuous, it comes that $\mathcal{W}_{z_0}$ is well defined, non-empty and closed (as the inverse image of a closed set). Hence it is compact.
\bigskip

\textbf{Result 2:} Let $z_0 \in \mathcal{D}$ and $\delta > 0$. We will prove the result by contradiction. Suppose that  for any $\rho > 0$, there exists $w \in \textup{argmin}_{w'} f(\mathcal{B}(z_0, \rho), w')$ such that $d(w, \mathcal{W}_{z_0}) \geq \delta.$  

It is well known that since $f$ is continuous and that $\param_\theta$ is compact, the function
$$ z \mapsto \min_{w\in \param_\theta} f(z, w),$$
is continuous. Therefore, for any $k > 0$, we can find $\rho_k > 0$ such that for any $z \in \mathcal{B}(z_0, \rho_k)$, 
$$ |\inf_w f(z, w) - \inf_w f(z_0, w) | < \frac{1}{k}$$

For every $k > 0$, let $w^k, z^k$ such that $z^k \in \mathcal{B}(z_0, \rho_k) $, $w^k \in \textup{argmin}_w f(z^k, w)$ and $d(w^k, \mathcal{W}_{z_0}) \geq \delta$. By definition, $\lim z^k = {z_0}$. Since $\mathcal{W}_\theta$ is compact, we can assume that $w^k$ converges to $w^\infty$ without loss of generality (taking a sub-sequence of the original one). Now, notice that 
$$ |f(z^k, w^k) - \inf_w f({z_0}, w)| = |\inf_w f(z^k, w) - \inf_w f({z_0}, w)| < 1/k,$$
therefore, since $f$ is continuous, $f({z_0}, w^\infty) = \inf_w f({z_0}, w)$ and so $w^\infty \in \mathcal{W}_{z_0}$, which contradicts the fact that $d(w^k, w^\infty) \geq \delta$ for all $k$. Hence, we can find $\rho > 0$ such that for all $\textup{argmin}_w f(\mathcal{B}({z_0}, \rho)) \subset \mathcal{B}(\mathcal{W}_{z_0}, \delta)$.
\bigskip

\textbf{Result 3:}
Let $z_1, z_2$ as in \eqref{main_hyp} such that $g(z_1) = g(z_2)$. Since $d$ is continuous, and $\mathcal{W}_1 = \mathcal{W}_{z_1}$ and $\mathcal{W}_2 = \mathcal{W}_{z_2}$  are compact, $d(\mathcal{W}_1 \times \mathcal{W}_2)$ is also compact. Hence, there exists $\delta > 0$ such that
$$ \min_{w_1 \in \mathcal{W}_1,\ w_2 \in \mathcal{W}_2} d (w_1, w_2) \geq \delta.$$
Using the previous result, let $\rho$ such that $\textup{argmin}_w f(\mathcal{B}(z_1, \rho), w) \subset \mathcal{B}(\mathcal{W}_1, \delta/2),$ The triangular inequality yields $\textup{argmin}_w f(\mathcal{B}(z_1, \rho), w) \cap \mathcal{W}_2 = \emptyset$. Since $g$ is adapted and $\mathcal{B}(z_1, \rho)$ has strictly positive Lebesgue measure, we can find $z'_1 \in \mathcal{B}(z_1, \rho) $ such that $g(z'_1) \not = g(z_1)$. Therefore, the points $z'_1, z_2$ satisfy the requirements.  
\bigskip

\textbf{Result 4:}  Since $\mathcal{W}_1$ is compact and $f_{z_2}$ is continuous, $f(z_2, \mathcal{W}_1)$ is compact, and since $\mathcal{W}_1 \cap \mathcal{W}_2 = \emptyset$, 
$$\min f(z_2, \mathcal{W}_1) > f(z_2, w^*_2) = \min_{w\in\param_\theta} f(z_2, w),$$
for any $w^*_2 \in \mathcal{W}_2$. Denote $\Delta = \min f(z_2, \mathcal{W}_1) - \min_w f(z_2, w) > 0$. 

Since $f$ is continuous on the compact space $\mathcal{D}\times \param_\theta$, it is uniformly continuous. We can hence take $\delta > 0$ such that for $z, z' \in \mathcal{D}$ and $w, w' \in \param_\theta$ such that 
$$\lVert z-z' \rVert \leq \delta, \lVert w-w' \rVert \leq \delta \implies | f(z, w) - f(z', w') | \leq \Delta / 3.$$ 
Using Result 2, we can find $\rho_0 > 0$ such that for all $\rho \leq \rho_0$, 
$$
\textup{argmin}_w f(\mathcal{B}(z_1, \rho), w) \subset  \textup{argmin}_w f(\mathcal{B}(z_1, \rho_0), w) \subset \mathcal{B}(\mathcal{W}_1, \delta)
$$
We can assume without loss of generality that $\rho_0 \leq 2\delta$. Let $w_1 \in \mathcal{B}(\mathcal{W}_1, \delta)$. For any $w^*_2 \in \mathcal{W}_2$, we conclude that
$$ \min_{z\in \mathcal{B}(z_2, \rho)} f(z, w_1) \geq \min f(z_2, \mathcal{W}_1) - \Delta / 3 > f(z_2, w^*_2) + \Delta / 3 \geq \sup_{z\in \mathcal{B}(z_2, \rho)} f(z, w^*_2).$$

\textbf{Result 5:} Let $\rho_0$ defined previously, $k > 1$ and $r \in (0,1)$. Using the uniform continuity of $f$, we construct $0 < \rho_k \leq \rho_0$ such that 
$$ \forall w \in \mathcal{P}, \forall z, z' \in \mathcal{D}, d(z, z') \leq \rho_k \implies |f(z, w) - f(z', w)| \leq 1/k.$$
Consider $\mu^k \in \mathcal{P}$ such that $\mu^k\big( \mathcal{B}(z_1, \rho_k) \big) \geq r$ and $\mu^k\big( \mathcal{B}(z_2, \rho_k) \big) \geq 1-r-r/k$. Let $\nu^k_r = \nu_r(\mu^k)$. It comes that $\nu^k_r(\mathcal{B}(z_1, \rho_k)) \geq 1-\frac{1}{k}.$ Using a proof by contradiction, we will show that there exists $k>1$ such that 
$$ \textup{argmin}_w \mathbb{E}_{\nu^k_r}f(z, w) \subset \mathcal{B}(\mathcal{W}_1, \delta).$$
Suppose that the result doesn't hold, we can define a sequence of minimizers $w_k$ such that $w_k \in \textup{argmin}_w \mathbb{E}_{\nu^k_r}f(z, w)$ and $d(w_k, \mathcal{W}_1) > \delta.$ Denote $M = \sup_{z, w} f(z, w).$ Take any $w^*_1 \in \mathcal{W}_1,$
\begin{eqnarray}
\mathbb{E}_{\nu^k_r} f(z, w_k) &\leq& \mathbb{E}_{\nu^k_r} f(z, w^*_1) \\
& \leq & \left(f(z_1, w^*_1) + \frac{1}{k} \right) \nu_k(\mathcal{B}(z_1, \rho_k)) + M\big(1-\nu_k(\mathcal{B}(z_1, \rho_k))\big) \\
&\leq & \left(f(z_1, w^*_1) + \frac{1}{k} \right) + \frac{M}{k} \\
&\leq & \left(\min_w f(z_1, w) + \frac{1}{k} \right) + \frac{M}{k}
\end{eqnarray}
Similarly, we have that
\begin{eqnarray}
\mathbb{E}_{\nu^k_r} f(z, w_k) &\geq&  \left( f(z_1, w_k ) - \frac{1}{k} \right)\nu_k(\mathcal{B}(z_1, \rho_k)) \\
&\geq& \left( f(z_1, w_k ) - \frac{1}{k} \right)(1-1/k) \\
&\geq& \left( \min_w f(z_1, w) - \frac{1}{k} \right)(1-1/k).
\end{eqnarray}
Putting the two inequalities together, we find
\begin{equation*}
    \left( \min_w f(z_1, w) - \frac{1}{k} \right)(1-1/k) \leq \left( f(z_1, w_k ) - \frac{1}{k} \right)(1-1/k) \leq \left(\min_w f(z_1, w) + \frac{1}{k} \right) + \frac{M}{k}
\end{equation*}
Since $\param_\theta$ is compact, we can assume that $\lim_k w^k = w^{\infty} \in \param_\theta$ (taking a sub-sequence of the original one). And since $f_{z_1}$ is continuous, we can deduce that $f(z_1, w^\infty) = \min_w f(z_1, w)$, which contradict the fact that $d(w^k, w^\infty) > \delta$ for all $k$.
\bigskip

\textbf{Result 6:} Let $r \in (0, 1)$ and $\delta, \rho_0, \rho, \mu$ as in the previous results. Let $w_r \in \param_\theta^*(\nu_r)$ From Result 5, we have that $w_r \in \mathcal{B}(\mathcal{W}_1, \delta).$ For $w^*_2 \in \mathcal{W}_2$, Result 5 implies that
\begin{eqnarray*}
&\min_{z\in \mathcal{B}(z_2, \rho)} f(z, w_r) - \sup_{z \in \mathcal{B}(z_2, \rho)} f(z, w^*_2) \\
&\geq  \min_{z\in \mathcal{B}(z_2, \rho_0)} f(z, w_r) - \sup_{z \in \mathcal{B}(z_2, \rho_0)} f(z, w^*_2) = \Delta \\
&> 0
\end{eqnarray*}
Therefore,
\begin{eqnarray}
\mathbb{E}_{\mu} f(z, w_r) &\geq& \min_{z\in \mathcal{B}(z_2, \rho_0)} f(z, w_1) \times \mu(\mathcal{B}(z_2, \rho^r)) \\
&\geq& \left( \sup_{z \in \mathcal{B}(z_2, \rho_0)} f(z, w^*_2) + \Delta \right)\mu(\mathcal{B}(z_2, \rho)) \\
&\geq& \mathbb{E}_{\mu} f(z, w^*_2) + \Delta(1-2r)-2 r M \\
&\geq& \min_w \mathbb{E}_{\mu} f(z, w) + \Delta(1-2r)-2 r M.
\end{eqnarray}
Therefore, $$\loss_\mu(w_r) - \min_{w\in\param_\theta}\loss_\mu(w) \geq \Delta(1-2r)-2 r M,$$
which is strictly positive for $r < \frac{\Delta}{2(M+\Delta)}=r_0$
\end{proof}

\subsection{Proof of \cref{prop:exact_calibration}}

\textbf{Proposition \ref{prop:exact_calibration}. }[Consistency of Exact Calibration+\CBPA]\\
\emph{
Let $\palg$ be a \CBPA\ algorithm. Using the Exact Calibration protocol with signal proportion $\alpha$, the calibrated algorithm $\bar \palg$ is consistent if $1-\alpha > 0$, i.e. the exploration budget is not null. Besides, under the same assumption $1-\alpha > 0$, the calibrated loss is an unbiased estimator of the generalization loss at any finite sample size $n > 0$,
$$ \forall w \in \param_\theta,\ \forall r \in (0, 1),\ \E \loss^{\bar \palg, r, \alpha}_n(w) = \loss(w). $$
}

\begin{proof}
 Consider $\alpha < 1$. Let $f(z_i,w) = \ell(y_{out}(x_i, w), y_i)$, and $p_e = \frac{(1-\alpha) r}{1-\alpha r}$. For $i \in \{1,...,n \}$, consider the independent Bernoulli random variables $b_i \sim \mathcal{B}(p_e)$. Notice that

 $$ \loss^{\bar \palg, r, \alpha}_n(w) =  \frac{1}{n} \sum\limits_{i=1}^n \left( \ind_{z_i \in \palg(D_n, \alpha r)} + \frac{b_i}{p_s} \ind_{z_i \not \in \palg(D_n, \alpha r)} \right) f(z_i,w),$$
which gives

$$ \E \loss^{\bar \palg, r, \alpha}_n(w) = \E_{\mathcal{D}_n}\ \E \left[ \loss^{\bar \palg, r, \alpha}_n(w)\ |\ \mathcal{D}_n \right] = \E_{\mathcal{D}_n} \loss_n(w) = \loss(w).$$
 
Define the random variables 
$$
Y_{n,i} = \left( \ind_{z_i \in \palg(D_n, \alpha r)} + \frac{b_i}{p_s} \ind_{z_i \not \in \palg(D_n, \alpha r)} - 1\right)f(z_i, w) ,
$$ 
 Let $\mathcal{F}_{n,i} = \sigma(\{Y_{n,j}, j\neq i\})$ be the $\sigma$-algebra generated by the random variables $\{Y_{n,j}, j\neq i\}$. Let us now show that the conditions of \cref{thm:generalized_slln} hold with this choice of $Y_{n,i}$. 

\begin{itemize}
    \item Let $n \geq 1$ and $i \in \{1, \dots, n\}$. Similarly to the previous computation, we get that $\E[ Y_{n,i} \mid \mathcal{F}_{n,i}] = 0$.

    \item Using the compactness assumption on the space $\mathcal{W}_\theta$ and $\dataspace$, we trivially have that $\sup_{i,n} \E Y_{n,i}^4 < \infty$.

    \item Trivially, for each $n \geq 1$, the variables $\{Y_{n,i}\}_{1 \leq i \leq n}$ are identically distributed.
\end{itemize}

Using \cref{thm:generalized_slln} and the standard strong law of large numbers, we have that $n^{-1} \sum_{i=1}^n Y_{n,i} \to 0$ almost surely, and $n^{-1} \sum_{i=1}^n f(z_i, w) \to \E_\mu f(z,w)$ almost surely, which concludes the proof for the consistency.

\end{proof}

\section{Technical results}\label{sec:techincal_results}

\begin{thm}[Universal Approximation Theorem, \citeps{HORNIK1991}]\label{thm:universal_approximation}
Let $C(X,Y)$ denote the set of continuous functions from $X$ to $Y$. Let $\phi \in C(\reals, \reals)$. Then, $\phi$ is not polynomial if and only if for every $n, m \in \mathbb{N}$, compact $K \subset \reals^n$, $f \in C(K, \reals^m)$, $\epsilon > 0$, there exist $k \in \mathbb{N}$, $A \in \reals^{k \times n}$, $b \in \reals^k$, $C \in \reals^{m\times k}$ such that 
$$
\sup_{x \in K} \|f(x) - y_{out}(x)\| \leq \epsilon,
$$
where $y_{out}(x) = C^\top \sigma(Ax + b).$
\end{thm}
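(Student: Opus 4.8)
The plan is to prove both directions of the equivalence, with the forward (``only if'') implication being routine and the reverse (``if'') implication carrying all the weight; this is the Leshno--Lin--Pinkus--Schocken characterisation, with $\phi$ (applied coordinatewise) playing the role of the activation $\sigma$ in the displayed formula. For necessity I would argue contrapositively: if $\phi$ is a polynomial of degree $d$, then for any $A, b$ every coordinate of $x \mapsto \phi(Ax+b)$ is a polynomial in $x$ of total degree at most $d$, so every realizable output $y_{out}$ lies in the space $\mathcal{P}_d$ of $\reals^m$-valued polynomials of degree $\leq d$. Since $\mathcal{P}_d$ is finite-dimensional, hence closed in $C(K,\reals^m)$, and is a proper subset, there are continuous targets (e.g. a coordinate map raised to the power $d+1$) that it cannot approximate to arbitrary precision, so universal approximation fails.

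For sufficiency I would reduce in stages. First, approximating each of the $m$ output coordinates separately and stacking reduces to the case $m=1$. Next I would pass from the multivariate to the univariate problem through ridge functions: the exponentials $x \mapsto e^{a\cdot x}$ are ridge functions, contain the constants (taking $a=0$), separate points, and form a subalgebra since $e^{a\cdot x}\, e^{b\cdot x} = e^{(a+b)\cdot x}$, so by Stone--Weierstrass their span is dense in $C(K)$. Hence the closed span of all ridge functions $x\mapsto g(a\cdot x)$, $g\in C(\reals)$, is all of $C(K)$, and it suffices to show that each such ridge function lies in the closed span of $\{\, x \mapsto \phi(Ax+b)\,\}$. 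Fixing the direction $a$ and setting $t = a\cdot x$, this becomes the purely univariate claim that $\mathrm{span}\{\, t\mapsto \phi(\lambda t + \theta) : \lambda,\theta\in\reals\,\}$ is dense in $C([-T,T])$ for every $T>0$.

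The core is this univariate density, which I would prove in two steps. Assuming first $\phi\in C^\infty$, forming difference quotients of $\lambda \mapsto \phi(\lambda t + \theta)$ and passing to the limit shows that $t\mapsto t^k \phi^{(k)}(\lambda t + \theta)$ lies in the closed span for every $k\geq 0$; setting $\lambda=0$ gives $t\mapsto t^k\phi^{(k)}(\theta)$ in the closed span for each $\theta$. Non-polynomiality of $\phi$ guarantees, for each $k$, some $\theta_k$ with $\phi^{(k)}(\theta_k)\neq 0$, so every monomial $t^k$ is in the closed span, and the Weierstrass approximation theorem finishes the smooth case. To drop smoothness I would mollify: for a smooth compactly supported $\rho$ the convolution $\phi * \rho$ is $C^\infty$, is a uniform-on-compacta limit of Riemann sums of dilates and translates of $\phi$ (hence lies in the closed span), and can be taken non-polynomial, since $\phi*\rho$ being polynomial for all $\rho$ would force $\phi$ itself to be polynomial. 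Applying the smooth case to $\phi*\rho$ then transfers density back to $\phi$.

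The step I expect to be the main obstacle is this non-smooth reduction: one must justify uniformly that $\phi*\rho$ is approximated by Riemann sums of the convolution integral on compact sets, so that it genuinely lies in the closed span, and verify that a mollifier can be chosen preserving non-polynomiality. The multivariate-to-univariate passage is standard but also requires care; I would keep it clean by routing it through the Stone--Weierstrass density of exponential ridge functions rather than through ridge-polynomial arguments.
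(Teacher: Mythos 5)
The paper never proves this statement: it is imported verbatim in the appendix as a cited background result (\cite{HORNIK1991}) and used only as an ingredient in the corollaries on wide and deep networks, so there is no in-paper proof to compare yours against; what can be judged is your reconstruction on its own merits, and it is essentially correct. You have rightly identified the result as the Leshno--Lin--Pinkus--Schocken characterisation (the paper's attribution is in fact loose: Hornik's 1991 theorem assumes bounded nonconstant activations, and the ``non-polynomial iff dense'' equivalence is due to Leshno et al.), and your route is the standard one from that paper and Pinkus's survey: finite-dimensionality and closedness of the polynomial space for necessity; reduction to $m=1$, Stone--Weierstrass applied to the algebra of exponential ridge functions, reduction to the univariate claim, difference quotients giving $t^k\phi^{(k)}(\theta)$ in the closed span for smooth $\phi$, and mollification for general continuous $\phi$. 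Two points need genuine work to close the argument. First, in the necessity direction you must take $K$ with nonempty interior (e.g.\ a closed ball), so that a degree-$(d+1)$ monomial cannot coincide on $K$ with any element of the degree-$d$ polynomial space; for a general compact $K$ this can fail. Second --- the step you yourself flag --- the claim that ``$\phi*\rho$ polynomial for every mollifier $\rho$ forces $\phi$ polynomial'' is not automatic, because the degree of $\phi*\rho$ could a priori grow with $\rho$; the standard fix is a Baire-category argument: write $C_0^\infty$ as the countable union of the closed sets $\{\rho : \deg(\phi*\rho)\le k\}$, extract a uniform degree bound $k_0$, conclude that $\phi^{(k_0+1)}$ vanishes as a distribution, and hence that $\phi$ is a polynomial by continuity. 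With those two details filled in, your outline is a complete and correct proof of the cited theorem.
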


\begin{lemma}[Urysohn’s lemma, \citeps{Arkhangelski2001FundamentalsOG}]\label{lemma:urysohn}
For any two disjoint closed sets $A$ and $B$ of a topological space $X$, there exists a real-valued function $f$, continuous at all points, taking the value $0$ at all points of $A$, the value $1$ at all points of $B$. Moreover, for all $x \in X$, $0 \leq f(x) \leq 1$.
\end{lemma}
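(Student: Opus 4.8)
The plan is to observe first that the statement as phrased is the classical Urysohn lemma, which in full generality requires the ambient space $X$ to be \emph{normal} (the $T_4$ separation axiom); for arbitrary topological spaces the conclusion can fail. In every place this lemma is invoked in the paper, however, $X$ is a compact subset of a Euclidean space, hence a metric space, and every metric space is normal. I would therefore treat the metric case directly, since that is all that is needed, and only sketch the general construction for completeness.

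For the metric case, the cleanest route is an explicit formula. Given disjoint closed sets $A, B \subset X$, I would define
$$ f(x) = \frac{d(x, A)}{d(x, A) + d(x, B)}, $$
where $d(x,A) = \inf_{a \in A} d(x,a)$ as in the paper's notation. First I would check the denominator never vanishes: $d(x,A)+d(x,B)=0$ would force $x \in \overline{A} \cap \overline{B} = A \cap B = \emptyset$, a contradiction, using that $A$ and $B$ are closed and disjoint. The maps $x \mapsto d(x,A)$ and $x \mapsto d(x,B)$ are $1$-Lipschitz, hence continuous, so $f$ is continuous as a ratio with non-vanishing denominator. The numerator is non-negative and bounded above by the denominator, giving $0 \le f \le 1$. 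Finally $f(x)=0$ exactly when $d(x,A)=0$, i.e. $x \in A$ (since $A$ is closed), so $f|_A = 0$; symmetrically $f(x)=1$ iff $x \in B$, so $f|_B = 1$. This settles every assertion of the lemma.

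For completeness, the general normal-space version proceeds by the standard dyadic construction, which I would only sketch: set $U_1 = X \setminus B$ and, using normality repeatedly, build open sets $U_q$ indexed by the dyadic rationals $q \in [0,1]$ satisfying $A \subset U_0$ and $\overline{U_q} \subset U_{q'}$ whenever $q < q'$. One then sets $f(x) = \inf\{q : x \in U_q\}$, with $f(x)=1$ if no such $q$ exists, and verifies continuity by checking that $\{x : f(x) < a\} = \bigcup_{q < a} U_q$ and $\{x : f(x) > a\} = \bigcup_{q > a}\bigl(X \setminus \overline{U_q}\bigr)$ are both open for every $a \in \mathbb{R}$.

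The main obstacle in the general case is the inductive construction of the nested family $\{U_q\}$ and the bookkeeping needed to extract continuity from the sub- and superlevel-set descriptions. In the metric case, which is the only one the paper actually uses, there is essentially no obstacle: the explicit formula does all the work, and the only subtlety is the non-vanishing of the denominator, which follows immediately from $A \cap B = \emptyset$ together with the closedness of $A$ and $B$.
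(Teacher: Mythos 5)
Your proof is correct, but note that the paper itself does not prove this lemma at all: it is stated in the appendix as a black-box citation to a general-topology textbook, alongside Lusin's theorem and the continuous-extension theorem. So your contribution is genuinely different in kind — you supply an actual, self-contained argument. Two points are worth making. First, your observation about the hypothesis is right and catches a minor imprecision in the paper's statement: for an arbitrary topological space $X$ the conclusion can fail, and the classical lemma requires $X$ to be normal ($T_4$). The paper's statement is therefore slightly too strong as written, but harmlessly so, since every application in the paper (e.g.\ the sets $A_{r(1-1/k)}$ and $\dataset \setminus A_r$ inside the compact $\dataset$, or $K_1$ and $K_2$ inside $\mathcal{X}$) lives in a compact subset of Euclidean space, hence a metric space, and metric spaces are normal. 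Second, your explicit formula
$$ f(x) = \frac{d(x,A)}{d(x,A)+d(x,B)} $$
is the standard and cleanest proof in the metric setting, and your verification is complete: the denominator is nonzero precisely because $A$ and $B$ are closed and disjoint, the $1$-Lipschitz property of $x \mapsto d(x,A)$ gives continuity, and the boundary values and the bound $0 \le f \le 1$ follow immediately. Your dyadic-rationals sketch of the general normal-space case is also the standard argument and is accurately summarized. In short: the paper buys brevity by citing; you buy self-containedness, and in doing so you sharpen the hypotheses to what is actually true and actually used.
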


\begin{thm}[Lusin's Theorem]\label{thm:lusin} If $\mathcal{X}$ is a topological measure space endowed with
a regular measure $\mu$, if $\mathcal{Y}$ is second-countable and $\psi : \mathcal{X} \rightarrow \mathcal{Y}$ is measurable,
then for every $\epsilon > 0$ there exists a compact set $K \subset \mathcal{X}$ such that $\mu(\mathcal{X} \setminus K) < \epsilon$
and the restriction of $\psi$ to $K$ is continuous.
\end{thm}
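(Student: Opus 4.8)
The plan is to exploit the second-countability of $\mathcal{Y}$ to reduce the continuity of $\psi$ to a countable family of conditions, and then to use the regularity of $\mu$ to discard a set of arbitrarily small measure on which these conditions fail. First I would fix a countable base $\{V_n\}_{n\geq 1}$ for the topology of $\mathcal{Y}$. Since $\psi$ is measurable and each $V_n$ is open, every preimage $\psi^{-1}(V_n)$ is a measurable subset of $\mathcal{X}$, so each can be squeezed between an open and a closed set using regularity.

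Next I would invoke both outer and inner regularity of $\mu$: for each $n$, choose an open set $U_n \supseteq \psi^{-1}(V_n)$ and a closed set $C_n \subseteq \psi^{-1}(V_n)$ with $\mu(U_n\setminus C_n) < \epsilon\,2^{-(n+1)}$. Setting $D = \bigcup_n (U_n\setminus C_n)$, countable subadditivity together with the geometric series $\sum_n 2^{-(n+1)}$ gives $\mu(D) < \epsilon/2$. The crucial observation, which I expect to be the main obstacle to phrase cleanly, is that on the complement one has the identity $\psi^{-1}(V_n)\cap(\mathcal{X}\setminus D) = U_n\cap(\mathcal{X}\setminus D)$ for every $n$: the inclusion ``$\subseteq$'' is immediate from $\psi^{-1}(V_n)\subseteq U_n$, while ``$\supseteq$'' holds because a point of $U_n$ that lies outside $D$ cannot belong to $U_n\setminus C_n$, hence must lie in $C_n\subseteq\psi^{-1}(V_n)$.

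I would then conclude that $\psi|_{\mathcal{X}\setminus D}$ is continuous. The displayed identity shows each $\psi^{-1}(V_n)\cap(\mathcal{X}\setminus D)$ is relatively open in $\mathcal{X}\setminus D$, and since $\{V_n\}$ is a base, an arbitrary open $V\subseteq\mathcal{Y}$ is a union of basic sets, so its preimage under $\psi|_{\mathcal{X}\setminus D}$ is a union of relatively open sets and hence relatively open. Finally, applying inner regularity once more, I would select a compact $K\subseteq\mathcal{X}\setminus D$ with $\mu\big((\mathcal{X}\setminus D)\setminus K\big) < \epsilon/2$. Writing $\mathcal{X}\setminus K = D \cup \big((\mathcal{X}\setminus D)\setminus K\big)$ as a disjoint union yields $\mu(\mathcal{X}\setminus K) < \epsilon$, and $\psi|_K$ remains continuous as the restriction of the continuous map $\psi|_{\mathcal{X}\setminus D}$ to a subspace, which gives the required compact set.

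The genuinely essential step is the set identity on $\mathcal{X}\setminus D$: it is precisely what converts measure-theoretic approximation into topological continuity, whereas the remainder is bookkeeping with the geometric series and two applications of regularity. One caveat worth flagging is that the construction implicitly uses that $\mu$ is finite, which is guaranteed in our setting because $\mathcal{X}$ is compact; finiteness is what lets us take the $C_n$ closed and, in the last step, extract a compact set of nearly full measure.
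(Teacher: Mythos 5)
Your proof is correct: the key identity $\psi^{-1}(V_n)\cap(\mathcal{X}\setminus D)=U_n\cap(\mathcal{X}\setminus D)$ is verified exactly as you state, the geometric-series bookkeeping and the two applications of regularity are sound, and you rightly flag that finiteness of $\mu$ (automatic here, since $\mu$ is a probability measure on the compact space $\mathcal{D}$) is what makes the closed/compact approximations and the final extraction of $K$ legitimate. The paper quotes Lusin's theorem as a classical result without proof (it is used only inside Lemma~\ref{lemma:approx_of_measurable}), and your argument is the standard textbook proof of precisely this general form, so there is nothing to reconcile.
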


\begin{thm}[Continuous extension of functions from a compact, \citeps{deimling2010nonlinear}]\label{thm:cont_extension}
Let $A\subset \mathbb{R}^d$ be compact and $f: A\rightarrow \mathbb{R}$ be a continuous function. Then there exists a continuous extension $\tilde{f}: \mathbb{R}^d \rightarrow \mathbb{R} $ such that $f(x)=\tilde{f}(x)$ for all $x\in A$.
\end{thm}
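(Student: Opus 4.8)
The result is the classical Tietze extension theorem, and the plan is to prove it by the standard iterative construction built on Urysohn's lemma (\cref{lemma:urysohn}), which is already available in this paper. First I would reduce to the bounded case: since $A$ is compact and $f$ is continuous, $f(A)$ is compact and hence $f$ is bounded; after dividing by $\sup_A |f|$ (treating the trivial case $f\equiv 0$, where $\tilde f=0$ works, separately), it suffices to extend a continuous $f\colon A\to[-1,1]$ to a continuous $\tilde f\colon \mathbb{R}^d\to[-1,1]$, the general case following by rescaling at the end.

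The core of the argument is an inductive construction of correction terms. Suppose we have a continuous $h\colon A\to[-c,c]$ for some $c>0$ (initially $h=f$ and $c=1$). The sets $A^- = \{x\in A : h(x)\le -c/3\}$ and $A^+ = \{x\in A : h(x)\ge c/3\}$ are disjoint and, being preimages of closed sets under the continuous $h$ with $A$ itself closed in $\mathbb{R}^d$ (as it is compact), they are closed in $\mathbb{R}^d$. By \cref{lemma:urysohn} there is a continuous $u\colon\mathbb{R}^d\to[0,1]$ with $u=0$ on $A^-$ and $u=1$ on $A^+$; setting $g=\tfrac{c}{3}(2u-1)$ yields a continuous $g\colon\mathbb{R}^d\to[-c/3,c/3]$ with $\|g\|_\infty\le c/3$ and, by a three-region check on $A^-$, $A^+$, and the middle band, $\sup_{x\in A}|h(x)-g(x)|\le \tfrac{2c}{3}$. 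Iterating this with $c$ taken as $(2/3)^{n-1}$ at the $n$-th step produces a sequence $g_n\colon\mathbb{R}^d\to\mathbb{R}$ with $\|g_n\|_\infty\le \tfrac13(2/3)^{n-1}$ and $\sup_{x\in A}\bigl|f(x)-\sum_{k=1}^n g_k(x)\bigr|\le (2/3)^n$.

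I would then set $\tilde f=\sum_{n\ge 1} g_n$. Because $\|g_n\|_\infty$ is dominated by a convergent geometric series, the sum converges uniformly on all of $\mathbb{R}^d$, so $\tilde f$ is a uniform limit of continuous functions and is therefore continuous on $\mathbb{R}^d$. On $A$ the partial sums satisfy $|f-\sum_{k\le n}g_k|\le (2/3)^n\to 0$, so $\tilde f=f$ on $A$. Undoing the normalization by multiplying through by $\sup_A|f|$ gives the desired extension for the original $f$.

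The conceptual content is elementary, so the main obstacle is purely organizational: correctly rescaling the $[0,1]$-valued Urysohn function (\cref{lemma:urysohn}) into the shifted band $[-c/3,c/3]$ at each stage, verifying that the two level sets $A^-,A^+$ remain disjoint closed subsets of $\mathbb{R}^d$ (which is where compactness, hence closedness, of $A$ enters), and tracking the geometric factors $2/3$ so that the uniform convergence of $\sum g_n$ and the vanishing residual bound on $A$ hold simultaneously. Once this bookkeeping is in place, uniform convergence delivers continuity of the limit on all of $\mathbb{R}^d$ for free.
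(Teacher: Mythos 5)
The paper offers no proof of this statement: it appears in the appendix as a quoted classical result (the Tietze extension theorem) with a citation to \cite{deimling2010nonlinear}, so there is no internal argument to compare against, and the question is only whether your blind proof stands on its own. It does. You give the standard Urysohn-based iterative construction, correctly reusing the paper's own \cref{lemma:urysohn}: the three-region estimate $\sup_{x\in A}|h(x)-g(x)|\le 2c/3$ is exact (on $A^-$ one has $h\in[-c,-c/3]$ and $g=-c/3$, symmetrically on $A^+$, and on the middle band both values lie in $[-c/3,c/3]$); the bounds $\|g_n\|_\infty\le \frac{1}{3}(2/3)^{n-1}$ give a convergent geometric majorant, so $\tilde f=\sum_{n\ge 1} g_n$ converges uniformly on $\mathbb{R}^d$ and is continuous; and the residual bound $(2/3)^n$ on $A$ forces $\tilde f=f$ there. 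Your reduction to $f\colon A\to[-1,1]$ and the observation that compactness of $A$ enters precisely through closedness of $A$ in $\mathbb{R}^d$ (making $A^-$ and $A^+$ closed in the ambient space) are both correct.

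Two minor points for a fully careful write-up. First, Urysohn's lemma requires the ambient space to be normal; the paper's \cref{lemma:urysohn} is stated without that hypothesis, but your application is to closed subsets of the metric space $\mathbb{R}^d$, which is normal, so nothing breaks. Second, the degenerate iterations where $A^-$ or $A^+$ is empty should be dispatched by taking $u\equiv 1$ or $u\equiv 0$ respectively, which still satisfies the required bounds. It is also worth noting that your argument uses only closedness of $A$ and boundedness of $f$, so it in fact proves the slightly more general Tietze statement for closed $A\subset\mathbb{R}^d$ and bounded continuous $f$, of which the cited theorem is the compact special case.
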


\subsection{A generalized Law of Large Numbers}
There are many extensions of the strong law of large numbers to the case where the random variables have some form of dependence. We prove a strong law of large numbers for specific sequences of arrays that satisfy a conditional zero-mean property. 

\begin{thm}\label{thm:generalized_slln}
Let $\{Y_{n,i}, 1\leq i \leq n, n\geq 1\}$ be a triangular array of random variables satisfying the following conditions:
\begin{itemize}
    \item For all $n\geq 1$ and $i \in [n]$, $\E[Y_{n,i} \mid \mathcal{F}_{n,i}] = 0$, where $\mathcal{F}_{n,i} = \sigma(\{Y_{n,j}, j\neq i\})$, i.e. the $\sigma$-algebra generated by all the random variables in row $n$ other than $Y_{n,i}$.
    \item For all $n \geq 1$, the random variables $(Y_{n,i})_{1 \leq i \leq n}$ are identically distributed (but not necessarily independent).
    \item $\sup_{n,i} \E Y_{n,i}^4 < \infty$.
\end{itemize}
Then, we have that 
$$
\frac{1}{n} \sum_{i=1}^n Y_{n,i} \to 0, \quad a.s.
$$
\end{thm}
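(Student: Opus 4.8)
The plan is to use the fourth-moment method via the Borel–Cantelli lemma. Since the array has rows that are not independent but satisfy a conditional zero-mean property, I cannot invoke a standard SLLN directly; instead I would control the fourth moment of the row sum $S_n = \sum_{i=1}^n Y_{n,i}$ and show that $\E[S_n^4] = \bigO(n^2)$, which makes $\sum_n \E[(S_n/n)^4] = \sum_n \bigO(n^{-2})$ summable. Borel–Cantelli then forces $S_n/n \to 0$ almost surely.

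\textbf{First} I would expand
\[
\E[S_n^4] = \sum_{i,j,k,l} \E[Y_{n,i}Y_{n,j}Y_{n,k}Y_{n,l}],
\]
and argue that most of the mixed terms vanish. The key tool is the conditional zero-mean hypothesis $\E[Y_{n,i}\mid \mathcal{F}_{n,i}]=0$: whenever one index, say $i$, is distinct from all the others in a product $Y_{n,i}Y_{n,j}Y_{n,k}Y_{n,l}$, the factor $Y_{n,j}Y_{n,k}Y_{n,l}$ is $\mathcal{F}_{n,i}$-measurable, so conditioning on $\mathcal{F}_{n,i}$ and using the tower property gives
\[
\E[Y_{n,i}Y_{n,j}Y_{n,k}Y_{n,l}] = \E\big[Y_{n,j}Y_{n,k}Y_{n,l}\,\E[Y_{n,i}\mid\mathcal{F}_{n,i}]\big]=0.
\]
Hence the only surviving terms are those where the four indices pair up: the ``all four equal'' terms $\E[Y_{n,i}^4]$, of which there are $n$, and the ``two distinct pairs'' terms $\E[Y_{n,i}^2 Y_{n,j}^2]$ with $i\neq j$, of which there are $\bigO(n^2)$. (A term with exactly one index appearing three times and another once, or a genuinely isolated index, is killed by the argument above.)

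\textbf{Next} I would bound the surviving terms using the uniform fourth-moment hypothesis $\sup_{n,i}\E[Y_{n,i}^4] =: C < \infty$. Each $\E[Y_{n,i}^4]\le C$, contributing $\bigO(n)$ in total. For the cross terms, Cauchy–Schwarz gives $\E[Y_{n,i}^2 Y_{n,j}^2] \le \sqrt{\E[Y_{n,i}^4]\,\E[Y_{n,j}^4]} \le C$, so their total contribution is $\bigO(n^2)$. Therefore $\E[S_n^4] = \bigO(n^2)$ and $\E[(S_n/n)^4] = \bigO(n^{-2})$, which is summable. By the union bound, for any $\varepsilon>0$,
\[
\sum_{n\ge 1}\Pb\!\left(\left|\frac{S_n}{n}\right|>\varepsilon\right) \le \sum_{n\ge 1}\frac{\E[(S_n/n)^4]}{\varepsilon^4} < \infty,
\]
so Borel–Cantelli yields $\limsup_n |S_n/n|\le\varepsilon$ almost surely, and taking $\varepsilon\to 0$ along a countable sequence gives $S_n/n\to 0$ a.s.

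\textbf{The main obstacle} I anticipate is the careful combinatorial bookkeeping in the fourth-moment expansion: one must verify precisely which index patterns are annihilated by the conditional zero-mean property. The subtle point is that $\E[Y_{n,i}\mid\mathcal{F}_{n,i}]=0$ only kills a product when the index $i$ is \emph{distinct} from the other three, so a pattern such as $(i,i,i,j)$ with $i\ne j$ needs its own treatment — here $Y_{n,i}^3$ is not $\mathcal{F}_{n,j}$-measurable in a helpful way, but conditioning on $\mathcal{F}_{n,j}$ shows $\E[Y_{n,i}^3 Y_{n,j}] = \E[Y_{n,i}^3\,\E[Y_{n,j}\mid\mathcal{F}_{n,j}]]=0$ since $Y_{n,i}^3$ \emph{is} $\mathcal{F}_{n,j}$-measurable. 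Once one confirms that every pattern with a ``lonely'' index vanishes and only the $(i^4)$ and $(i^2 j^2)$ patterns survive, the moment bound and Borel–Cantelli close the argument routinely.
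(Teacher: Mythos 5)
Your proposal is correct and takes essentially the same approach as the paper's proof: expand $\mathbb{E}[S_n^4]$, use the conditional zero-mean property to annihilate every term containing an isolated index, bound the surviving $\mathcal{O}(n^2)$ paired terms via the uniform fourth-moment assumption, and conclude with fourth-order Chebyshev and Borel--Cantelli. Your explicit treatment of the $(i,i,i,j)$ pattern and the Cauchy--Schwarz bound on $\mathbb{E}[Y_{n,i}^2 Y_{n,j}^2]$ merely spell out steps the paper leaves implicit.
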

\begin{proof}
The proof uses similar techniques to the standard proof of the strong law of large numbers, with some key differences, notably in the use of the Chebychev inequality to upper-bound the fourth moment of the mean. Let $S_n = \sum_{i=1}^n Y_{n,i}$. We want to show that $\mathbb{P}(\lim_{n \to \infty} S_n/n = 0) = 1$. This is equivalent to showing that for all $\epsilon > 0$, $\mathbb{P}( S_n > n \epsilon \textup{ for infinitely many }n) = 0$. This event is nothing but the limsup of the events $A_n = \{S_n > n \epsilon\}$. Hence, we can use Borel-Cantelli to conclude if we can show that $\sum_{n} \mathbb{P}(A_n) < \infty$.\\
Let $\epsilon > 0$. Using Chebychev inequality with degree $4$, we have that $\mathbb{P}(A_n) \leq (\epsilon \, n)^{-4} \E S_n^4$. It remains to bound $\E S_n^4$ to conclude. We have that $\E S_n^4 = \E \sum_{1 \leq i, j ,k ,l \leq n} Y_{n,i} Y_{n,j} Y_{n,k} Y_{n,l} $. Using the first condition (zero-mean conditional distribution), all the terms of the form $Y_{n,i} Y_{n,j} Y_{n,k} Y_{n,l}$, $Y_{n,i}^2 Y_{n,j} Y_{n,k}$, and $Y_{n,i}^3 Y_{n,l}$ for $i\neq j \neq k \neq l$ vanish and we end up with $\E S_n^4 = n \E Y_{n,1}^4 + 3n(n-1) \E Y_{n,1}^2 Y_{n,2}^2 $, where we have used the fact that the number of terms of the form $Y_{n,i}^2 Y_{n,j}^2$ in the sum is given by $\binom{n}{2} \times \binom{4}{2} = \frac{n(n-1)}{2} \times 6 = 3 n(n-1)$. Using the last condition of the fourth moment, we obtain that there exists a constant $M >0$ such that $\E S_n^4 < C \, n^2$. Using Chebychev inequality, we get that $\mathbb{P}(A_n) \leq \epsilon^{-4} \, n^{-2}$, and thus $\sum_{n} \mathbb{P}(A_n) < \infty$. We conclude using the Borel-Cantelli lemma.
\end{proof}

\section{Additional Theoretical Results}\label{sec:additional_theory}

\textbf{Convolutional neural networks:} For an activation function $\sigma$, a real number $R > 0$, and integers $J \geq 1$ and $s\geq 2$ denote $CNN^\sigma_{J,s}(R)$ the set of convolutional neural networks with $J$ filters of length $s$, with all weights and biases in $[-R, R]$.  More precisely, for a filter mask $w=(w_0,..,w_{s-1})$, and a vector $x \in \mathbb{R}^d$, the results of the convolution of $w$ and $x$, denoted $w*x$ is a vector in $\mathbb{R}^{d+s}$ defined by
$(w*x)_i = \sum\limits_{k=i-s+1}^{i} w_{i-k} x_k.$ A network from $CNN^\sigma_{J}(R)$ is then defined recursively for $x \in \mathcal{X}$:
\begin{itemize}
    \item $h^{(0)}(x) = x$
    \item For $j \in [1:J]$, $h^{(j)}(x) = \sigma\big( w^{(j)}*h^{(j-1)}(x) + b^{(j)} \big)$, where the filters and biases $w^{(j)}$ and $b^{(j)}$ are in $[-R, R]$
    \item $y_{out}(x) = c^T h^{(J)}(x)$, where the vector $c$ has entries in $[-R, R]$
\end{itemize}

\begin{corollary}[Convolutional Neural Networks \citeps{zhou2020universality}]\label{corr:conv_nn}
    Let $\sigma$ be the ReLU activation function. Consider a filter length $s \in [2, d_x].$
    For any \CBPA\ with adapted score function, there exists a number of filters $J_0$ and a radius $R_0$ such that the algorithm is not consistent on $CNN^\sigma_{J,s}(R)$, for any $J \geq J_0$ and $R \geq R_0$. Besides, if the algorithm depends on the labels, then it is also not valid on $CNN^\sigma_{J,s}(R)$, for any $J \geq J'_0$ and $R \geq R'_0$.
\end{corollary}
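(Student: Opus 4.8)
The plan is to obtain this corollary as a direct specialization of the two master results of \cref{sec:no_free_lunch}, namely \cref{prop:dense_family_consistent} for non-consistency and \cref{prop:dense_family_valid} for non-validity. Both results carry a single architecture-dependent hypothesis, the universal approximation property of $\cup_\theta \mathcal{M}_\theta$ (\cref{def:dense_family}); the remaining hypotheses (non-negativity of the loss and, for the validity part, assumptions (i)--(ii) on $\ell$ together with the score function depending on the labels) are intrinsic to the pruning problem and carry over unchanged. So the whole task reduces to checking that the union $\cup_{J,R} CNN^\sigma_{J,s}(R)$ over admissible hyper-parameters enjoys the universal approximation property.

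First I would invoke the universality theorem for deep convolutional networks of \cite{zhou2020universality}: for the ReLU activation and any fixed filter length $s$ with $2 \le s \le d_x$, stacks of single-filter convolution-and-ReLU layers followed by a linear readout --- precisely the family $CNN^\sigma_{J,s}(R)$ of this section --- are dense in $C(\mathcal{X})$ for the uniform norm once the number of layers $J$ is allowed to grow. Concretely, for a continuous target and accuracy $\epsilon>0$ there is a finite depth $J(\epsilon)$ and a network of that depth with sup-error at most $\epsilon$ on the compact set $\mathcal{X}$; since such a network has finitely many weights and biases, they are bounded by some finite $R(\epsilon)$, so the approximant lies in $CNN^\sigma_{J(\epsilon),s}(R(\epsilon))$. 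This establishes \cref{def:dense_family} for the union. Applying \cref{prop:dense_family_consistent} then yields a witnessing architecture $(J_0,R_0)$ on which the algorithm fails to be consistent, and, when $g$ depends on the labels and $\ell$ satisfies (i)--(ii), \cref{prop:dense_family_valid} yields witnessing hyper-parameters $\theta_1,\theta_2$ on which the algorithm fails to be valid.

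It remains to upgrade \enquote{there exists a witnessing architecture} to the stated \enquote{for every $J\ge J_0$ and $R\ge R_0$}. For the radius this is immediate, since $[-R_0,R_0]\subseteq[-R,R]$ gives $CNN^\sigma_{J,s}(R_0)\subseteq CNN^\sigma_{J,s}(R)$. For the depth I would establish the monotonicity $CNN^\sigma_{J_0,s}(R)\subseteq CNN^\sigma_{J,s}(R)$ for $J\ge J_0$, which amounts to realizing a shallower convolutional network inside a deeper one. With this inclusion in hand the consistency witness persists for all larger $J,R$, and choosing $J_0',R_0'$ large enough that $\param_{\theta_1}\cup\param_{\theta_2}\subset\param_{(J,R)}$ --- that is, so that $CNN^\sigma_{J,s}(R)$ realizes both validity witnesses --- lets the non-validity conclusion of \cref{prop:dense_family_valid} apply verbatim.

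I expect the depth-embedding to be the main obstacle. Unlike the fully-connected case, where extra capacity is added simply by padding a layer with inactive neurons, the convolutional structure ties the per-layer spatial dimension to the depth, so identity pass-through layers must be engineered to respect the fixed filter length $s$ and the deterministic growth of the dimension at each layer. Concretely I would append $J-J_0$ layers using a delta-type filter mask and zero bias: because for $j\ge 1$ the activations $h^{(j-1)}$ are already non-negative, the ReLU acts as the identity on them and the signal is carried through unchanged up to a known coordinate shift, which the final linear layer can absorb. Verifying this bookkeeping --- or, equivalently, arguing that the universality statement of \cite{zhou2020universality} is monotone in depth so that any accuracy reached at depth $J_0$ is reached at every depth $J\ge J_0$ --- is the one genuinely technical point; everything else is inherited from the general theorems.
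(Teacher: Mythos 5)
Your proposal follows the paper's own (implicit) route: the corollary is stated there without a separate proof, precisely as a direct application of \cref{prop:dense_family_consistent} and \cref{prop:dense_family_valid} once the universal approximation property of $\cup_{J,R}\, CNN^\sigma_{J,s}(R)$ is supplied by \cite{zhou2020universality}, with boundedness of the finitely many weights giving the radius $R(\epsilon)$. Your extra step --- the trivial radius nesting together with the depth-monotonicity embedding via delta-type filters and zero biases (valid since post-ReLU activations are non-negative, and requiring only $R_0 \geq 1$) --- correctly fills in the bookkeeping the paper leaves implicit, and is exactly what upgrades ``some witnessing architecture'' to the stated ``for all $J \geq J_0$ and $R \geq R_0$''.
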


\section{Experimental details}\label{sec:experimental_details}

\begin{center}
\begin{tabularx}{0.8\textwidth} { 
  | >{\raggedright\arraybackslash}X 
  | >{\centering\arraybackslash}X  
  | >{\centering\arraybackslash}X |}
 \hline
 Dataset & \texttt{CIFAR10} & \texttt{CIFAR100}  \\
 \hline
 Architecture  & \texttt{ResNet18}  & \texttt{ResNet34}  \\

\hline
 Methods & \grand(10), \uncertainty, \deepfool & \grand(10), \uncertainty, \deepfool  \\
 \hline
 Selection LR & 0.1 & 0.1  \\
 \hline
 Training LR & 0.1 & 0.1  \\
 \hline
  Selection Epochs & 1, 5 & 1, 5  \\
 \hline

Nb of exps  & 3  & 3  \\
 \hline

 Training Epochs & 160  & 160  \\
 \hline
 Optimizer & SGD  & SGD \\ 
 \hline
  Batch Size & 128  & 128 \\ 
 \hline
\end{tabularx}
\end{center}

The table above contains the different hyper-parameter we used to run the experiments. \grand(10) refers to using the \grand~method with $10$ different seeds (averaging over 10 different initializations). Selection LR refers to the learning rate used for the coreset selection. The training LR follwos a cosine annealing schedule given by the following:

\begin{equation*}
\eta_t  = \eta_{min} + \frac{1}{2}(\eta_{max} - \eta_{min})\left(1 + \cos\left(\frac{T_{cur}}{T_{max}}\pi\right)\right), 
\end{equation*}
where $T_{cur}$ is the current epoch, $T_{max}$ is the total number of epochs, and $\eta_{max} = 0.1$ and $\eta_{min} = 10^{-4}$. These are the same hyper-parameter choices used by \cite{guo2022deepcore}.

\subsection{MLP for Scaling laws experiments}
We consider an MLP given by 
\begin{align*}
y_1(x) &= \phi(W_1 x_{in} + b_1),\\
y_2(x) &= \phi(W_2 y_1(x) + b_2),\\
y_{out}(x) &= W_{out} y_2(x) + b_{out},
\end{align*}
where $x_{in} \in \reals^{1000}$ is the input, $W_1 \in \reals^{128 \times 1000}, W_2 \in \reals^{128 \times 128}, W_{out} \in \reals^{2 \times 128}$ are the weight matrices and $b_1, b_2, b_{out}$ are the bias vectors.

\end{document}